\newcommand{\nextnr}{\stepcounter{AlgoLine}\ShowLn}
    \newif\ificml
\providecommand{\etil}{\tilde{e}}
\providecommand{\gtil}{\tilde{g}}
\providecommand{\wtil}{\tilde{w}}
\providecommand{\Gtil}{\tilde{G}}
\providecommand{\Otil}{\tilde{O}}
\providecommand{\Ptil}{\tilde{P}}
\providecommand{\Hcal}{\mathcal{H}}
\providecommand{\Ncal}{\mathcal{N}}
\providecommand{\Xcal}{{X}}
\renewcommand{\vec}[1]{\mathbf{\boldsymbol{#1}}}
\providecommand{\uvec}{\vec{u}}
\providecommand{\vvec}{\vec{v}}
\providecommand{\xvec}{\vec{x}}
\providecommand{\yvec}{\vec{y}}
\providecommand{\BlackBox}{\rule{1.5ex}{1.5ex}}  
\def\QED{~\rule[-1pt]{5pt}{5pt}\par\medskip}
\newenvironment{proof}{\par\noindent{\em Proof:\ }}{\hfill\BlackBox\\[.0mm]}
\newtheorem{theorem}{Theorem}
\newtheorem{remark}{Remark}
\newtheorem{corollary}{Corollary}
\newtheorem{assumption}{Assumption}
\providecommand{\EE}{\mathds{E}} 
\providecommand{\NN}{\mathds{N}} 
\providecommand{\RR}{\mathds{R}} 
\providecommand{\ZZ}{\mathds{Z}} 
\providecommand{\diag}{\mathop{\mathrm{diag}}}
\providecommand{\rank}{\mathop{\mathrm{rank}}}
\providecommand{\mychoose}[2]{\left(\begin{array}{c} #1 \\ #2 \end{array}\right)}
\providecommand{\smallfrac}[2]{{\textstyle \frac{#1}{#2}}}
\providecommand{\eg}{{e.g.}\xspace}
\providecommand{\ie}{{i.e.}\xspace}
\providecommand{\iid}{{i.i.d.}\xspace}
\providecommand{\nbr}[1]{\left\|#1\right\|}
\providecommand{\abr}[1]{\left|#1\right|}
\providecommand{\floor}[1]{\left\lfloor #1 \right\rfloor}
\providecommand{\ceil}[1]{\left\lceil #1 \right\rceil}
\providecommand{\inner}[2]{\left\langle #1,#2 \right\rangle}
\providecommand{\zero}{\mathbf{0}} 
\providecommand*\rmd{\mathop{}\!\mathrm{d}}
\providecommand{\val}{\vec{\alpha}}
\providecommand{\vbeta}{\vec{\beta}}
\providecommand{\vga}{\vec{\gamma}}
\providecommand{\vth}{\vec{\theta}}
\providecommand{\expunder}[1]{\mathop{\EE}\limits_{#1}}
\providecommand{\where}{{\quad \text{where} \quad}}
\providecommand{\parrelu}{{\sf Par-ReLU}}
\providecommand{\parmaxmin}{{\sf Par-MaxMin}}
\providecommand{\glip}{{\sf Gauss-Lip}}
\providecommand{\ilip}{{\sf Inverse-Lip}}
\let\gls\emph
\let\cites\citep
\let\cite\citep
\let\textcite\citet
\let\d\dv
\def\brk#1{\sbr{#1}}                            
\NewDocumentNotation{\measf}{\mathcal{L}_0}     
\NewDocumentNotation{\leb}{\mathcal{L}}         
\NewDocumentNotation{\probm}{\mathfrak{P}}      
\NewDocumentNotation{\costb}{\mathrm{B}}        
\NewDocumentNotation{\ball}{\mathrm{B}}         
\NewDocumentNotation{\cost}{cost}               
\NewDocumentNotation{\lips}{lip}                
\NewDocumentNotation{\coup}{\upPi}              
\NewDocumentNotation{\conv}{co}                 
\NewDocumentNotation{\clconv}{\overline{\conv}} 
\NewDocumentNotation{\cvxity}{\uprho}           
\NewDocumentNotation{\ind}{\upiota}             
\NewDocumentNotation{\dirac}{\updelta}          
\NewDocumentNotation{\id}{Id}                   
\NewDocumentNotation{\contf}{C}                 
\NewDocumentNotation{\contfb}{C_\mathrm{b}}     
\NewDocumentCommand{\subdiff}{}{\uppartial}     
\NewDocumentCommand{\esubdiff}{E{_}{\epsilon}}{\uppartial_{#1}} 
\NewDocumentCommand{\wtop}{m}{\upsigma\rbr{#1}} 
\def\thetitle{Generalised Lipschitz Regularisation Equals Distributional Robustness}
    \icmltitlerunning{\thetitle}
\begin{document}
\ificml
    \twocolumn[
        \icmltitle{\thetitle}



        \icmlsetsymbol{equal}{*}

        \begin{icmlauthorlist}
        \icmlauthor{Zac Cranko}{anu,d61}
        \icmlauthor{Zhan Shi}{uic}
        \icmlauthor{Xinhua Zhang}{uic}
        \icmlauthor{Richard Nock}{d61,anu,usyd}
        \icmlauthor{Simon Kornblith}{goog}
        \end{icmlauthorlist}

        \icmlaffiliation{anu}{Australian National University}
        \icmlaffiliation{d61}{Data 61, CSIRO}
        \icmlaffiliation{usyd}{University of Sydney}
        \icmlaffiliation{uic}{University of Illinios, Chicago}
        \icmlaffiliation{goog}{Google Brain}

        \icmlcorrespondingauthor{Zac Cranko}{zac.cranko@anu.edu.au}

        \icmlkeywords{Machine Learning, ICML}

        \vskip 0.3in
        ]
\else
    \title{\bf\thetitle}
    \renewcommand*{\Authsep}{\quad}
    \renewcommand*{\Authand}{\quad}
    \renewcommand*{\Authands}{\quad}
    \author[1,2]{Zac Cranko}
    \author[4]{Zhan Shi}
    \author[4]{Xinhua Zhang}
    \author[2,1,3]{Richard Nock}
    \author[5]{Simon Kornblith}

    \affil[1]{Australian National University}
    \affil[2]{Data 61, CSIRO}
    \affil[3]{University of Sydney}
    \affil[4]{University of Illinios, Chicago}
    \affil[5]{Google Brain}
    \maketitle
\fi



\ificml
    \printAffiliationsAndNotice 
\fi

\begin{abstract}
    The problem of adversarial examples has highlighted the need for a theory of regularisation that is general enough to apply to exotic function classes, such as universal approximators. In response, we give a very general equality result regarding the relationship between distributional robustness and regularisation, as defined with a transportation cost uncertainty set. The theory allows us to (tightly) certify the robustness properties of a Lipschitz-regularised model with very mild assumptions. As a theoretical application we show a new result explicating the connection between adversarial learning and distributional robustness. We then give new results for how to achieve Lipschitz regularisation of kernel classifiers, which are demonstrated experimentally.
\end{abstract}

\section{Introduction}
\label{submission}
When learning a statistical model, it is rare that one has complete access to the distribution. More often it is the case that one approximates the risk minimisation by an empirical risk, using sequence of samples from the distribution. In practice this can be problematic --- particularly when the curse of dimensionality is in full force --- to \begin{enumerate*}[label=\textit{\alph*})]
    \item know with certainty that one has enough samples, and
    \item guarantee good performance away from the data. 
\end{enumerate*}
Both of these two problems can, in effect, be cast as problems of ensuring generalisation. A remedy for both of these problems has been proposed in the form of a modification to the risk minimisation framework, wherein we integrate a certain amount of distrust of the distribution. This distrust results in a certification of worst case performance if it turns out later that the distribution was specified imprecisely, improving generalisation.

In order to make this notion of distrust concrete, we introduce some mathematical notation. 
The set of Borel probability measures on an outcome space $\Omega$ is $\probm(\Omega)$. A loss function is a mapping $f:\Omega\to\Rx$ so that $f(\omega)$ is the loss incurred with some prediction under the outcome $\omega\in\Omega$. For example, if $\Omega=X\times Y$ then $f_v(x,y) = (v(x) -y)^2$ could be a loss function for regression or classification with some classifier $v:X\to Y$. For a  distribution $\mu\in\probm(\Omega)$ we replace the objective in the classical risk minimisation $\min_{v}\E_\mu[f_v]$ with the \gls{robust Bayes risk}:
\begin{gather}
   \sup_{\nu\in \costb_c(\mu,r)} \E_\nu[f] \tag{rB}\label{eq:robust_bayes_minimisation}
\end{gather}\noeqref{eq:robust_bayes_minimisation}%
where $\costb_c(\mu,r)\subseteq\probm(\Omega)$ is a set containing $\mu$, called the \gls{uncertainty set} \citetext{\citealp[viz.][]{Berger_StatisticalDecision_1993,Vidakovic_GMinimaxParadigm_2000}, \citealp[\S4]{Grunwald_GameTheory_2004}}. It is in this way that we introduce distrust into the classical risk minimisation, by instead minimising the worst case risk over a set of distributions. 

It is sometimes the case that for an uncertainty set,  $\costb_c(\mu,r)\subseteq\probm(\Omega)$, there is a function, $r\lips_c:\Rx^{\Omega}\to\Rx$ (not necessarily the usual Lipschitz constant), so that 
\ificml
\begin{gather}
    \boxed{\sup_{\nu\in \costb_c(\mu,r)} \E_\nu[f]=\vphantom{\int}\E_\mu[f]  + r\lips_c(f).}  \tag{L}\label{eq:lip_objective}
\end{gather}
\else
\begin{gather}
    \sup_{\nu\in \costb_c(\mu,r)} \E_\nu[f]=\vphantom{\int}\E_\mu[f]  + r\lips_c(f).  \tag{L}\label{eq:lip_objective}
\end{gather}
\fi

There are two reasons we are interested in finding a relationship of the form \eqref{eq:lip_objective}. Firstly, there has been independent interest in the regularised risk, particularly when $\lips_c$ corresponds to the Lipschitz constant of $f$. The applications for Lipschitz regularisation are as disparate as generative adversarial networks \citep{Miyato_SpectralNormalization_2018,Arjovsky_WassersteinGenerative_2017}, generalisation \citep{Yoshida_SpectralNorm_2017,Farnia_GeneralizableAdversarial_2019,Gouk_RegularisationNeural_2018}, and adversarial learning \citep{Cisse_ParsevalNetworks_2017,Anil_SortingOut_2019,Cranko_MongeBlunts_2019,Tsuzuku_LipschitzmarginTraining_2018} among others \cites{Gouk_MaxGainRegularisation_2019,Scaman_LipschitzRegularity_2018}. Secondly, building a model that is robust to a particular uncertainty set is very intuitive and tractable. However, the left hand side of \eqref{eq:lip_objective} involves an optimisation over a subset of an infinite dimensional space.\footnote{Except for when the uncertainty set is chosen in a particularly trivial way.} By comparison, the Lipschitz regularised risk is often much easier to work with in practice. For these reasons then it is always interesting to note when a robust Bayes problem \eqref{eq:robust_bayes_minimisation} admits an equivalent formulation \eqref{eq:lip_objective}. Conversely, by developing such a connection we are able to provide interpretation to the popular Lipschitz regularised objective function. 

Our first major contribution, in \autoref{sec:distributional_robustness}, is to show that for a set of convex loss functions we have a result of the form \eqref{eq:lip_objective}. Furthermore, when the loss functions are nonconvex, \eqref{eq:lip_objective} becomes an inequality, and we prove the slackness is controlled (tightly) by a tractable measure of the loss function's convexity, which to our knowledge is a completely new result. As application, in \autoref{sec:adversarial_learning}, we show that the adversarial learning objective commonly used is, in fact, a special case of a distributionally robust risk, which significantly generalises other similar results in this area.

In practice, however, the evaluation of Lipschitz constant is NP-hard for neural networks \citep{Scaman_LipschitzRegularity_2018}, 
compelling approximations of it, or the explicit engineering of Lipschitz layers and analysing the resulting expressiveness in specific cases \citep[e.g.][$\infty$-norm]{Anil_SortingOut_2019}. By comparison, kernel machines encompass a family of models that is universal \citep{MicXuZha06}.
%

Our third contribution, in \autoref{sec:kernel}, is to show that product kernels, such as Gaussian kernels, have a Lipzchitz constant that can be efficiently approximated and optimised  with high probability. By using the Nystr\"{o}m approximation \citep{WilSee00b,DriMah05}. we show that an $\epsilon$ approximation error requires only $\mathrm{O}(1/\epsilon^2)$ samples. Such a sampling-based approach also leads to a single convex constraint, making it scalable to large sample sizes, even with an interior-point solver (\autoref{sec:experiment}). As our experiments show, this method achieves higher robustness than state of the art \citep{Cisse_ParsevalNetworks_2017,Anil_SortingOut_2019}.

\section{Preliminaries}

Let $\Rx \defeq [-\infty,\infty]$ and $\Rx+\defeq [0,\infty]$, with similar notations for the real numbers. Let $\brk{n}$ denote the set $\set{1,\dots,n}$ for $n\in\N$. Unless otherwise specified, $X,Y,\Omega$ are topological outcome spaces. Often $X$ will be used when there is some linear structure, compatible with the topology, so that $\Omega=X\times Y$ may be interpreted as the classical outcome space for classification problems \citep[cf.][]{Vapnik_NatureStatistical_2000}. A sequence in $X$ is a mapping $\N\to X$ and is denoted $(x_i)_{i\in\N}\subseteq X$.

The Dirac measure at some point $\omega\in\Omega$ is $\dirac_\omega \in\probm(\Omega)$, and the set of Borel mappings $X\to Y$ is $\measf(X,Y)$. For $\mu\in\probm(\Omega)$, denote by $\leb_p(\Omega,\mu)$ the Lebesgue space of functions $f\in\measf(\Omega,\R)$ satisfying $\rbr{\int \abs{f(\omega)}^p\mu(\dv \omega)}^{1/p}<\infty$ for $p≥1$. The continuous real functions on $\Omega$ are collected in $\contf(\Omega)$.  In many of our subsequent formulas it is more convenient to write an expectation directly as an integral: $\E_\mu[f] = \int f\dv\mu \defeq \int f(\omega)\mu(\d\omega)$.

For two measures $\mu,\nu\in\probm(\Omega)$ the set of $(\mu,\nu)$-\emph{couplings} is $\coup(\mu,\nu)\subseteq\probm(\Omega\times\Omega)$ where $\pi\in \coup(\mu,\nu)$ if and only if the marginals of $\pi$ are $\mu$ and $\nu$:
\begin{gather}
    \mu = \int \pi(\marg, \dv\omega),\quad \nu = \int \pi(\dv\omega, \marg).
\end{gather}
For a  \gls{coupling function} $c:\Omega\times \Omega\to \Rx$, the $c$-\gls{transportation cost} of $\mu,\nu\in\probm(\Omega)$ is 
\begin{gather}
    \cost_c(\mu,\nu) \defeq \inf_{\pi \in\coup(\mu,\nu)} \int c\dv\pi. \label{eq:transportation_cost}
\end{gather}
The $c$-\gls{transportation cost} ball of radius $r≥0$ centred at $\mu\in\probm(\Omega)$ is 
\begin{gather}
    \costb_c(\mu, r) \defeq \setcond{ \nu \in \probm(\Omega) }{ \cost_c(\mu,\nu) ≤ r}, \label{eq:transportation_cost_ball}
\end{gather}
and serves as our \gls{uncertainty set}. Define the \gls{least c-Lipschitz constant} \cite[cf.][]{Cranko_MongeBlunts_2019} of a function $f:X\to\Rx$:
\begin{gather}
    \lips_c(f) \defeq \inf_{x,y\in X : c(x,y)≠0}\frac{f(x) - f(y)}{c(x,y)}.\label{eq:c_lipschitz}
\end{gather}
Thus when $(X,d)$ is a metric space $\lips_d(f)$ agrees with the usual Lipschitz notion. When $c:X\to \Rx$, for example when $c$ is a semi-norm, we take $c(x,y) \defeq c(x-y)$ for all $x,y\in X$. 

To a function $f:X\to \Rx$ we associate another function $\clconv f:X\to\Rx$, called the \gls{convex envelope} of $f$, defined to be the greatest closed convex function that minorises $f$. The quantity $\cvxity(f) \defeq \sup_{x\in X}\rbr{ f(x) - \clconv f(x)}$ was first suggested by \textcite{Aubin_EstimatesDuality_1976} to quantify the lack of convexity of a function $f$, and has since shown to be of considerable interest for, among other things, bounding the duality gap in nonconvex optimisation \cite[cf.][]{Lemarechal_GeometricStudy_2001,Udell_BoundingDuality_2016,Askari_NaiveFeature_2019,Kerdreux_ApproximateShapleyFolkman_2019}. In particular, observe
\begin{gather}
    \cvxity(f) = 0 \iff f = \clconv f \iff \text{$f$ is closed convex}.
\end{gather}
 
When $f:\R^n\to \Rx$ is minorised by an affine function, there is \citetext{\citealp[cf.][Prop.~X.1.5.4]{Hiriart-Urruty_ConvexAnalysis_2010}; \citealp{Benoist_WhatSubdifferential_1996}}
\begin{gather}
    \clconv f(x) = \inf_{\substack{(\alpha_1,\dots, \alpha_{n+1})\in\Delta^{n+1}\\(x_1,\dots,x_{n+1})\in \Sigma^{n}(x)}} 
    \sum_{i\in\brk{n+1}}\alpha_i f(x_i)
\end{gather}
for all $x\in \R^n$, where
\begin{gather}
    \Delta^n \defeq \setcond3{(\alpha_1,\dots, \alpha_{n+1})\in\R+^{n}}{\sum_{i\in\brk{n+1}}\alpha_i=1}\\
    \mathclap{\Sigma^n(x) \defeq \setcond3{(x_1,\dots, x_{n+1}) \in (\R^n)^{n+1}}{\sum_{i\in\brk{n+1}} x_i=x}.}
\end{gather}
Consequentially it is well known that $\cvxity(f)$ can be computed via the finite-dimensional maximisation
\begin{gather}
    \mathclap{\sup_{\substack{(\alpha_1,\dots, \alpha_{n+1})\in\Delta^{n+1}\\(x_1,\dots,x_{n+1})\in (\R^n)^{n+1}}} 
    \rbr{f\rbr{\smashoperator[r]{\sum_{i\in\brk{n+1}}}\alpha_ix_i} - \smashoperator{\sum_{i\in\brk{n+1}}}\alpha_i f(x_i)}.}
\end{gather}

Complete proofs of all technical results are relegated to the supplementary material.

\section{Distributional robustness}\label{sec:distributional_robustness}

In this section we present our major result regarding identities of the form \eqref{eq:lip_objective}.

\begin{toappendix}
    \subsection{Proof of \autoref{thm:robustness_bound} and other technical results}
\end{toappendix}

\begin{lemmaapx}[\protect{\citealp[Thm.~1]{Blanchet_QuantifyingDistributional_2019}}]\label{lem:strong_duality}
    Assume $\Omega$ is a Polish space and fix $\mu\in\probm(\Omega)$. Let $c:\Omega\times\Omega\to\Rx+$ be lower semicontinuous with $c(\omega,\omega)=0$ for all $\omega\in \Omega$, and $f:\Omega\to\R$ is upper semicontinuous. Then for all $ r≥0$ there is 
    \begin{gather}
        \sup_{\nu\in\costb_c(\mu, r)}\int f \dv\nu = \inf_{\lambda≥0}\rbr{\lambda r + \int f^{\lambda c}\dv\mu}.\label{eq:duality}
    \end{gather}
\end{lemmaapx}
\begin{toappendix}
    \par Duality results like \autoref{lem:strong_duality} have been the basis of a number of recent theoretical efforts in the theory of adversarial learning \cite{Sinha_CertifiableDistributional_2018,Gao_DistributionallyRobust_2016,Blanchet_RobustWasserstein_2019,Shafieezadeh-Abadeh_RegularizationMass_2019}, the results of \citet{Blanchet_QuantifyingDistributional_2019} being the most general to date. The necessity for such duality results like \autoref{lem:strong_duality} is because while the supremum on the left hand side of \eqref{eq:duality} is over a (usually) infinite dimensional space, the right hand side only involves only a finite dimensional optimisation. The generalised conjugate in \eqref{eq:duality} also hides an optimisation, but when the outcome space $\Omega$ is finite dimensional, this too is a finite dimensional problem. 

    The following lemma is sometimes stated a consequence of, or in the proof of, the McShane--Whitney extension theorem \cite{McShane_ExtensionRange_1934,Whitney_AnalyticExtensions_1934}, but it is immediate to observe. 
\end{toappendix}
\begin{lemmaapx}\label{lem:mcshane_whitney}
    Let $X$ be a set. Assume $c:X\times X\to \Rx+$ satisfies $c(x,x)=0$ for all $x\in X$, $f:X\to \R$. Then 
    \begin{gather}
        1 ≥ \lips_c(f)\iff\forall{y\in X}  f(y)= \sup_{x\in X}\rbr\big{f(x) - c(x,y)}.
    \end{gather}
\end{lemmaapx}
\begin{proof}
    Suppose $1 ≥ \lips_c(f)$. Fix $y_0\in X$. Then 
    \begin{gather}
        \forall{x\in X} f(x) - c(x,y_0)  ≤ f(y_0),
    \end{gather}
    with equality when $x=y_0$. Next suppose 
    \begin{gather}
        \forall{y\in X} f(y)= \sup_{x\in X}\rbr\big{f(x) - c(x,y)},
    \end{gather}
    then 
    \begin{align}
        \forall{x,y\in X} f(y) ≥ f(x) - c(x,y) 
        &\iff  \forall{x,y\in X} f(x) - f(y) ≤ c(x,y)
        \\&\iff  1≥\lips_c(f),
    \end{align}
    as claimed.
\end{proof}

\begin{lemmaapx}\label{lem:lipschitz_conjugate_condition}
    Assume $X$ is a vector space.
    Suppose $c:X\to \Rx+$ satisfies $c(0)=0$, and $f:X\to\R$ is convex. Then 
    \begin{gather}
       1\geq \lips_c(f) \iff \forall{\epsilon≥0} \esubdiff f(X) \subseteq \esubdiff c(0).
    \end{gather} 
\end{lemmaapx}
\begin{proof}
    Suppose $1\geq \lips_c(f)$. Then $f(x) - f(y) ≤  c(x -y)$ for all $x,y\in X$.
    Fix $\epsilon≥0$, $x\in X$ and suppose $x^*\in \esubdiff f(x)$. Then 
    \begin{align}
        \MoveEqLeft\forall{y\in X}\inp{y-x,x^*} - \epsilon ≤ f(y) - f(x) ≤  c(y-x)
        \\&\iff\forall{y\in X} \inp{y,x^*} -\epsilon ≤ f(y+x) - f(x) ≤  c(y) - c(0),
    \end{align}
    because $c(0)=0$. This shows $x^*\in \esubdiff  c(0)$. 
    
    Next assume $\esubdiff f(x) \subseteq \esubdiff c(0)$ for all $\epsilon≥0$ and $x\in X$. Because $f$ is not extended-real valued, it is continuous on all of $X$ \cite[via][Cor.~2.2.10]{Zalinescu_ConvexAnalysis_2002} and $\subdiff f(x)$ is nonempty for all $x\in X$ \cite[via][Thm.~2.4.9]{Zalinescu_ConvexAnalysis_2002}.  Fix an arbitrary $x\in X$. Then $\emptyset≠\subdiff f(x) \subseteq \subdiff c(0)$, and 
    \begin{align}
        \begin{aligned}
            \MoveEqLeft\exists{x^*\in\subdiff f(x)}\forall{y\in X}
        f(x)-f(y) ≤\inp{x-y,x^*}
        \\&\implies 
        \forall{y\in X}f(x)-f(y) ≤\inp{x-y,x^*}  ≤  c(x-y),
        \end{aligned} \label{eq:lipschitz_ineq}
    \end{align}
    where the implication is because $x^*\in\subdiff  c(0)$ and $c(0)=0$. Since the choice of $x$ in \eqref{eq:lipschitz_ineq} was arbitrary, the proof is complete.
\end{proof}

\begin{lemmaapx}\label{lem:lipschitz_conjugate}
    Assume $X$ is a locally convex Hausdorff topological vector space.
    Suppose $c:X\to \Rx$ is closed sublinear, and $f:X\to\R$ is closed convex. Then there is 
    \begin{gather}
        \forall{y\in X} \sup_{x\in X}\rbr2{ f(x) -  c(x-y)} = \begin{cases}
            f(y)  &1≥\lips_c(f) \\
            \infty &\text{otherwise}.
        \end{cases}
    \end{gather} 
\end{lemmaapx}
\begin{proof} 
    Fix an arbitrary $y_0\in X$. From \autoref{lem:lipschitz_conjugate_condition} we know 
    \begin{gather}
        1\geq \lips_c(f) \iff \forall{\epsilon≥0} \esubdiff f(X) \subseteq \esubdiff c(0).
     \end{gather}

    Assume $\esubdiff f(X) \subseteq \esubdiff c(0)$ for all $\epsilon≥0$.
     Consequentially $\esubdiff f(y_0)\subseteq\esubdiff c(0) = \esubdiff c(\marg-y_0)(y_0)$ for every $\epsilon≥0$.
    From the usual difference-convex global $\epsilon$-subdifferential condition \cite[Thm.~4.4]{Hiriart-Urruty_ConvexOptimization_1989}  it follows that
    \begin{gather}
        \inf_{x\in X} \rbr3{c(x-y_0) - f(x)}  = \underbrace{c(y_0 - y_0)}_0 - f(y_0) = -f(y_0),
    \end{gather}
    where we note that $c(y_0 - y_0) = c(0) = 0$ because $c$ is sublinear.  

    Assume $\esubdiff f(X) \not\subseteq \esubdiff c(0)$ for some $\epsilon≥0$. By hypothesis there exists $\epsilon_0≥0$, $x_0\in X$, and $x^*_0\in X^*$ with 
    \begin{gather}
        x_0^*\in \esubdiff_{\epsilon_0} f(x_0)\quad\text{and}\quad x_0^*\not\in \esubdiff_{\epsilon_0} c(0).
    \end{gather}
    
    Using the \textcite{Toland_DualityPrinciple_1979} duality formula \cite[viz.][Cor.~2.3]{Hiriart-Urruty_GeneralFormula_1986} and the usual calculus rules for the Fenchel conjugate \cite[e.g.][Thm.~2.3.1]{Zalinescu_ConvexAnalysis_2002} we have 
    \begin{align}
        \inf_{x\in X} \rbr3{c(x-y_0) - f(x)} 
        &= \inf_{x^*\in X^*} \rbr3{f^*(x^*) - (c(\marg - y_0))^*(x^*)}
        \\&= \inf_{x^*\in X^*} \rbr3{f^*(x^*) - c^*(x^*) + \inp{y_0,x^*}}
        \\&≤f^*(x^*_0) - c^*(x^*_0) + \inp{y_0,x^*_0}
        \\&\overseteqref{eq:generalised_fy}{≤} \epsilon_0 + \inp{x_0,x_0^*} - f(x_0) - c^*(x^*_0) + \inp{y_0,x^*_0}
        \\&= \underbrace{\epsilon_0 + \inp{x_0+y_0,x_0^*} - f(x_0)}_{<\infty} - c^*(x^*_0),\label{eq:toland_inequality}
    \end{align}
    where the second inequality is because $x_0^*\in \esubdiff_{\epsilon_0} f(x_0)$. 
    
    We have assumed $x^*_0\notin \esubdiff c(0) \supseteq \subdiff c(0)$. Because $c$ is sublinear,  $c^* = \ind_{\subdiff c(0)}$ \cite[Thm.~2.4.14 (i)]{Zalinescu_ConvexAnalysis_2002}, and therefore $c^*(x_0^*) = \infty$. Then \eqref{eq:toland_inequality} yields
    \begin{gather}
        \inf_{x\in X} \rbr3{c(x-y_0) - f(x)}  ≤ -\infty,
    \end{gather}
    which completes the proof.
\end{proof}

\NewDocumentCommand\boundnumber{G{\mu}}{\Delta_{f,c,r}(#1)}
\begin{theoremrep}\label{thm:robustness_bound}
    Assume $X$ is a separable Fréchet space and fix $\mu\in\probm(X)$. Suppose $c:X\to \Rx$ is closed sublinear, and $f\in\leb_1(X,\mu)$ is upper semicontinuous with $\lips_c(f)<\infty$. Then for all $r≥0$, there is a number $\boundnumber≥0$  so that 
    \begin{gather}
        \mathclap{\sup_{\nu\in\costb_c(\mu, r)}\int f\dv\nu + \boundnumber = \int f\dv\mu+ r\lips_c(f).\quad}\label{eq:upperbound}
    \end{gather}
    Furthermore $\boundnumber$ is upper bounded by 
    \begin{align}
        r\lips_c(f) - \sbr2{r\lips_c(\clconv f)- \int\rbr{f-\clconv f}\dv\mu}_+,\label{eq:bound_tightness}
    \end{align}
    where $\sbr{\marg}_+ \defeq \max\cbr{\marg,0}$, so that when $f$ is closed convex $\boundnumber=0$.
\end{theoremrep}
\begin{proof}
    \eqref{eq:upperbound}: Since $c$ is assumed sublinear, it is positively homogeneous and there is $c(x,x) = c(x-x) = c(0) = 0$ for all $x\in X$. %
    Therefore we can apply \autoref{lem:strong_duality} and  \autoref{lem:mcshane_whitney} to obtain
    \begin{gather}
        \begin{aligned}
            \sup_{\nu\in\costb_c(\mu, r)} \int f\dv\nu 
            &\oversetautoref{lem:strong_duality}{=} \inf_{\lambda≥0}\rbr{  r\lambda + \int f^{\lambda c} \dv\mu} \label{eq:transport_dual}
            \\&≤ \inf_{\lambda≥\lips_c(f)}\rbr{  r\lambda + \int f^{\lambda c} \dv\mu} 
            \\&\oversetautoref{lem:mcshane_whitney}{=}  r\lips_c(f) + \int f\dv\mu,
        \end{aligned}
    \end{gather}
    and therefore $\boundnumber≥0$.

    \eqref{eq:bound_tightness}: Observing that $\clconv f≤f$, from \autoref{lem:lipschitz_conjugate} we find for all $x\in X$
    \begin{align}
        \MoveEqLeft\sup_{\lambda \in [0,\infty)}\rbr\big{f(x) - f^{\lambda c}(x) - r\lambda}
        \\&\oversetautoref*{lem:lipschitz_conjugate}{=}
        \sup_{\lambda \in [0,\infty)}\rbr\big{f(x) - \sup_{y\in X}\rbr\big{f(y) - \lambda c(x-y)}- r\lambda}
        \\&\oversetautoref*{lem:lipschitz_conjugate}{=}
        \sup_{\lambda \in [0,\infty)}\inf_{y\in X}\rbr\big{f(x) -f(y) + \lambda c(x-y) - r\lambda}
        \\&\oversetautoref*{lem:lipschitz_conjugate}{≤}
        \sup_{\lambda \in [0,\infty)}\inf_{y\in X}\rbr\big{f(x) -\clconv f(y) + \lambda c(x-y) -\lambda r}
        \\&\oversetautoref{lem:lipschitz_conjugate}{=}
            \sup_{\lambda \in [0,\infty)}\begin{cases}
                f(x)  - \clconv f(x) -\lambda r & \lips_c(\clconv f) ≤ \lambda \\
                -\infty& \lips_c(\clconv f) > \lambda 
            \end{cases}
        \\&
        \oversetautoref*{lem:lipschitz_conjugate}{=}
        f(x)  - \clconv f(x) - r\lips_c(\clconv f).\label{eq:intermediary_bound_1}
    \end{align}
    Similarly,  for all $x\in X$ there is 
    \begin{align}
        \sup_{\lambda \in [0,\infty)}\rbr2{f(x) - f^{\lambda c}(x) - r\lambda}
        &≤\sup_{\lambda \in [0,\infty)}\rbr2{f(x) - f^{\lambda c}(x)}  + \sup_{\lambda \in [0,\infty)} \rbr2{- r\lambda}
        \\&=\sup_{\lambda \in [0,\infty)}\rbr2{f(x) - f^{\lambda c}(x)}
        \\&=\sup_{\lambda \in [0,\infty)}\inf_{y\in X}\rbr2{f(x) -  f(y) + \lambda c(x-y)}
        \\&≤\inf_{y\in X}\sup_{\lambda \in [0,\infty)}\rbr2{f(x) -  f(y) + \lambda c(x-y)}
        \\&=\inf_{y\in X}\begin{cases}
            \infty  & c(x-y)>0\\
            0  & c(x-y)=0
        \end{cases}
        \\&=0.\label{eq:intermediary_bound_2}
    \end{align}

    Together, \eqref{eq:intermediary_bound_1} and \eqref{eq:intermediary_bound_2} show 
    \begin{align}
        \MoveEqLeft[8]\int \sup_{\lambda \in [0,\infty)}\rbr\big{f - f^{\lambda c} - r\lambda}\d \mu
        \\&≤ \min\cbr{
            \int (f-\clconv f)\d\mu -  r\lips_c(\clconv f), 0
        }.\label{eq:intermediary_bound_3}
    \end{align}
    Then
    \begin{align}
        \boundnumber
        &=\rbr{ r\lips_c(f) + \int f\d\mu} - \sup_{\nu\in\costb_c(\mu, r)} \int f\d\nu 
        \\&\overseteqref{eq:transport_dual}{=}\rbr{ r\lips_c(f) + \int f\d\mu} -\inf_{\lambda \in [0,\infty)}\rbr{ r\lambda - \int f^{\lambda c} \d\mu}
        \\&\overseteqref*{eq:intermediary_bound_3}{=} r\lips_c(f)+\sup_{\lambda \in [0,\infty)}\int\rbr{f - f^{\lambda c} -\lambda r }\d\mu
        \\&\overseteqref*{eq:intermediary_bound_3}{≤} r\lips_c(f)+\int\sup_{\lambda \in [0,\infty)}\rbr{f - f^{\lambda c} -\lambda r }\d\mu
        \\&\overseteqref{eq:intermediary_bound_3}{≤}
        r\lips_c(f) + \min\cbr{\int\rbr{f-\clconv f}\d\mu -  r\lips_c(\clconv f),0},
    \end{align}
    which implies \eqref{eq:bound_tightness}.
\end{proof}
\begin{proofsketch}
    The duality result of \citet[Thm.~1]{Blanchet_QuantifyingDistributional_2019} yields a tractable, dual formulation of the robust risk, which is easy to upper bound by the regularised risk. Lower bound the function $f$ by its closed convex envelope $\clconv f$ and use classical results from the difference-convex optimisation literature \citep{Toland_DualityPrinciple_1979,Hiriart-Urruty_GeneralFormula_1986,Hiriart-Urruty_ConvexOptimization_1989} to solve the inner maximisation of the  dual robust risk formulation.
 \end{proofsketch}

\autoref{thm:robustness_bound} subsumes many existing results 
\citetext{\citealp[Cor.~2~(iv)]{Gao_DistributionallyRobust_2016}; \citealp[\S3.2]{Cisse_ParsevalNetworks_2017}; \citealp[\S3.2][various]{Sinha_CertifiableDistributional_2018}; \citealp[Thm.~14]{Shafieezadeh-Abadeh_RegularizationMass_2019}} with a great deal more generality, applying to a very broad family of models, loss functions, and outcome spaces. The extension of \autoref{thm:robustness_bound} for robust classification in the absence of label noise is straight-forward:
\begin{corollary}\label{cor:robustness_bound}
    Assume $X$ is a separable Fréchet space and $Y$ is a topological space. Fix $\mu\in\probm(X\times Y)$. Assume $c:(X\times Y)\times (X\times Y)\to\Rx$ satisfies 
    \begin{gather}
        c((x,y),(x',y')) = \begin{cases}
            c_0(x-x') & y=y'\\
            \infty    & y≠y',
        \end{cases}\label{eq:extended_c}
    \end{gather}
    where $c:X\to \Rx$ is closed sublinear, and $f\in\leb_1(X\times Y,\mu)$ is upper semicontinuous and has $\lips_c(f)<\infty$. Then for all $ r≥0$ there is \eqref{eq:upperbound} and \eqref{eq:bound_tightness}, where the closed convex hull is interpreted $\clconv(f)(x,y) \defeq \clconv(f(\marg,y))(x)$.
\end{corollary}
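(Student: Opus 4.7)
The plan is to reduce the corollary to Theorem \ref{thm:robustness_bound} by exploiting the fact that the $\infty$-valued off-diagonal cost causes the transport problem to decompose along the fibers $X \times \{y\}$. Indeed, any coupling $\pi \in \coup(\mu, \nu)$ with finite cost must be supported on $\{y = y'\}$, so the variational problem $\sup_\nu \int f \, d\nu$ reduces to a collection of independent per-fiber problems in which the machinery of Theorem \ref{thm:robustness_bound} applies directly.

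First I would verify that the extended $c$ still satisfies the hypotheses of Lemma \ref{lem:strong_duality}: lower semicontinuity holds because $c_0$ is closed and $\infty$ is lsc, and $c$ vanishes on the diagonal since $c_0(0) = 0$ by sublinearity. Invoking the duality, the right-hand side becomes $\inf_{\lambda \ge 0}\bigl(\lambda r + \int f^{\lambda c}\,d\mu\bigr)$, and the key computation is that the generalised conjugate collapses to a per-fiber expression,
\[
    f^{\lambda c}(x, y) \;=\; \sup_{x' \in X}\bigl(f(x',y) - \lambda c_0(x' - x)\bigr),
\]
since setting $y' \ne y$ in the defining supremum contributes only $-\infty$.

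Second, I would observe that $\lips_c(f) = \sup_{y \in Y}\lips_{c_0}(f(\cdot, y))$ follows directly from the definition \eqref{eq:c_lipschitz}, as the cross-fiber difference quotients all vanish. Choosing $\lambda \ge \lips_c(f)$ then gives $\lambda \ge \lips_{c_0}(f(\cdot, y))$ uniformly in $y$, whence Lemma \ref{lem:mcshane_whitney} applied on each fiber yields $f^{\lambda c}(x, y) = f(x, y)$, and the upper bound \eqref{eq:upperbound} follows exactly as in the proof of Theorem \ref{thm:robustness_bound}. For the tightness estimate \eqref{eq:bound_tightness}, Lemma \ref{lem:lipschitz_conjugate} can be invoked fiberwise using the sublinear $c_0$ together with the fiberwise convex envelope $\clconv f(x,y) = \clconv(f(\cdot, y))(x)$, after which the Toland-duality chain of inequalities from the proof of Theorem \ref{thm:robustness_bound} transfers verbatim.

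The main obstacle I expect is measurability: one must check that the per-fiber conjugates and convex envelopes assemble into a $\mu$-measurable function on $X \times Y$ so that $\int f^{\lambda c}\,d\mu$ is meaningful. Upper semicontinuity of $f$ combined with the separability hypothesis on $X$ (and the Polish assumption on $X \times Y$ implicit in Lemma \ref{lem:strong_duality}) should suffice, but this is the one step that genuinely requires more than a pure fiberwise reduction.
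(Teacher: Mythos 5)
The paper gives no explicit proof, calling this corollary a ``straight-forward'' extension of \autoref{thm:robustness_bound}, and your fiberwise reduction is precisely the intended argument: the $\infty$ off-diagonal penalty forces the generalised conjugate $f^{\lambda c}$, the least $c$-Lipschitz constant, and the convex envelope to decompose per fiber, after which the chain of estimates in the proof of \autoref{thm:robustness_bound} transfers essentially verbatim. You correctly flag measurability and the implicit Polishness of $Y$ as points the paper glosses over; your opening characterisation of ``independent per-fiber problems'' is slightly loose (the transportation budget $r$ is shared across fibers, so the supremum over $\nu$ does not literally decompose), but your actual argument applies \autoref{lem:strong_duality} once on the product $X\times Y$ and is therefore unaffected.
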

It is the first time to our knowledge that the slackness in \eqref{eq:bound_tightness} has been characterised tightly.  Clearly from \autoref{thm:robustness_bound} the upper bound \eqref{eq:boring_upperbound} is tight for closed convex functions, but \autoref{prop:robust_tightness} shows it is also tight for a large family of nonconvex functions and measures --- particularly the upper semi-continuous loss functions on a compact set, with the collection of probability distributions supported on that set.

Observing that $\boundnumber≥0$, the equality \eqref{eq:upperbound} yields the upper bound 
\begin{gather}
    \sup_{\nu\in\costb_c(\mu, r)}\int f\d\nu  ≤ \int f\d\mu+ r\lips_c(f).\label{eq:boring_upperbound}
\end{gather}
By controlling $\boundnumber$ we are able to guarantee that the regularised risk in \eqref{eq:lip_objective} is a good surrogate for the robust risk. The number $\boundnumber$ itself is quite hard to measure (since it would require computing the robust risk directly), which is why we upper bound it in \eqref{eq:bound_tightness}. \autoref{prop:robust_tightness} shows the slackness bound \eqref{eq:bound_tightness} is tight for a large family of distributions after observing 
\begin{gather}
    \forall{f\in\measf(X,\Rx)}\forall{\mu\in\probm(X)}\int\rbr{f-\clconv f}\d\mu ≤ \cvxity(f).
\end{gather}
This yields 
\begin{align}
    \MoveEqLeft[6] r\lips_c(f) - \sbr2{r\lips_c(\clconv f)- \int\rbr{f-\clconv f}\d\mu}_+ 
    \\&≤ r\lips_c(f) - \sbr2{r\lips_c(\clconv f)- \cvxity(f)}_+,
\end{align}
for all $f\in\measf(X,\Rx)$, $\mu\in\probm(X)$, and $r≥0$.

\begin{propositionrep}\label{prop:robust_tightness}
    Let $X$ be a separable Fréchet space with $X_0\subseteq X$. Suppose $c:X\to \Rx$ is closed sublinear, and $f\in\bigcap_{\mu\in\probm(X_0)}\leb_1(X,\mu)$ is upper semicontinuous, has $\lips_c(f)<\infty$, and attains its maximum on $X_0$. Then for all $r≥0$
    \begin{gather}
        \mathclap{
            \sup_{\mu\in\probm(X_0)}\boundnumber  = r\lips_c(f) - \sbr2{r\lips_c(\clconv f)- \cvxity(f)}_+. 
        }\label{eq:slackness_bound_tightness}
    \end{gather}
\end{propositionrep}
\begin{proofsketch}
    Let $f$ achieve its maximum on $X_0$ at $x_0$. Then $\sup_{\nu\in\probm(X_0)}\E_\nu[f]=\sup_{\nu\in\costb_c(\dirac_{x_0},r)}\E_\nu[f] = \E_{\dirac_{x_0}}[f]$, which implies the result. 
\end{proofsketch}
\begin{appendixproof}
    Let $x_0\in X_0$ be a point at which $f(x_0) = \sup f(X_0)$. Then $\cost_c(\dirac_{x_0},\dirac_{x_0}) = 0 ≤ r$, and $\sup_{\nu\in\ball_c(\dirac_{x_0}, r)}\int f\d\nu = f(x_0)$. Therefore 
    \begin{align}
        \boundnumber{\dirac_{x_0}}
        = r\lips_c(f) + f(x_0) - f(x_0)
        = r\lips_c(f).\label{eq:lower_bound_thing}
    \end{align}
    And so we have 
    \begin{align}
        r\lips_c(f) 
        &\overseteqref{eq:lower_bound_thing}{≤} \sup_{\mu\in\probm(X_0)} \boundnumber
        \\&\oversetautoref{thm:robustness_bound}{≤}  r\lips_c(f) - \max\mathopen{}\cbr2{r\lips_c(\clconv f)- \cvxity(f),0}
        \\&\oversetautoref*{thm:robustness_bound}{≤} r\lips_c(f),
    \end{align}
    which implies the claim.
\end{appendixproof}

\begin{remark}
    In particular, for any compact subset of a Fréchet space $X_0$ (such as the set of $n$-dimensional images, $X_0=[0,1]^n\subseteq\R^n$) the bound \eqref{eq:upperbound} is tight with respect to the set $\probm(X_0)$ for any upper semicontinuous $f\in \bigcap_{\mu\in\probm(X_0)}\leb(X,\mu)$. Since the behaviour of $f$ away from $X_0$ is not important, the $c$-Lipschitz constant in \eqref{eq:upperbound} need only be computed here. To do so one may replace $c$ with $\tilde c$, where $ \tilde c(x) = c(x)$ for $x\in X_0$ and $\tilde c(x) = \infty$ for $x \in X\setminus X_0$, and observe $\lips_{\tilde c}(f) ≤ \lips_c(f)$, because $\tilde c ≥ c$.
\end{remark}

\section{Adversarial learning}\label{sec:adversarial_learning}

\textcite{Szegedy_IntriguingProperties_2014} observe that deep neural networks, trained for image classification using empirical risk minimisation, exhibit a curious behaviour whereby an image, $x\in\R^n$, and a small, imperceptible amount of noise, $\delta_x\in\R^n$, may found so that the network classifies $x$ and $x+\delta_x$ differently. Imagining that the troublesome noise vector is sought by an adversary seeking to defeat the classifier, such pairs have come to be known as \emph{adversarial examples} \cite{MoosaviDezfooli_UniversalAdversarial_2017,Goodfellow_ExplainingHarnessing_2015,Kurakin_AdversarialExamples_2017}.

When $(X,c)$ is a normed space, the closed ball of radius $r≥0$, centred at $x\in X$ is denoted $\ball_c(x,r) \defeq \setcond{ y\in X }{ c(x-y) ≤ r}$. Let $X$ be a linear space and $Y$ a topological space. Fix $\mu\in\probm(X\times Y)$, $r≥0$, and let $c$ be a norm on $X$. 

The following objective has been proposed \cite[viz.][]{Madry_DeepLearning_2018,Shaham_UnderstandingAdversarial_2018,Carlini_EvaluatingRobustness_2017,Cisse_ParsevalNetworks_2017} as a means of learning classifiers that are robust to adversarial examples
\begin{gather}
    \begin{aligned}
        \int \sup_{\smash{\delta\in \ball_c(0,r) }} f(x + \delta, y) \mu(\d x\times \d y ) 
    \end{aligned}\label{eq:adv_risk}
\end{gather}
where $f:X\times Y\to \Rx$ is the loss of some classifier.

\begin{toappendix}
    \subsection{Proof of \autoref{thm:adversarial_risk_is_robust_risk}}
    \autoref{lem:robust_risk_equality} will be used to show an equality result in \autoref{thm:adversarial_risk_is_robust_risk}.
\end{toappendix}
\begin{lemmaapx}\label{lem:robust_risk_equality}
    Assume $(\Omega,c)$ is a compact Polish space and $\mu\in\probm(\Omega)$ is non-atomic. For $r>0$ and $\nu^\star\in\costb_c(\mu,r)$  there is a sequence $(f_i)_{i\in \N}\subseteq A_\mu(r)\defeq \setcond{f\in\measf(\Omega,\Omega) }{ \int c\dv(\id,f)_\#\mu ≤ r}$ with $(f_i)_\#\mu$ converging at $\nu^\star$ in $\wtop{\probm(\Omega),\contf(\Omega)}$.
\end{lemmaapx}

\begin{proof}
    Let $P(\mu,\nu) \defeq \setcond{f\in\measf(X,X)}{ f_\#\mu = \nu}$. Since $\mu$ is non-atomic and $c$ is continuous we have \cite[via][Thm.~B]{Pratelli_EqualityMonge_2007}
    \begin{gather}
        \forall{\nu\in\probm(\Omega)} \inf_{f \in P(\mu,\nu)} \int c\dv(\id,f)_\#\mu = \cost_c(\mu,\nu).
    \end{gather}
    Let $r^\star\defeq \cost_c(\mu,\nu^\star)$, obviously $r^\star ≤ r$. Assume $r^\star>0$, otherwise the lemma is trivial. Fix a sequence $(\epsilon_k)_{k\in \N}\subseteq (0,r^\star)$ with $\epsilon_k\to 0$. For $u≥0$ let  $\nu(u) \defeq \mu + u(\nu^\star- \mu)$. Then 
    \begin{gather}
        \cost_c(\mu,\nu(0)) = 0 \quad\text{and}\quad \cost_c(\mu,\nu(1)) = r^\star,
    \end{gather}
    and because $\cost_c$ metrises the $\wtop{\probm(\Omega),\contf(\Omega)}$-topology on $\probm(\Omega)$ \cite[Cor.~6.13]{Villani_OptimalTransport_2009}, the mapping $u\mapsto \cost_c(\mu,\nu(u))$ is $\wtop{\probm(\Omega),\contf(\Omega)}$-continuous. 
    Then by the intermediate value theorem for every $k\in\N$ there is some $u_k>0$ with $\cost_c(\mu,\nu(u_k)) = r^\star - \epsilon_k$, forming a sequence $(u_k)_{k\in \N}\subseteq [0,1]$. Then for every $k$ there is a sequence $(f_{jk})_{j\in \N}\subseteq P(\mu,\nu(u_k))$ so that $(f_{jk})_\#\mu \to \nu(k)$ in $\wtop{\probm(\Omega),\contf(\Omega)}$ and 
    \begin{align}
        \lim_{j\in \N} \int c\dv(\id, f_{jk})_\#\mu 
        &= \inf_{f \in P(\mu,\nu(k))} \int c\dv(\id,f_k)_\#\mu 
        \\&= \cost_c(\mu,\nu(k)) 
        \\&= r^\star -\epsilon_k.
    \end{align}
    Therefore for every $k\in\N$ there exists $j_k≥0$ so that for every $j≥j_k$
    \begin{gather}
       \int c\dv(\id, f_{jk})_\#\mu ≤ r^\star. \label{eq:feasibility}
    \end{gather}
    Let us pass directly to this subsequence of $(f_{jk})_{j\in \N}$ for every $k\in\N$ so that \eqref{eq:feasibility} holds for all $j,k\in\N$. Next by construction we have $\nu(u_k) \to \nu^\star$. Therefore $(f_{jk})_{j,k\in N}$ has a subsequence in $k$ so that $(f_{jk})_\#\mu \to \nu^\star$ in in $\wtop{\probm(\Omega),\contf(\Omega)}$. By ensuring \eqref{eq:feasibility} is satisfied, the sequences  $(f_{jk})_{j\in \N}\subseteq A_\mu(r)$ for every $k\in \N$.
\end{proof}

\begin{toappendix}
    We can now prove our main result \autoref{thm:adversarial_risk_is_robust_risk}.  When $(X,c)$ is a normed space, the closed ball of radius $r≥0$, centred at $x\in X$ is denoted $\ball_c(x,r) \defeq \setcond{ y\in X }{ c(x-y) ≤ r}$.
\end{toappendix}
\begin{theoremrep}\label{thm:adversarial_risk_is_robust_risk}
    Assume $(X,c_0)$ is a separable Banach space. Fix $\mu\in\probm(X)$ and for $r≥0$ let  
    \begin{gather}
        R_\mu(r) \defeq \setcond{g\in\measf( X,\R+) }{ \int g\d\mu ≤ r}.
        \intertext{Then for $f\in\measf(\Omega,\Rx)$ and $r≥0$ there is} 
        \mathclap{\sup_{g\in R_\mu(r)}\int \mu(\d\omega) \sup_{\omega'\in\ball_{c_0}(\omega,g(\omega))}f(\omega') 
        ≤ \sup_{\nu\in\costb_{c_0}(\mu,r)} \int f\d\nu,}
    \end{gather}
    with equality if, furthermore,  $\mu$ is non-atomically concentrated on a compact subset of $X$, on which $f$ is continuous with the subspace topology.
\end{theoremrep}

\begin{proof}
    For convenience of notation let $c\defeq c_0$.

    When $r=0$, the set $R_\mu(r)$ consists of the set of functions $g$ which are $0$ $\mu$-almost everywhere, in which case $\ball_c(x,g(x))=\cbr{0}$ for $\mu$-almost all $x\in X$. Thus \eqref{eq:inequality_thing} is equal to $\int f(x)  \mu(\d x)$. Since $c$ is a norm, $c(0)=0$, and by a similar argument there is equality with the right hand side. We now complete the proof for the cases where $r>0$.

    \emph{Inequality}:
    For $g\in R_\mu(r)$, let $\Gamma_g: X\to  2^X$ denote the set-valued mapping with $\Gamma_g( x)\defeq \ball_c( x,g( x))$. Let $\measf( X,\Gamma_g)$ denote the set of Borel $a: X\to X$ so that $a( x) \in \Gamma_g( x)$ for $\mu$-almost all $ x\in X$. Let $A_\mu(r) \defeq  \bigcup_{g\in \R_\mu(r)} \measf( X,\Gamma_g)$. Clearly for every $a\in A_\mu(r)$ there is 
    \begin{gather}
        r≥ \int c( x,a( x))\d\mu = \int c\d (\id, a)_\#\mu,
    \end{gather}
    which shows $\setcond{a_\#\mu }{ a\in A_\mu(r)} \subseteq \costb_c(\mu,r)$. Then if there is equality in \eqref{eq:integral_interchange}, we have 
    \begin{align}
        \sup_{g\in R_\mu(r)} \int \sup_{ x'\in \Gamma_g( x)} f( x) 
        &= \sup_{g\in R_\mu(r)}\sup_{a\in \measf( X,\Gamma_g)} \int f\d a_\#\mu\label{eq:integral_interchange}
        \\&= \sup_{a\in A_\mu(r)} \int f\d a_\#\mu
        \\&≤ \smash{\sup_{\nu\in\costb_c(\mu,r)}} \int f\d \nu,
    \end{align}
    which proves the inequality.

    To complete the proof we will now justify the exchange of integration and supremum in  \eqref{eq:integral_interchange}. The set $\measf( X,\Gamma_g)$ is trivially decomposable \cite[see the remark at the bottom of p.~323, Def.~2.1]{Giner_NecessarySufficient_2009}. By assumption $f$ is Borel measurable. Since $f$ is measurable, any decomposable subset of $\measf( X, X)$ is $f$-decomposable \cite[Prop.~5.3]{Giner_NecessarySufficient_2009} and $f$-linked \cite[Prop.~3.7~(i)]{Giner_NecessarySufficient_2009}.  \textcite[Thm.~6.1~(c)]{Giner_NecessarySufficient_2009} therefore allows us to exchange integration and supremum in \eqref{eq:integral_interchange}.

    \emph{Equality}:
    Under the additional assumptions there exists $\nu^\star\in\probm(\Omega)$ with \cite[via][Prop.~2]{Blanchet_QuantifyingDistributional_2019}
    \begin{gather}
        \smash\int f\d\nu^\star = \sup_{\nu\in\costb_c(\mu,r)}\smash\int f\d\nu.
    \end{gather}
    The compact subset where $\mu$ is concentrated and non-atomic is a Polish space with the Banach metric. Therefore
    using \autoref{lem:robust_risk_equality} there is a sequence $(f_i)_{i\in \N}\subseteq A_\mu(r)$ so that 
    \begin{gather}
        \lim_{i\in\N}\int f_i\d\mu = \int f\d\nu^\star = \sup_{\nu\in\costb_c(\mu,r)}\int f\d\nu,
    \end{gather}
    proving the desired equality.
\end{proof}

\begin{proofsketch}
    \textcite[Thm.~6.1~(c)]{Giner_NecessarySufficient_2009} allows us to interchange the integral and supremum, the inequality then follows from the definition of the transportation cost risk. To show the equality under the added assumptions, there is a distribution that achieves the robust supremum \cite[Prop.~2]{Blanchet_QuantifyingDistributional_2019}, and a Monge map that achieves the transportation cost infimum \cite[Thm.~B]{Pratelli_EqualityMonge_2007}. 
\end{proofsketch}

\begin{remark}
    By observing the constant function $ g_r\equiv r$ is included in the set $R_\mu(r)$, it's easy to see that the adversarial risk \eqref{eq:adv_risk} is upper bounded as follows
    \begin{align}
        \MoveEqLeft[5]\int \sup_{\smash{\delta\in \ball_{c_0}(0,r) }} f(x + \delta, y) \mu(\d x\times \d y )
        \\&=\int \sup_{\omega' \in \ball_{c}(\omega,r) } f(\omega') \mu(\d\omega),
        \\&≤\sup_{g\in R_\mu(r)}\int \mu(\dv\omega) \sup_{\omega'\in\ball_{\tilde c}(\omega,g(\omega))}f(\omega'),\quad\label{eq:inequality_thing}
    \end{align}
    where in the equality we extend $c_0$ to a metric on $X\times Y$ in the same way as \eqref{eq:extended_c}.
\end{remark}

\autoref{thm:adversarial_risk_is_robust_risk} generalises and subsumes a number of existing results \citetext{\citealp[Cor.~2 (iv)]{Gao_DistributionallyRobust_2016}; \citealp[Prop.~3.1]{Staib_DistributionallyRobust_2017}; \citealp[Thm.~12]{Shafieezadeh-Abadeh_RegularizationMass_2019}} to relate the adversarial risk minimisation \eqref{eq:adv_risk} to the distributionally robust risk in \autoref{thm:robustness_bound}. The previous results mentioned are all are formulated with respect to an empirical distribution, that is, an average of Dirac masses. Of course any finite set is compact, and so these empirical distributions satisfy the concentration assumption.

A simulation is in place demonstrating that the sum of the three gaps in \eqref{eq:inequality_thing} and \autoref{thm:robustness_bound,thm:adversarial_risk_is_robust_risk} is relatively low.
We randomly generated 100 Gaussian kernel classifiers $f = \sum_{i = 1}^{100}\gamma_i k(x^i, \cdot)$, with $x^i$ sampled from the MNIST dataset and $\gamma_i$ sampled uniformly from $[-2, 2]$. The bandwidth was set to the median of pairwise distances. In Figure \ref{fig:certificate_gap},
the $x$-axis is the adversarial risk (LHS of \eqref{eq:inequality_thing}) where the perturbation $\delta$ is bounded in $\ell_p$ ball and computed by PGD.
The $y$-axis is the Lipschitz regularised empirical risk (RHS of \eqref{eq:upperbound}).
The scattered dots lie closely to the diagonal, demonstrating that the above bounds are tight in practice.

\begin{figure*}[t]
	\centering
    \begin{minipage}{0.5\linewidth-0.5em}
        \subcaptionbox{$\nbr{\delta}_2 \le 3$\label{fig:certificate_gap_l2}}[0.5\linewidth-0.5em]{
            \includegraphics[clip=true, width=\linewidth, viewport = 4.8cm 8.5cm 16cm 19cm]{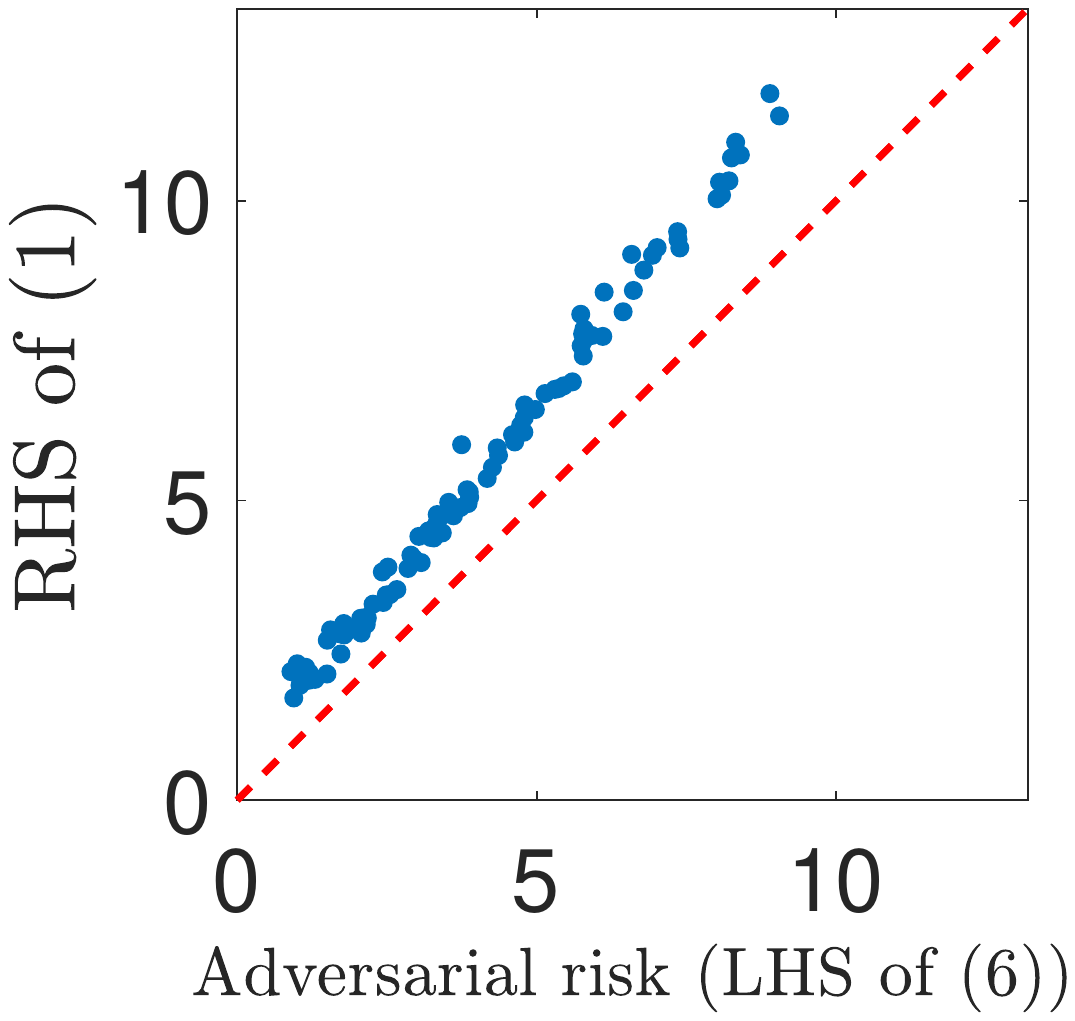}
        }
        \hfill
        \subcaptionbox{$\nbr{\delta}_\infty \le 0.3$\label{fig:certificate_gap_linf}}[0.5\linewidth-0.5em]{
            \includegraphics[clip=true, width=\linewidth, viewport = 4.8cm 8.5cm 16cm 19cm]{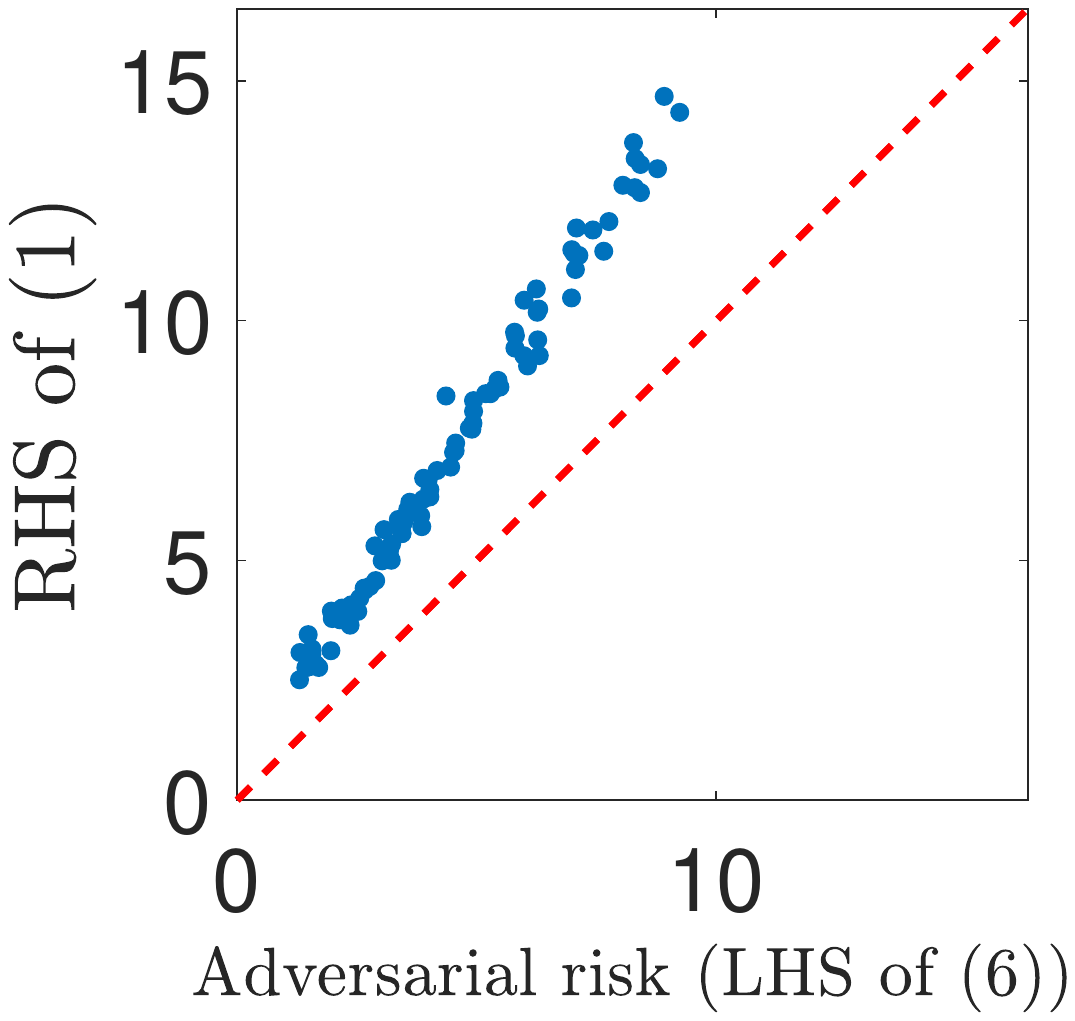}
        }
        \caption{Empirical evaluation of the sum of the gaps from \autoref{thm:robustness_bound,thm:adversarial_risk_is_robust_risk}.
        The Lipschitz constants $\sup_{x \in X} \nbr{\nabla f(x)}_q$ (left: $p=2$, right: $ p = \infty$, $1/p + 1/q = 1$) were estimated by BFGS.\label{fig:certificate_gap}}
    \end{minipage}
    \hfill
    \begin{minipage}{0.5\linewidth-0.5em}
        \centering
        \subcaptionbox{5-layer inv.\ kernel}[0.5\linewidth-0.5em]{
            \includegraphics[clip=true, width=\linewidth, viewport = 2.8cm 6.5cm 17.5cm 20.5cm]{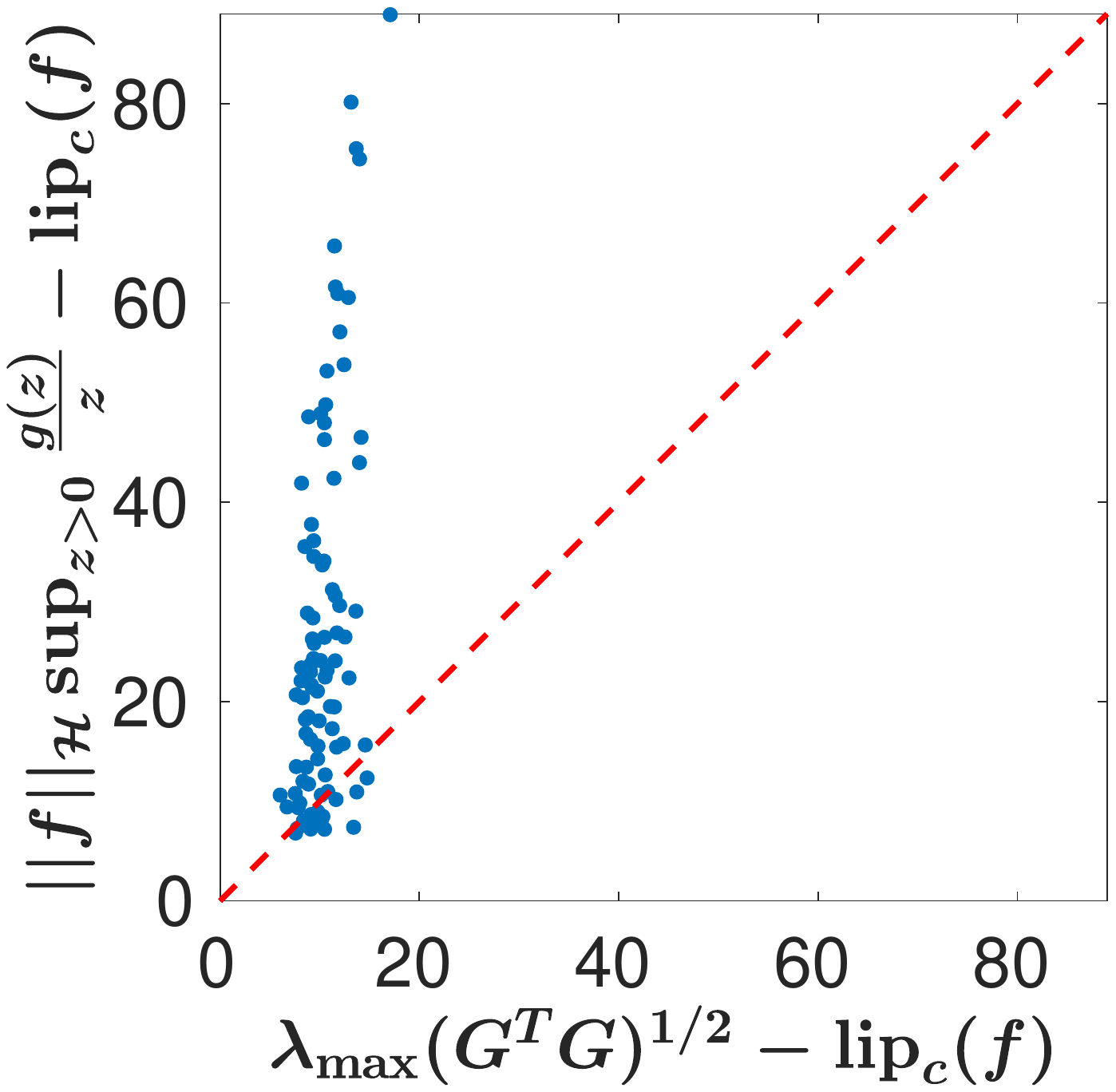}
            }
        \hfill
        \subcaptionbox{Gaussian kernel}[0.5\linewidth-0.5em]{
            \includegraphics[clip=true, width=\linewidth, viewport = 2.8cm 6.5cm 17.5cm 20.5cm]{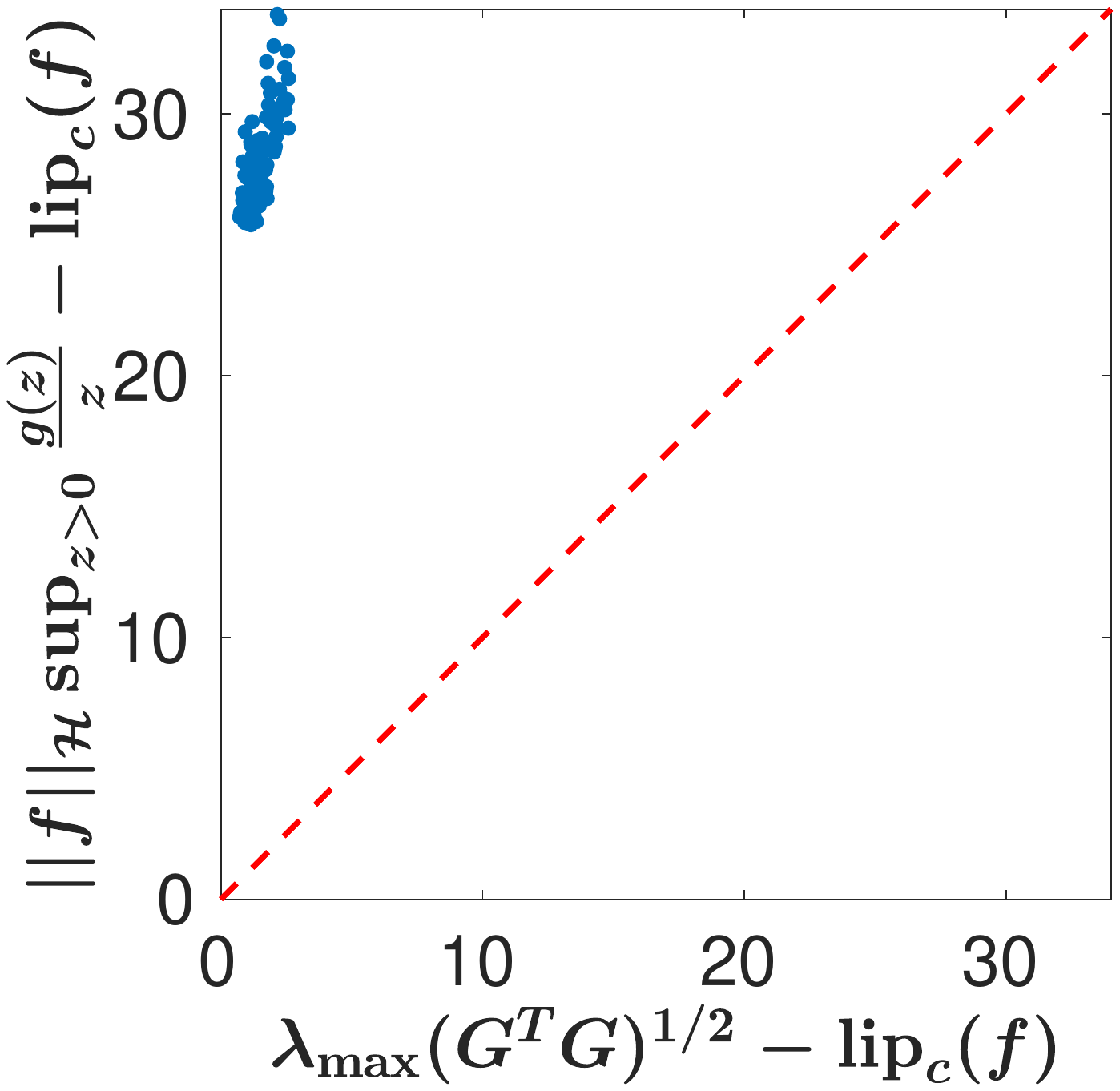}
            }
        \caption{Comparison of $\lambda_{\max} (G^\top G)$ and the RHS of \eqref{eq:growth_fun}, 
        as upper bounds for the Lipschitz constant.
        Smaller values are tighter.
        The 100 functions were sampled in the same way as in \autoref{fig:certificate_gap}.
        \label{fig:comp_bounds}}
    \end{minipage}
\end{figure*}

\let\oldnu\nu\let\nu\mu
\section{Lipschitz regularisation for kernel methods}
\label{sec:kernel}
\autoref{thm:robustness_bound,thm:adversarial_risk_is_robust_risk} 
open up a new path to optimising the adversarial risk \eqref{eq:adv_risk} by Lipschitz regularisation (RHS of \eqref{eq:boring_upperbound}),
where the upper bounding relationship is established through DRR.
In general, however, it is still hard to compute the Lipschitz constant for a nonlinear model.
Interestingly, we will show that for some types of kernels,
this can be done efficiently on functions in its RKHS.
Thanks to the known connections between kernel method and deep learning,
this technique will also potentially benefit the latter.
For example, $\ell_1$-regularised neural networks are compactly contained in the RKHS of multi-layer inverse kernels 
$k(x,y) = (2-x^\top y)^{-1}$ with $\nbr{x}_2 \le 1$ and $\nbr{y}_2 \le 1$
\citep[Lem.~1 and Thm.~1]{ZhaLeeJor16} and \citep{ShaShaSri11,ZhaLiaWai17}, 
and even possibly Gaussian kernels $k(x,y) = \exp(-\nbr{x-y}^2 / (2\sigma^2))$ \citep[][\S 5]{ShaShaSri11}.

Consider a Mercer's kernel $k$ on a convex domain $\Xcal \subseteq \R^d$,
with the corresponding RKHS denoted as $\Hcal$.
The standard kernel method seeks a discriminant function $f$ from $\Hcal$ with the conventional form of finite kernel expansion
$f(x) = \frac{1}{l} \sum_{a=1}^l \gamma_a k(x^a, \cdot)$,
such that the regularised empirical risk can be minimised with
the standard (hinge) loss and RKHS norm.
We start with real-valued $f$ for univariate output such as binary classification, and later extend it to multiclass.

Our goal here is to additionally enforce, while retaining a convex optimisation in $\vga \defeq \{\gamma_a\}$, that the Lipschitz constant of $f$ 
falls below a prescribed threshold $L > 0$,
which is equivalent to $\sup_{x \in \Xcal} \nbr{\nabla f(x)}_2 \le L$ thanks to the convexity of $X$.
A quick but primitive solution is to piggyback on the standard RKHS norm constraint $\nbr{f}_\Hcal \le C$,
in view that it already induces an upper bound on $\nbr{\nabla f(x)}_2$ as shown in Example 3.23 of \citet{Shafieezadeh-Abadeh_RegularizationMass_2019},
\begin{gather}
    \sup_{x \in \Xcal} \nbr{\nabla f(x)}_2 \ \le \ \nbr{f}_\Hcal \ \sup_{z > 0} \smallfrac{1}{z} g(z), \label{eq:growth_fun}
    \intertext{where }
    g(z) \ge \sup_{x, x' \in \Xcal: \nbr{x-x'}_2 = z}\nbr{k(x,\cdot) - k(x', \cdot)}_\Hcal.
\end{gather}
%
For Gaussian kernels,
$g(z) = \max\{\sigma^{-1},1\} z$.
For exponential and inverse kernels, $g(z) = z$ \citep{BieMai19}.
\citet{BieMaiCheetal19} justified that the RKHS norm of a neural network may serve as a surrogate for Lipschitz regularisation.
But the quality of such an approximation,
\ie, the gap in \eqref{eq:growth_fun}, 
can be loose as we will see later in \autoref{fig:comp_bounds}.
Besides, $C$ and $L$ are supposed to be independent parameters.

How can we tighten the approximation?
A natural idea is to directly bound the gradient norm at $n$ random  locations $\{w^s\}_{s=1}^n$ sampled \iid\ from $\Xcal$.
These are obviously convex constraints on $\vga$.
But how many samples are needed in order to ensure 
$\nbr{\nabla f(x)}_2 \le L + \epsilon$ for \emph{all} $x \in \Xcal$?
Unfortunately, as shown in \autoref{sec:app_exp_sample},
$n$ may have to grow exponentially by $1/\epsilon^d$ for a $d$-dimensional space.
Therefore we seek a more efficient approach by first slightly relaxing $\nbr{\nabla f(x)}_2$. 
Let $g_j(x) \defeq \partial^j f(x)$ be the partial derivative with respect to the $j$-th coordinate of $x$,
and $\partial^{i,j} k(x,y)$ be the partial derivative to $x_i$ and $y_j$.
$i$ or $j$ being 0 means no derivative.
Assuming $\sup_{x \in \Xcal} k(x,x) = 1$ and $g_j \in \Hcal$ 
(true for various kernels considered by \autoref{assum:tail_bnd,assum:uniform_bnd} below), we get a bound
\begin{align}
    \label{eq:norm_grad_by_rkhs}
    \MoveEqLeft[3]
    \sup_{x \in \Xcal} \nbr{\nabla f(x)}^2_2 
      \ = \ \sup_{x \in \Xcal} \smash{\sum\nolimits_{j=1}^d} \inner{g_j}{k(x,\cdot)}^2_\Hcal
    \\&\le \sup_{\phi: \nbr{\phi}_\Hcal = 1} \sum\nolimits_{j=1}^d \inner{g_j}{\phi}^2_\Hcal 
    \ =   \lambda_{\max} (G^\top G),
\end{align}
where $\lambda_{\max}$ evaluates the maximum eigenvalue,
and $G \defeq (g_1, \ldots, g_d)$.
The ``matrix'' is only a notation because each column is a function in $\Hcal$,
and obviously the $(i,j)$-th entry of $G^\top G$ is $\inner{g_i}{g_j}_\Hcal$.
%
Interestingly,
$\lambda_{\max} (G^\top G)$ delivers \emph{significantly lower} (\ie, tighter) value in approximating the Lipschitz constant $\sup_{x \in \Xcal} \nbr{\nabla f(x)}_2$,
compared with $\nbr{f}_\Hcal \max_{z > 0} \frac{g(z)}{z}$ from \eqref{eq:growth_fun}.
\autoref{fig:comp_bounds} compared these two approximants,
where $\lambda_{\max} (G^\top G)$ was computed from \eqref{eq:grad_coordinate} derived below, 
and the landmarks $\{w^s\}$ used the whole training set;
drawing more samples led to little difference.
The gap is smaller when the bandwidth $\sigma$ is larger,
making functions smoother.
To be fair, both Figure \ref{fig:certificate_gap} and \ref{fig:comp_bounds}
set $\sigma$ to the median of pairwise distances, 
a common practice.
	
Such a positive result motivated us to develop refined algorithms to address the only remaining obstacle to leveraging $\lambda_{\max} (G^\top G)$: 
no analytic form for computation.
Interestingly, it is readily approximable in both theory and practice. 
Indeed, the role of $g_j$ can be approximated by $\gtil_j$,
where 
$\gtil_j \in \R^n$ is the Nystr\"om approximation \citep{WilSee00b,DriMah05}:
\begin{align}
    \gtil_j &\defeq K^{-\frac12} g_j(w)^\top
    = (Z^\top Z)^{-\frac12}  Z^\top g_j,\label{eq:nystrom_approx}
\end{align}
noting $g_j(w^i) = \inner{g_j}{k(w^i,\cdot)}_\Hcal$, where
\begin{align}
    g_j(w) &\defeq (g_j(w^1), \ldots, g_j(w^n)), &  \Gtil &\defeq (\gtil_1, \ldots, \gtil_d),
    \\
    Z &\defeq (k(w^1,\cdot), \ldots, k(w^n, \cdot)), &    K &\defeq [k(w^i, w^{i'})]_{i,i'}.
\end{align}
%
%
So to ensure $\lambda_{\max} (G^\top G) \le L^2 + \epsilon$
intuitively we can resort to enforcing $\lambda_{\max} (\Gtil^\top \Gtil) \le L^2$,
which also retains the convexity in the constraint in $\vga$.
However, to guarantee $\epsilon$ error, 
the number of samples ($n$) required is generally \emph{exponential} \citep{Barron94}.
Fortunately, we will next show that $n$ can be reduced to \emph{polynomial} for quite a general class of kernels that possess some decomposed structure.

\subsection{A Nystr\"om approximation for product kernels}

A number of kernels factor multiplicatively over the coordinates,
such as periodic kernels \citep{MacKay98}, 
Gaussian kernels, and Laplacian kernels.
We will consider $k(x, y) = \prod_{j=1}^d k_0(x_j, y_j)$ 
where $\Xcal = \Xcal_0^d$ and $k_0$ is a base kernel on $\Xcal_0$.
Let the RKHS of $k_0$ be $\Hcal_0$, 
and let $\nu_0$ be a finite Borel measure with $\operatorname{supp}[\nu_0] = \Xcal_0$.
Periodic kernels have $k_0(x_j,y_j)  =  \exp\rbr1{-\sin \rbr{\frac{\pi}{v} (x_j  -  y_j)}^2 / (2\sigma^2)}$.

We emphasize that product kernels can induce vqery rich function spaces.
For example, Gaussian kernel is universal \citep{MicXuZha06}, 
meaning that its RKHS is \emph{dense} in the space of continuous functions in the maximum norm over any bounded domain.

The key benefit of this decomposition is that the derivative $\partial^{0,1} k(x,y)$ can be written as $\partial^{0,1} k_0(x_1, y_1) \prod_{j=2}^d k_0(x_j, y_j)$.
Since $k_0(x_j, y_j)$ can be easily dealt with, 
approximation will be needed \emph{only} for $\partial^{0,1} k_0(x_1, y_1)$.
Applying this idea to $g_1 = \frac{1}{l} \sum_{a=1}^l \gamma_a \partial^{0,1} k(x^a, \cdot)$,
we can derive
\begin{align}
    \nbr{g_1}^2_\Hcal 
    &= \frac{1}{l^2} \sum_{a,b = 1}^l\rbr3{\gamma_a \gamma_{b} \eta^{a,b}\prod_{j=2}^d k_0(x^{a}_j, x^{b}_j)},
    \label{eq:grad_coordinate}
    \\
    \inner{g_1}{g_2}_\Hcal 
    &= \frac{1}{l^2} \sum_{a,b = 1}^l\rbr3{\gamma_a \gamma_{b} \eta^{a,b}\prod_{j=2}^d k_0(x^{a}_j, x^{b}_j)},
\end{align}
where 
\begin{gather}
    \eta^{a,b} \defeq \inner{\partial^{0,1}k_0(x^a_1,\cdot)}{\partial^{0,1} k_0(x^{b}_1,\cdot)}_{\Hcal_0}.
\end{gather}
Thus the off-diagonal entries of $G^\top G$ can be computed exactly.
To approximate the diagonal, 
we sample $\{w^1_1, \ldots, w^n_1\}$ from $\nu_0$, and apply the Nystr\"om approximation of $\inner{\partial^{0,1} k_0(x^a_1,\cdot)}{\partial^{0,1} k_0(x^{b}_1,\cdot)}_{\Hcal_0}$:
\begin{align}
    \partial^{0,1} k_0(x^a_1,\cdot)^\top Z_1 \cdot (Z_1^\top Z_1)^{-1} \cdot Z_1^\top \partial^{0,1} k_0(x^{b}_1,\cdot),\label{eq:nystrom_coordinate}
\end{align}
where $Z_1 \defeq (k_0(w^1_1,\cdot), \ldots, k_0(w^n_1, \cdot))$, yielding 
\begin{align}
\label{eq:nystrom_coordinate_grad}
\MoveEqLeft[3]Z_1^\top \partial^{0,1} k_0(x^{a}_1,\cdot)
    \\&= (\partial^{0,1} k_0(x^{a}_1, w^1_1), \ldots, \partial^{0,1} k_0(x^{a}_1, w^n_1))^\top,
\end{align}
and similarly for $Z_1^\top \partial^{0,1} k_0(x^{b}_1,\cdot)$.
Denote this approximation of $G^\top G$ as $\Ptil_G$.
Clearly, $\lambda_{\max}(\Ptil_G) \le L^2$ is a convex constraint on $\vga$,
based on \iid\ samples 
$\{w_j^s | s \in [n], j \in [d]\}$ 
from $\nu_0$.
The overall \emph{convex} training algorithm is summarised in \autoref{sec:algo_detail},
along with detailed derivations.

\subsection{General sample complexity and assumptions}
 
Finally, it is important to analyse how many samples $w^s_j$ are needed, 
such that with high probability
\begin{gather}
    \lambda_{\max}(\Ptil_G) \le L^2  
    \implies
    \lambda_{\max} (G^\top G) \le L^2 + \epsilon .
\end{gather}
Fortunately, product kernels only require approximation bounds for each coordinate,
making the sample complexity immune to the exponential growth in the dimensionality $d$.
Specifically, we first consider base kernels $k_0$ with a scalar input, 
\ie, $\Xcal_0 \subseteq \R$.
%
Recall from \citet[\S4]{steinwart2008support}  that the integral operator for $k_0$ and $\nu_0$ is $ T_{k_0} \defeq I \circ S_{k_0}$, where $S_{k_0}: \leb_2(\Xcal_0, \nu_0) \to \contf(\Xcal_0)$ operates according to 
\begin{gather}
\label{eq:def_Tk}
    \forall{f \in \leb_2(\Xcal_0, \nu_0)} (S_{k_0} f)(x) \defeq \int k_0(x,y) f(y)\nu_0(\dv y), 
\end{gather}
and $I$: $\contf(\Xcal_0) \hookrightarrow \leb_2(\Xcal_0, \nu_0)$ is the inclusion operator.
By the spectral theorem, 
if $T_{k_0}$ is compact, 
then there is an at most countable orthonormal set $\{\etil_j\}_{j \in J}$ of $\leb_2(\Xcal_0, \nu_0)$ and $\{\lambda_j\}_{j \in J}$ with $\lambda_1 \ge \lambda_2 \ge \ldots > 0$ such that 
$T_{k_0} f = \sum_{j \in J} \lambda_j \inner{f}{\etil_j}_{\leb_2(\Xcal_0, \nu_0)} \etil_j$ for all $f \in \leb_2(\Xcal_0, \nu_0)$.
It is easy to see that $\varphi_j \defeq  \sqrt{\lambda_j} e_j$ is an orthonormal basis of $\Hcal_0$ \citep{steinwart2008support}.

Our proof is built upon the following two assumptions on the base kernel.
The first one asserts that fixing $x$, the energy of $k_0(x,\cdot)$ and $\partial^{0,1} k_0(x,\cdot)$ ``concentrates'' on the leading eigenfunctions.
\begin{assumption}
	\label{assum:tail_bnd}
	Suppose $k_0(x,x) = 1$ and
	$\partial^{0,1} k_0(x, \cdot) \in \Hcal_0$ for all $x \in \Xcal_0$. 
	For all $\epsilon > 0$, there exists $N_\epsilon \in \NN$ such that the tail energy of $\partial^{0,1} k_0(x, \cdot)$ beyond the $N_\epsilon$-th eigenpair is less than $\epsilon$, 
	uniformly for all $x \in \Xcal_0$.
    That is, denoting $\Phi_m \defeq  (\varphi_1, \ldots, \varphi_m)$, for all $x\in X_0$:
    \begin{align}
        \nbr{\partial^{0,1} k_0(x, \cdot) - \Phi_m \Phi_m^\top \partial^{0,1} k_0(x, \cdot)}_{\Hcal_0} &< \epsilon, \text{ and} \\
        \nbr{k_0(x,\cdot) - \Phi_m \Phi_m^\top k_0(x,\cdot)}_{\Hcal_0} &< \epsilon.
    \end{align}
\end{assumption}

The second assumption asserts the smoothness and range of eigenfunctions \emph{in a uniform sense}.
\begin{assumption}
	\label{assum:uniform_bnd}
	Under \autoref{assum:tail_bnd}, 
	$\{e_j(x) : j \in N_\epsilon\}$ is uniformed bounded over $x \in X_0$,
	and 
	the RKHS inner product of $\partial^{0,1} k_0(x, \cdot)$ with 
	$\{e_j : j \in N_\epsilon\}$ is also uniformly bounded over $x \in \Xcal_0$:
	\begin{align}
        M_\epsilon &\defeq  \sup_{x \in \Xcal_0} \max_{j \in [N_\epsilon]}
            \abr{\inner{\partial^{0,1} k_0(x, \cdot)}{e_j}_{\Hcal_0}}< \infty,
        \\Q_\epsilon &\defeq  \sup_{x \in \Xcal_0} \max_{j \in [N_\epsilon]} \abr{e_j(x)}< \infty.
	\end{align}
\end{assumption}
%
\begin{theorem}
	\label{thm:sample_complexity_informal}
	Suppose $k_0$, $\Xcal_0$, and $\nu_0$ satisfy Assumptions \ref{assum:tail_bnd} and \ref{assum:uniform_bnd}.
	Let $\{w^s_j: s\in [n], j \in [d]\}$ be sampled i.i.d. from $\nu_0$.
	%
	Then for any $f$ whose coordinate-wise Nystr\"om approximation \eqref{eq:grad_coordinate} and \eqref{eq:nystrom_coordinate} satisfy $\lambda_{\max}(\Ptil_G) \le L^2$, 
	the Lipschitz condition 
	$\lambda_{\max} (G^\top G) \le L^2 + \epsilon$ is met with probability $1-\delta$,
	as long as $n \ge \tilde{\Theta}\rbr{\frac{1}{\epsilon^2} N^2_\epsilon M_\epsilon^2 Q^2_\epsilon \log \frac{d N_\epsilon}{\delta}}$,
	almost independent of $d$.
	Here $\tilde{\Theta}$ hides all poly-log terms.
\end{theorem}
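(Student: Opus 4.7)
The plan is to pass from the approximation-quality of $\tilde P_G$ to an eigenvalue bound via Weyl's inequality: since $\lambda_{\max}(G^\top G) \le \lambda_{\max}(\tilde P_G) + \|G^\top G - \tilde P_G\|_{\mathrm{op}}$, it suffices to show $\|G^\top G - \tilde P_G\|_{\mathrm{op}} \le \epsilon$ with probability $1-\delta$. Crucially, under the product-kernel structure the off-diagonal entries of $G^\top G$ are computed in closed form (they involve $\partial^{0,1}k_0(x^a_i, x^b_i)$-type evaluations), so the error matrix $E := G^\top G - \tilde P_G$ is diagonal and $\|E\|_{\mathrm{op}} = \max_{j\in[d]}|E_{jj}|$. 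A union bound over $j$ immediately reduces the task to a per-coordinate analysis, giving the $\log d$ factor in the sample complexity.

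Next I would unpack the per-coordinate error. Let $P_j$ denote the $\mathcal{H}_0$-orthogonal projection onto $\mathrm{span}\{k_0(w^s_j,\cdot):s\in[n]\}$, and write $u^a_j := \partial^{0,1}k_0(x^a_j,\cdot)$. Orthogonality of $P_j$ gives the clean identity
\begin{equation*}
    \eta^{a,b}_j - \tilde\eta^{a,b}_j = \langle (I-P_j)u^a_j,\, (I-P_j)u^b_j\rangle_{\mathcal{H}_0}.
\end{equation*}
Substituting into the expression for $E_{jj}$ and using $|k_0|\le 1$ together with Cauchy--Schwarz yields
\begin{equation*}
    |E_{jj}| \le \Bigl(\tfrac{1}{l}\sum_a |\gamma_a|\Bigr)^{2}\sup_{x\in\Xcal_0}\|(I-P_j)\partial^{0,1}k_0(x,\cdot)\|_{\mathcal{H}_0}^{2}.
\end{equation*}
The prefactor is absorbed into the RKHS-norm budget of the model, so the whole analysis pivots on bounding $\sup_{x}\|(I-P_j)\partial^{0,1}k_0(x,\cdot)\|_{\mathcal{H}_0}$ uniformly.

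To handle this supremum I would split it with the triangle inequality through the rank-$N_\epsilon$ spectral projector $\Pi_{N_\epsilon} := \Phi_{N_\epsilon}\Phi_{N_\epsilon}^\top$:
\begin{equation*}
    \|(I-P_j)u\|_{\mathcal{H}_0} \le \|(I-\Pi_{N_\epsilon})u\|_{\mathcal{H}_0} + \|(I-P_j)\Pi_{N_\epsilon}u\|_{\mathcal{H}_0}.
\end{equation*}
Assumption~\ref{assum:tail_bnd} controls the first term uniformly by $\epsilon$. For the second, note $(I-P_j)\Pi_{N_\epsilon}u$ lives in the finite-dimensional subspace $\mathrm{span}\{\varphi_1,\ldots,\varphi_{N_\epsilon}\}$, so it suffices to show that each $\varphi_m$ ($m\le N_\epsilon$) is well approximated by the span of the samples. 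This I would establish via the standard Nyström operator-concentration route: the empirical second-moment operator $\hat T_j := \frac{1}{n}\sum_s k_0(w^s_j,\cdot)\otimes k_0(w^s_j,\cdot)$ converges to $T_{k_0}$ in operator norm by a matrix-Bernstein inequality whose variance is controlled by $Q_\epsilon$ (Assumption~\ref{assum:uniform_bnd}); a Davis--Kahan style perturbation argument then shows $\Pi_{N_\epsilon}$ is close to the projector onto the range of $\hat T_j$, which is contained in $\mathrm{range}(P_j)$. The coefficient bound $M_\epsilon$ then converts the subspace-perturbation into a uniform bound on $\|(I-P_j)\Pi_{N_\epsilon}u\|_{\mathcal{H}_0}$.

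The hard part is obtaining the uniformity in $x$: we need the Nyström approximation to work for \emph{every} query simultaneously, which cannot come from a direct Hoeffding/Bernstein bound on a scalar. The trick is that Assumption~\ref{assum:tail_bnd} lets us reduce from the infinite-dimensional supremum to a finite ($N_\epsilon$-dimensional) one, and Assumption~\ref{assum:uniform_bnd} then supplies the uniform envelopes ($M_\epsilon$ for the coefficients of $\partial^{0,1}k_0(x,\cdot)$ in the basis $\{\varphi_m\}$, $Q_\epsilon$ for the pointwise size of the eigenfunctions) needed to apply matrix Bernstein and the perturbation argument uniformly. Combining: $\|T_{k_0}-\hat T_j\|_{\mathrm{op}} \lesssim Q_\epsilon^2\sqrt{\log(N_\epsilon/\delta)/n}$ and the propagated error picks up a factor of $N_\epsilon M_\epsilon$ from the coefficient budget, so setting the total error to $\epsilon$ and inverting gives $n = \tilde\Theta(\epsilon^{-2} N_\epsilon^2 M_\epsilon^2 Q_\epsilon^2 \log(d N_\epsilon/\delta))$ after the final union bound over $d$ coordinates, as claimed.
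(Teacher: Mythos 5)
Your reduction via Weyl's inequality, the observation that the error matrix $G^\top G - \Ptil_G$ is diagonal (giving the $\log d$ union bound), the orthogonal-projection identity $\eta^{a,b} - \tilde\eta^{a,b} = \langle(I-P_j)u^a_j,\,(I-P_j)u^b_j\rangle_{\Hcal_0}$ with Cauchy--Schwarz, and the triangle-inequality split through the spectral projector $\Pi_{N_\epsilon}=\Phi_{N_\epsilon}\Phi_{N_\epsilon}^\top$ are all sound and consistent with the paper's proof (Theorem~\ref{thm:sample_prod_ker_1d} and Corollary~\ref{cor:sample_prod_ker_1d} in the appendix). The genuine departure, and the gap, is in how you bound the residual $\sup_x\|(I-P_j)\Pi_{N_\epsilon}\partial^{0,1}k_0(x,\cdot)\|_{\Hcal_0}$.

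You propose to concentrate the \emph{unnormalized} empirical operator $\hat T_j$ to $T_{k_0}$ in operator norm and then argue Davis--Kahan style that $\mathrm{range}(\Pi_{N_\epsilon})$ is approximately contained in $\mathrm{range}(P_j)=\mathrm{range}(\hat T_j)$. Any such subspace-perturbation step requires $\|T_{k_0}-\hat T_j\|_{\mathrm{op}}$ to be small \emph{relative to $\lambda_{N_\epsilon}$}: since $\hat T_j(I-P_j)=0$, taking $v=\varphi_i$ and pairing $(T_{k_0}-\hat T_j)v$ against $(I-P_j)v$ gives $\|(I-P_j)\varphi_i\|_{\Hcal_0}\le\|T_{k_0}-\hat T_j\|_{\mathrm{op}}/\lambda_i$, so you need $\|T_{k_0}-\hat T_j\|_{\mathrm{op}}\ll\lambda_{N_\epsilon}$. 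But Assumptions~\ref{assum:tail_bnd} and~\ref{assum:uniform_bnd} supply \emph{no lower bound} on $\lambda_{N_\epsilon}$, and in the paper's worked examples (periodic and Gaussian kernels) the eigenvalues decay geometrically, so $\lambda_{N_\epsilon}^{-1}$ is exponential in $N_\epsilon$; your route would therefore not recover $n=\tilde\Theta\rbr{\epsilon^{-2}N_\epsilon^2M_\epsilon^2Q_\epsilon^2\log(dN_\epsilon/\delta)}$. A related symptom is your claim that $Q_\epsilon$ controls the Bernstein variance for $\hat T_j$: the unnormalized rank-one summands $k_0(w,\cdot)\otimes k_0(w,\cdot)$ have operator norm $k_0(w,w)=1$, so $Q_\epsilon$ does not actually enter there. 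The paper's proof avoids the issue by never comparing eigenspaces of $\hat T_j$: it exhibits an explicit coefficient vector $\alpha=n^{-1/2}VU^\top\Lambda_m^{-1/2}\Phi_m^\top h$ (with $U\Sigma V^\top$ the SVD of $\Lambda_m^{-1/2}\Phi_m^\top Z$), bounds $\|(I-P_j)h\|_{\Hcal_0}\le\|Z\alpha-h\|_{\Hcal_0}$ via a three-term decomposition, and controls the middle term by applying matrix Bernstein to the \emph{whitened} Gram matrix $R=n^{-1}\Lambda_m^{-1/2}\Phi_m^\top ZZ^\top\Phi_m\Lambda_m^{-1/2}$, whose expectation is $I_m$ regardless of the eigenvalue scale. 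The whitening is exactly what keeps $\lambda_{N_\epsilon}^{-1}$ out of the complexity, and it is where $Q_\epsilon$ genuinely arises (the whitened summands have norm at most $N_\epsilon Q_\epsilon^2/n$). If you wish to keep your perturbation framing you would have to whiten first---compare $\Lambda_m^{-1/2}\Phi_m^\top(\hat T_j - T_{k_0})\Phi_m\Lambda_m^{-1/2}$ to zero---at which point the argument essentially collapses into the paper's.
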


The $\log d$ dependence on dimensionality $d$ is interesting,
but not surprising.
After all, only the diagonal entries of $G^\top G$ need approximation, and the quantity of interest is its $\lambda_{\max}$.

\paragraph{Satisfaction of assumptions}
In \autoref{sec:assump_check_periodic} and \ref{sec:assump_check_gauss}, 
we will show that for periodic kernel and Gaussian kernel, 
Assumptions \ref{assum:tail_bnd} and \ref{assum:uniform_bnd} hold true with $\tilde{O}(1)$ values of  
$N_\epsilon$, $M_\epsilon$, and $Q_\epsilon$.
It remains open whether non-product kernels such as inverse kernel also enjoy this polynomial sample complexity.
\autoref{sec:assump_check_inverse} suggests that its complexity may be \emph{quasi-polynomial}.

\let\nu\oldnu
\section{Experimental results}
\begin{figure*}[t]
	\centering
	\subcaptionbox{MNIST\label{fig:mnist_l2_cw}}[.33\linewidth-0.5em]{
		\includegraphics[clip=true, width=\linewidth, viewport = 3.5cm 9.5cm 17.7cm 17.5cm]{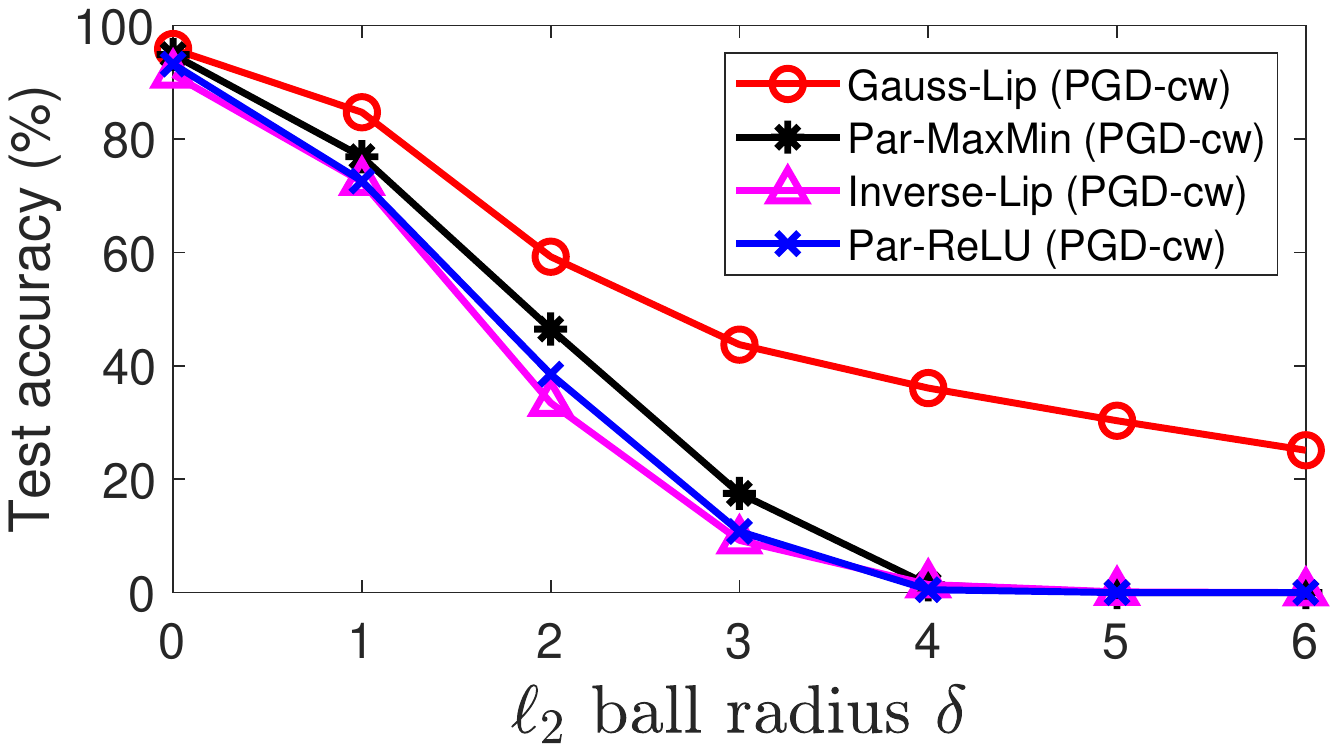}
        }
    \hfill
	\subcaptionbox{Fashion-MNIST\label{fig:fashion_l2_cw}}[.33\linewidth-0.5em]{
		\includegraphics[clip=true, width=\linewidth, viewport = 3.5cm 9.5cm 17.7cm 17.5cm]{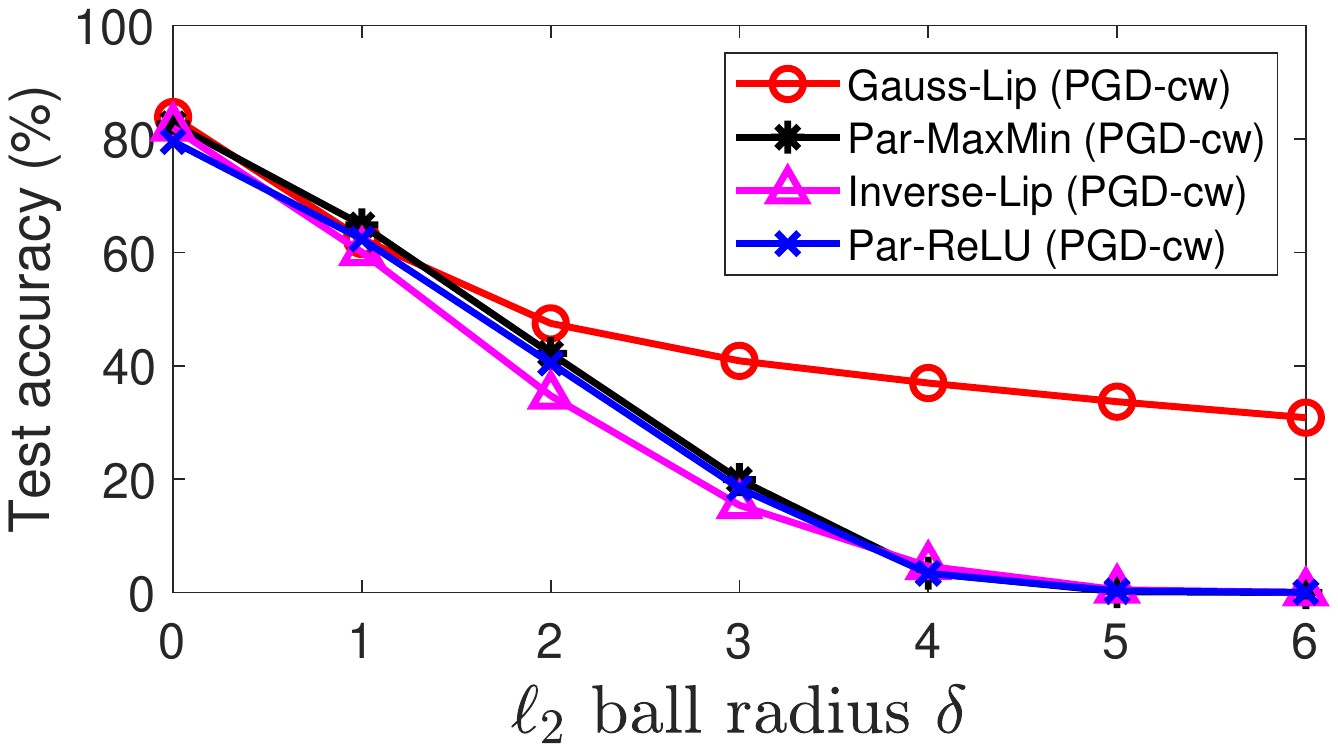}
        }
    \hfill
	\subcaptionbox{CIFAR10\label{fig:cifar_l2_cw}}[.33\linewidth-0.5em]{
		\includegraphics[clip=true, width=\linewidth, viewport = 3.5cm 9.5cm 17.7cm 17.5cm]{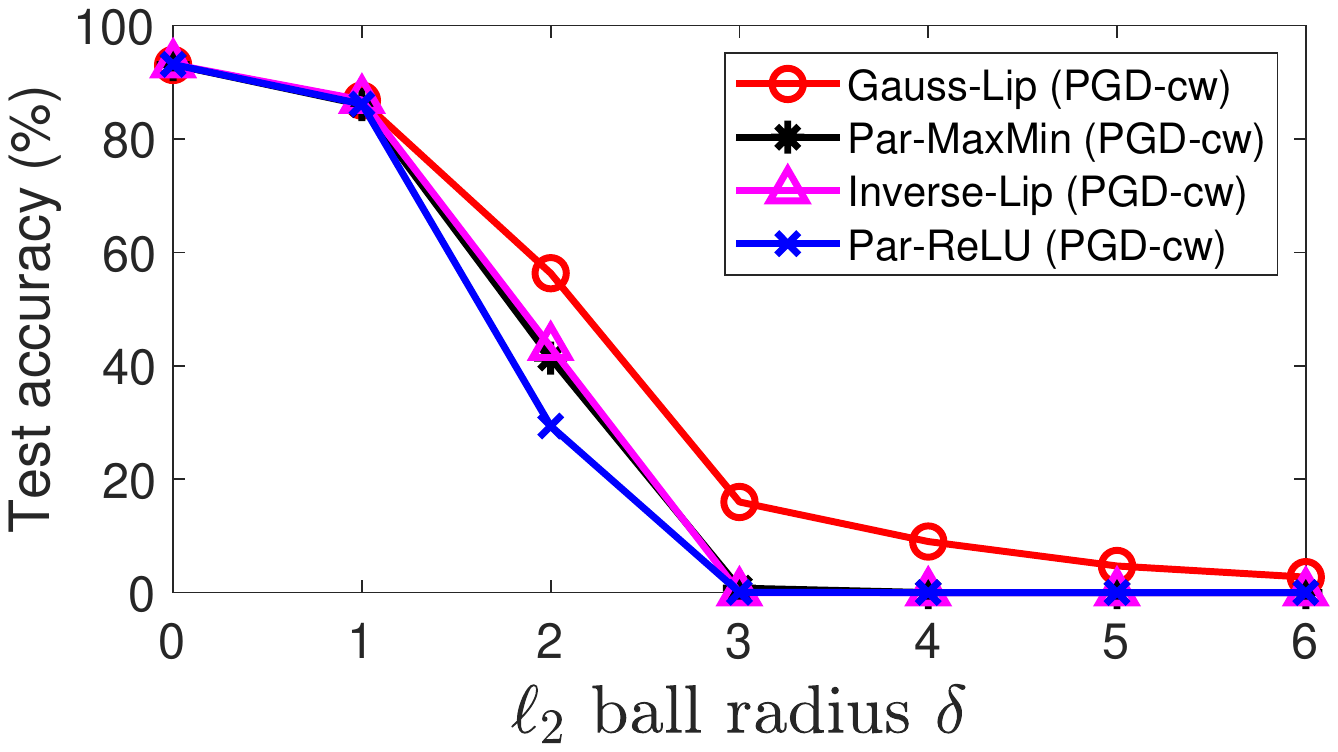}
		}
    \caption{Test accuracy under PGD attacks on the C\&W approximation with $2$-norm norm bound\label{fig:robustness_l2_cw}}
    \vspace{\abovecaptionskip}\par
	\subcaptionbox{MNIST\label{fig:mnist_linf_cw}}[.33\linewidth-0.5em]{
		\includegraphics[clip=true, width=\linewidth, viewport = 3.5cm 9.5cm 17.7cm 17.5cm]{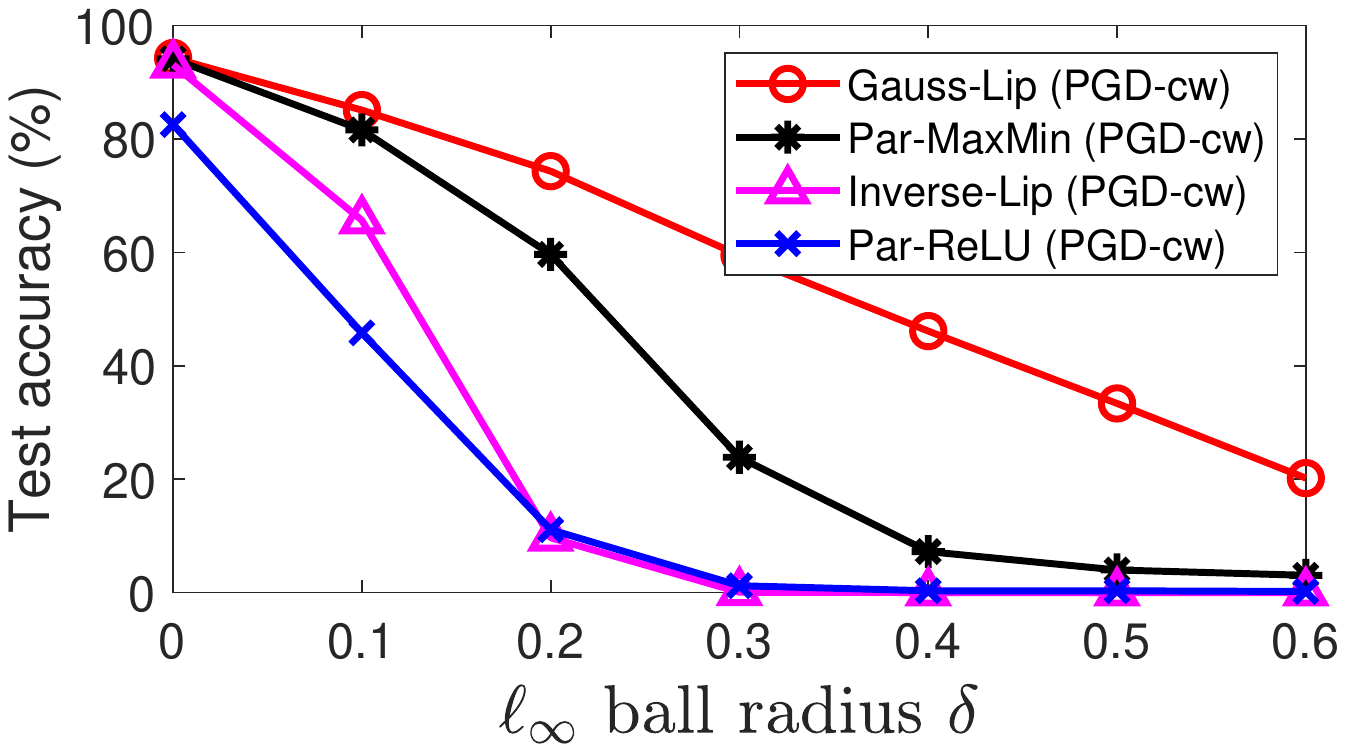}
        }
    \hfill
	\subcaptionbox{Fashion-MNIST\label{fig:fashion_linf_cw}}[.33\linewidth-0.5em]{
		\includegraphics[clip=true, width=\linewidth, viewport = 3.5cm 9.5cm 17.7cm 17.5cm]{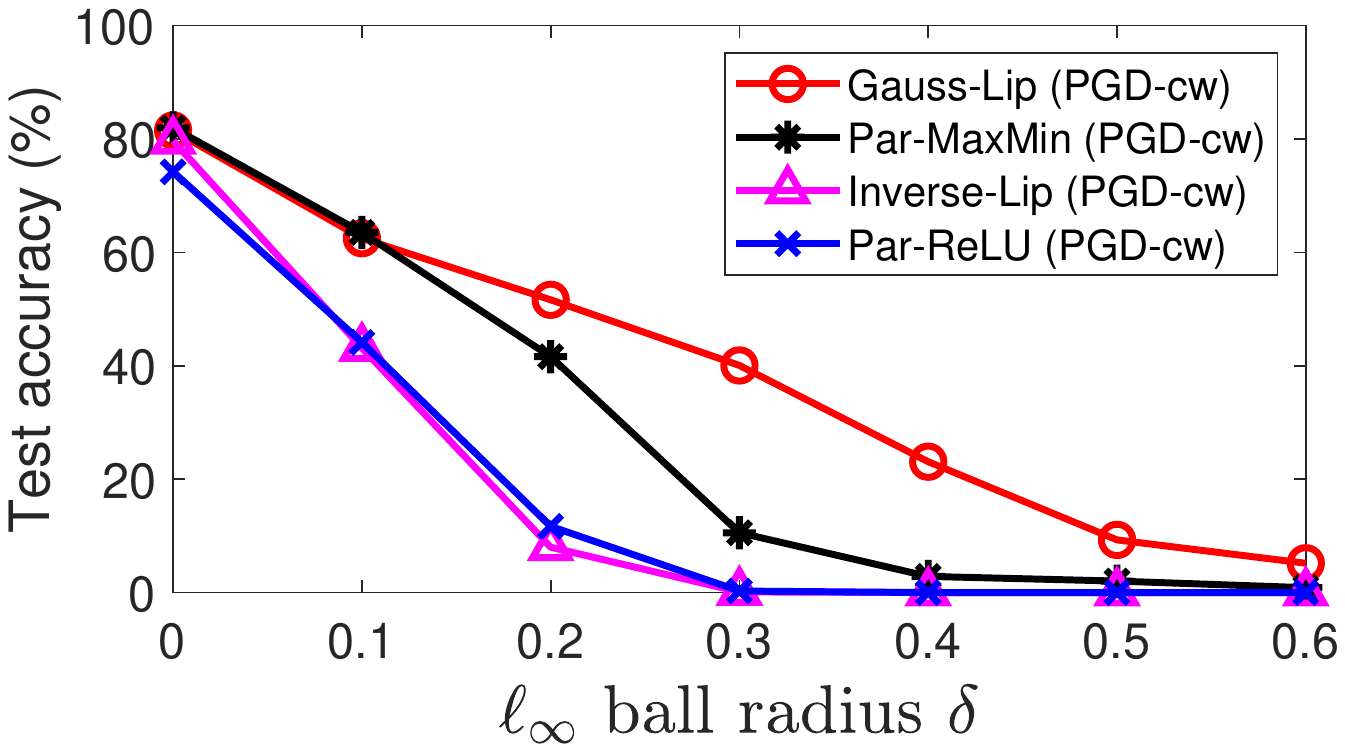}
        }
    \hfill
	\subcaptionbox{CIFAR10\label{fig:cifar_linf_cw}}[.33\linewidth-0.5em]{
		\includegraphics[clip=true, width=\linewidth, viewport = 3.5cm 9.5cm 17.7cm 17.5cm]{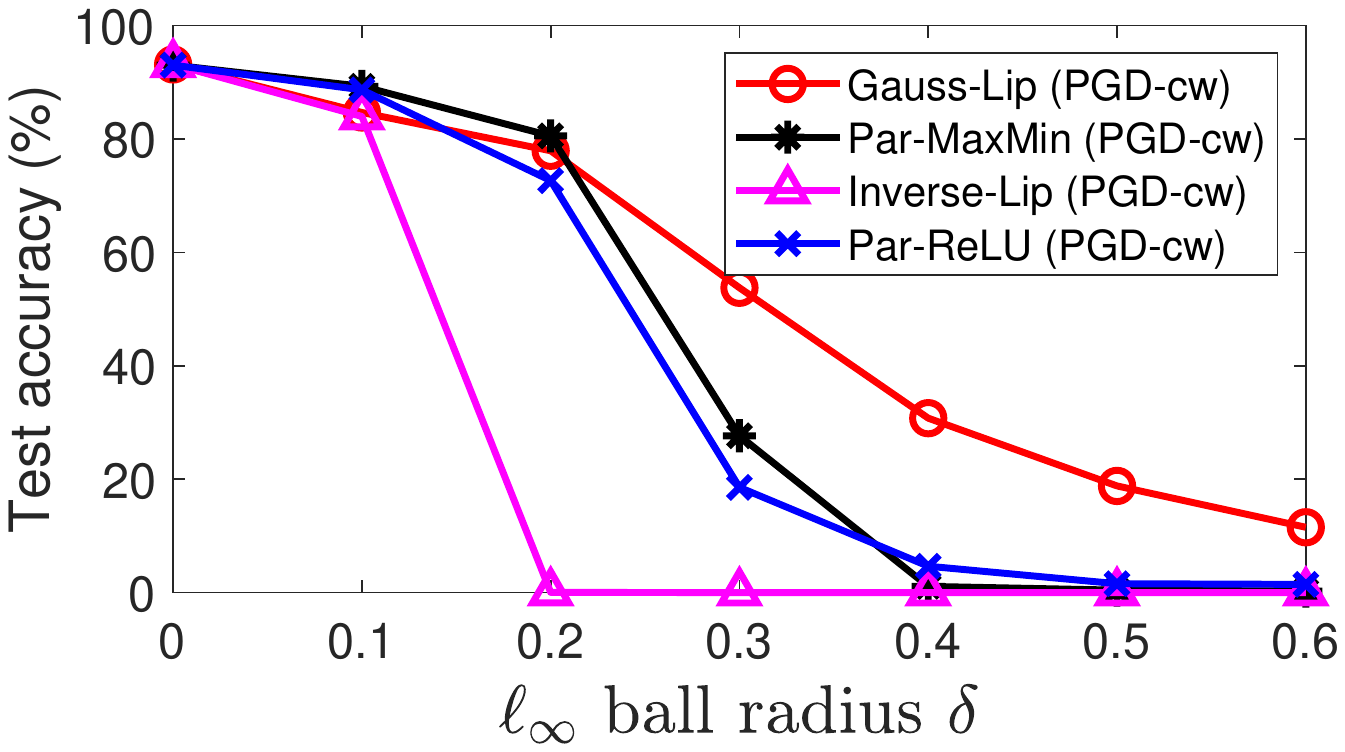}
		}
	\caption{Test accuracy under PGD attacks on the C\&W approximation with $\infty$-norm norm bound\label{fig:robustness_linf_cw}}
\end{figure*}

\label{sec:experiment}

We studied the empirical robustness and accuracy of the proposed Lipschitz regularisation technique for adversarial training of kernel methods,
under both Gaussian kernel and inverse kernel.
Comparison will be made with state-of-the-art defence algorithms under effective attacks.

\paragraph{Datasets}
We tested on three datasets: MNIST, Fashion-MNIST, and CIFAR10.
The number of training/validation/test examples for the three datasets are
54k/6k/10k, 54k/6k/10k, 45k/5k/10k, respectively.
Each image in MNIST and Fashion-MNIST is represented as a 784-dimensional feature vector, with each feature/pixel normalised to $[0,1]$. 
For CIFAR10, we trained it on a residual network to obtain a 512-dimensional feature embedding, which were subsequently normalised to $[0,1]$. 
They were used as the input for training all the competing algorithms and were subject to attack.

\paragraph{Attacks}
To evaluate the robustness of the trained model,
we attacked them on test examples using the random initialized Projected Gradient Descent method with 100 steps \citep[PGD,][]{Madry_DeepLearning_2018} under two losses: cross-entropy and C\&W loss \citep{Carlini_EvaluatingRobustness_2017}.
The perturbation $\delta$ was constrained in an $2$-norm or $\infty$-norm ball. 
To evaluate robustness, 
we scaled the perturbation bound $\delta$ 
from $0.1$ to $0.6$ for $\infty$-norm norm,
and from $1$ to $6$ for $2$-norm norm 
(when $\delta = 6$, the average magnitude per coordinate is 0.214).
We normalised gradient and fine-tuned the step size.

\paragraph{Algorithms}

We compared four training algorithms. 
The Parseval network orthonormalises the weight matrices to enforce the Lipschitz constant \citep{Cisse_ParsevalNetworks_2017}. 
We used three hidden layers of $1024$ units and ReLU activation (\parrelu). %
Also considered is the Parseval network with MaxMin activations (\parmaxmin), 
which enjoys much improved robustness \citep{Anil_SortingOut_2019}.
Both algorithms can be customised for $2$-norm or $\infty$-norm attacks,
and were trained under the corresponding norms.
Using multi-class hinge loss, 
they constitute strong baselines for adversarial learning.
We followed the code from \citet{LNetscode} with $\beta = 0.5$, 
which is equivalent to the first order Bjorck algorithm. 
The final upper bound of Lipschitz constant computed from the learned weight matrices satisfied the orthogonality constraint as shown in Figure 13 of \citet{Anil_SortingOut_2019}.

Both Gaussian and inverse kernel machines applied Lipschitz regularisation by randomly and greedily selecting $\{w^s\}$,
and they will be referred to as \glip\ and \ilip, respectively.
In practice, \glip\ with the coordinate-wise Nystr\"om approximation ($\lambda_{\max}(\Ptil_G)$ from \eqref{eq:nystrom_coordinate}) can approximate $\lambda_{\max}(G^\top G)$ with a much smaller number of sample than if using the holistic approximation as in \eqref{eq:nystrom_approx}.
Furthermore, we found an even more efficient approach.
Inside the iterative training algorithm,
we used L-BFGS to find the input that yields the steepest gradient under the current solution,
and then added it to the set $\{w^s\}$ (which was initialized with 15 random points).
Although L-BFGS is only a local solver,
this greedy approach empirically reduces the number of samples by an order of magnitude.
See the empirical convergence results in \autoref{sec:convergence}.
Its theoretical analysis is left for future investigation.
We also applied this greedy approach to \ilip.

\paragraph{Extending binary kernel machines to multiclass}

The standard kernel methods learn a discriminant function $f^c \defeq \sum_a \gamma^c_a k(x^a, \cdot)$ for each class $c \in [10]$,
based on which a large supply of multiclass classification losses can be applied, 
\eg, CS \citep{crammer2001algorithmic} which was used in our experiment.
Since the Lipschitz constant of the mapping from $\{f^c\}$ to a real-valued loss 
is typically at most 1,
it suffices to bound the Lipschitz constant of $x \mapsto (f^1(x),\ldots,f^{10}(x))^\top$ by $\max_x \lambda_{\max} (G(x) G(x)^\top)$, which is upper bounded by
\begin{align}
        \max_{\nbr{\phi}_\Hcal = 1} \lambda_{\max} \rbr4{\sum_{c=1}^{10} G_c^\top \phi \phi^\top G_c}
        ≤  L^2,\label{eq:lip_multiclass}
\end{align}
where $G_c  \defeq  (g^c_1, \cdots\! , g^c_d)$, and 
\begin{align}
    \quad G(x) 
    &\defeq  [\nabla f^1(x), \cdots , \nabla f^{10}(x)]
    \\&= [G_1^\top k(x,\cdot), \cdots, G_{10}^\top k(x,\cdot)].
\end{align}
The last term in \eqref{eq:lip_multiclass} can be approximated using the same technique as in the binary case. Furthermore, the principle can be extended to $\infty$-norm attacks,
whose details are relegated to \autoref{sec:algo_multiclass}.

\paragraph{Parameter selection}
We used the same parameters as in \citet{Anil_SortingOut_2019} for training \parrelu\ and \parmaxmin. 
To defend against $2$-norm attacks, 
we set $L = 100$ for all algorithms.
\glip\ achieved high accuracy and robustness on the validation set with bandwidth $\sigma=1.5$ for FashionMNIST and CIFAR-10, and $\sigma=2$ for MNIST. 
To defend against $\infty$-norm attacks, we set $L = 1000$ for all the four methods as in \citet{Anil_SortingOut_2019}. The best $\sigma$ for \glip\ is 1 for all datasets. 
\ilip\ used 5 stacked layers.  

\paragraph{Results}
\autoref{fig:robustness_l2_cw,fig:robustness_linf_cw} show how the test accuracy decays as an increasing amount of perturbation ($\delta$) in $2$-norm and $\infty$-norm norm is added to the test images, respectively.
%
%
Clearly \glip\ achieves higher accuracy and robustness than \parrelu\ and \parmaxmin\ on the three datasets,
under both $2$-norm and $\infty$-norm bounded PGD attacks with C\&W loss. 
In contrast, \ilip\ only performs similarly to \parrelu. 
Interestingly, $2$-norm based \parmaxmin\ are only slightly better than \parrelu\ under $2$-norm attacks,
although the former does perform significantly better under $\infty$-norm attacks.

The results for cross-entropy PGD attacks are deferred to \autoref{fig:robustness_l2_crossent,fig:robustness_linf_crossent} in \autoref{sec:cw_result}.
Here cross-entropy PGD attackers find stronger attacks to Parseval networks but not to our kernel models.
Our \glip\ again significantly outperforms \parmaxmin\ on all the three datasets and under both $2$-norm and $\infty$-norm norms.
The improved robustness of \glip\ does not seem to be attributed to the obfuscated (masked) gradient \citep{AthCarWag18},
because as shown \autoref{fig:robustness_l2_cw,fig:robustness_linf_cw,fig:robustness_l2_crossent,fig:robustness_linf_crossent},
increased distortion bound does increase attack success, 
and unbounded attacks drive the success rate to very low.
In practice, we also observed that random sampling finds much weaker attacks, and taking 10 steps of PGD is much stronger than one step.

\ificml\else
\paragraph{Visualization}
The gradient with respect to inputs is plotted in \autoref{fig:grad_visualization} for $2$-norm trained \parmaxmin\ and \glip.  
The $i$-th row and $j$-th column corresponds to the targeted attack of turning the original class $j$ into a new class $i$,
hence the gradient is on the cross-entropy loss with class $i$ as the ground truth.
%
These two figures also explained why \glip\ is more robust than \parmaxmin: the attacker can easily reduce the targeted cross-entropy loss by following the gradient as shown in \autoref{fig:vis_maxmin}, and hence successfully attack \parmaxmin. 
In contrast, the gradient shown in \autoref{fig:vis_gauss} does not  provide much information on how to flip the class.
\fi

\paragraph{Obfuscated gradient}
To further illustrate the property of \glip\ trained models, we visualised ``large perturbation'' adversarial examples with the $2$-norm norm bounded by 8.
\autoref{fig:crossent_PGD100_Delta6_new} in \autoref{sec:visualization} shows the result of running PGD attack for 100 steps on \glip\ trained model using 
(\textbf{targeted}) cross-entropy approximation.  
On a randomly sampled set of 10 images from MNIST, 
PGD successfully turned all of them into any target class by following the gradient. 
We further ran PGD on C\&W approximation in \autoref{fig:CW_PGD100_Delta8_new}, 
and this \textbf{untargeted} attack succeeds on all 10 images.
In both cases, the final images are quite consistent with human’s perception.

\section{Conclusion}
\ificml
\begin{minipage}{\linewidth}
    Risk minimisation can fail to be optimal when there is some misspecification of the distribution, such as when working with its empirical counterpart. Therefore we must turn to other techniques in order to ensure stability when learning a model. The robust Bayes framework provides a systematic approach to these problems, however it leaves open the choice as to which uncertainty set is most appropriate. We show that in many cases, the popular Lipschitz regularisation corresponds to robust Bayes with a transportation-cost-based uncertainty set.
\end{minipage}
\else
Risk minimisation can fail to be optimal when there is some misspecification of the distribution, such as when working with its empirical counterpart. Therefore we must turn to other techniques in order to ensure stability when learning a model. The robust Bayes framework provides a systematic approach to these problems, however it leaves open the choice as to which uncertainty set is most appropriate. We show that in many cases, the popular Lipschitz regularisation corresponds to robust Bayes with a transportation-cost-based uncertainty set.

To further justify this choice of uncertainty set we have seen that there are strong connections linking the transportation cost uncertainty set to phenomenon of adversarial examples. To do this we have borrowed tools from the nonconvex optimisation literature. In particular the closed convex envelope  appears to be of somewhat novel application in this area. By its introduction we have been able to maintain tractability while making minimal assumptions about the model class or loss function so that this theory can be applied to popular exotic model classes such as universal approximators.\label{thispage}
\fi


\bibliography{xinhua_bibfile,zac_bibfile}
\bibliographystyle{icml2020}

\pagestyle{plain}
\appendix
\numberwithin{equation}{section}
\ificml
    \newgeometry{margin=1.875in, footskip=30.0pt}
\fi
\setlength{\parindent}{1.5em}
\onecolumn

\ificml
    \icmltitle{\texorpdfstring{Supplementary Material for\\\thetitle}{Supplementary Material}}
\else
    \begin{center}
        \huge\bf Supplementary Material
    \end{center}
\fi



\section{Preliminaries}

For a topological vector space $X$ we denote by $X^*$ its topological dual. These are in a duality with the pairing $\inp{\marg,\marg}: X\times X^*\to\R$. The weakest topology on $X$ so that $X^*$ is its topological dual is denoted $\wtop{X,X^*}$. The continuous real functions on a topological space $\Omega$ are collected in $\contf(\Omega)$, and the subset of these that are bounded is $\contfb(\Omega)$. For a measure $\mu\in\probm(X)$ and a Borel mapping $f:X\to Y$, the push-forward measure is denoted $f_\#\mu\in\probm(Y)$ where $f_\#\mu(A) \defeq \mu(f^{-1}(A))$ for every Borel $A\subseteq Y$.
    
The \emph{$\epsilon$-subdifferential} of a convex function $f:X\to\Rx$ at a point $x\in X$ is 
\begin{gather}
    \esubdiff f(x) \defeq \setcond{x^*\in X^* }{ \forall{y\in X} \inp{y-x,x^*} -\epsilon ≤ f(y) - f(x)},
\end{gather}
where $\epsilon≥0$. The \emph{Moreau--Rockafellar subdifferential} is $\subdiff f(x) \defeq  \esubdiff_0f(x)$ and satisfies  $\subdiff f(x) = \bigcap_{\epsilon>0}\esubdiff f(x)$. The Legendre--Fenchel conjugate of a function $f:X\to\Rx$ is the function $f^*:X^*\to\Rx$ defined by 
\begin{gather}
    \forall{x^*\in X^*} f^*(x^*) \defeq \sup_{x\in X} \rbr{\inp{x,x^*} - f(x)},
\end{gather}
and satisfies the following Fenchel--Young rule when $f$ is closed convex
\begin{gather}
    \forall{x\in f^{-1}(\R)}\forall{x^*\in\esubdiff f(x)} f(x) + f^*(x^*) - \inp{x,x^*} ≤ \epsilon. \label{eq:generalised_fy}
\end{gather}
A coupling function $c:X\times X\to\Rx$ has an associated conjugacy operation with 
\begin{gather}
    f^{c}(x) \defeq \sup_{y\in X}\rbr\big{f(y) - c(x,y)},\label{eq:c_conjugate}
\end{gather}
for any function $f:X\to\Rx$. The \emph{indicator function} of a set $A\subseteq X$ is $\ind_A(x) \defeq 0$ for $x\in A$ and $\ind_A(x)\defeq \infty$ for $x\notin A$.

\section{Technical results on distributional robustness and regularisation}\label{sec:technical_appendix}

\printproofs

\section{Proofs and additional results on the Lipschitz regularisation of kernel methods}
\label{sec:app_kernel}

\subsection{Random sampling requires exponential cost}
\label{sec:app_exp_sample}

The most natural idea of leveraging the samples is to add the constraints $\nbr{g(w^s)} \le L$.
For Gaussian kernel, we may sample from $\Ncal(\zero, \sigma^2 I)$ while for inverse kernel we may sample uniformly from $B$.
This leads to our training objective:
\begin{align}
\min_{f \in \Hcal} \quad &\frac{1}{l} \sum_{i=1}^l \text{loss}(f(x^i), y^i) + \frac{\lambda}{2} \nbr{f}_\Hcal^2 \qquad
s.t. \qquad \nbr{g(w^s)} \le L, \quad \forall s \in [n].
\end{align}

Unfortunately, this method may require $O(\frac{1}{\epsilon^d})$ samples to guarantee $\sum_j \nbr{g_j}^2_\Hcal \le L^2 + \epsilon$ w.h.p.
This is illustrated in \autoref{fig:exp_sample},
where $k$ is the polynomial kernel with degree 2 whose domain $ X$ is the unit ball $B$,
and $f(x) = \frac{1}{2} (v^\top x)^2$.
We seek to test whether the gradient $g(x) = (v^\top x) v$ has norm bounded by 1 for all $x \in B$,
and we are only allowed to test whether $\nbr{g(w^s)} \le 1$ for samples $w^s$ that are drawn uniformly at random from $B$.
This is equivalent to testing $\nbr{v}\le 1$, 
and to achieve it at least one $w^s$ must be from the $\epsilon$ ball around $v / \nbr{v}$ or $-v / \nbr{v}$, intersected with $B$.
But the probability of hitting such a region decays exponentially with the dimensionality $d$.

The key insight from the above counter-example is that 
in fact $\nbr{v}$ can be easily computed by $\sum_{s=1}^d (v^\top \wtil_s)^2$,
where $\{\wtil^s\}_{s=1}^d$ is the \emph{orthonormal} basis computed from the Gram–Schmidt process on $d$ random samples $\{w^s\}_{s=1}^d$ ($n=d$).
With probability 1, $n$ samples drawn uniformly from $B$ must span $\RR^d$ as long as $n \ge d$, \ie, $\rank(W) = d$ where $W = (w^1, \ldots, w^n)$.
The Gram–Schmidt process can be effectively represented using a pseudo-inverse matrix (allowing $n > d$) as 
\begin{align}
\label{eq:pseudo_inverse_W}
\nbr{v}_2 = \nbr{(W^\top W)^{-1/2} W^\top v}_2,
\end{align}
where $(W^\top W)^{-1/2}$ is the square root of the pseudo-inverse of $W^\top W$.
This is exactly the intuition underlying the Nystr\"om approximation that we will leveraged.

\begin{figure}
	\centering
	\includegraphics[width=0.31\textwidth]{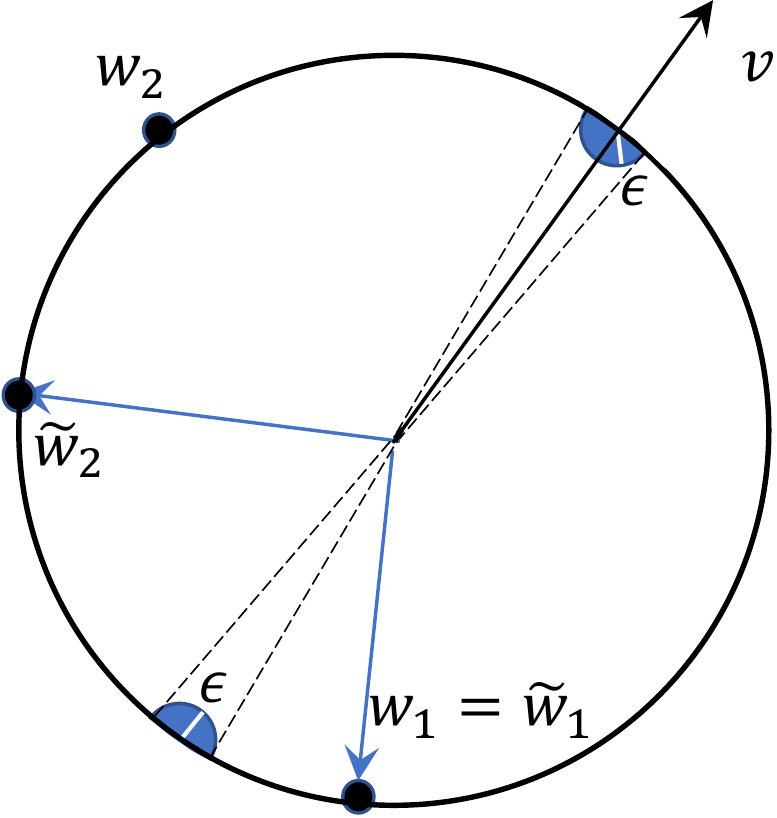}
	\caption{Suppose we use a polynomial kernel with degree 2, 
		and $f(x) = \frac{1}{2} (v^\top x)^2$ for $x \in B$.  
		Then $g(x) = (v^\top x) v$.
		If we want to test whether $\sup_{x \in B} \nbr{g(x)}_2 \le 1$ by evaluating $\nbr{g(w)}_2$ on $w$ that is randomly sampled from $B$ such as $w_1$ and $w_2$,
		we must sample within the $\epsilon$ balls around the intersection of $B$ and the ray along $v$ (both directions).  See the blue shaded area.
		The problem, however, becomes trivial if we use the orthonormal basis 
		$\{\wtil_1, \wtil_2\}$.}
	\label{fig:exp_sample}
\end{figure}

\subsection{Spectrum of Kernels}

%
%
Let $k$ be a continuous kernel on a compact metric space $ X$, 
and $\mu$ be a finite Borel measure on $ X$ with $supp[\mu] =  X$.
We will re-describe the following spectral properties in a more general way than in \autoref{sec:kernel}.
Recall \textcite[\S4]{steinwart2008support} that the integral operator for $k$ and $\mu$ is defined by 
\begin{align}
\label{eq:def_Tk_
app}
&T_k \ = \ I_k \circ S_k: \leb_2( X, \mu) \to \leb_2( X, \mu) \\
\text{where } \ &S_k \ : \ L_2 ( X,\mu) \to \contf( X), \quad (S_k f)(x) \ = \ \int k(x,y) f(y) d \mu(y), \quad f \in \leb_2( X, \mu),\\
&I_k \ : \ \contf( X) \hookrightarrow \leb_2( X,\mu), \ \text{inclusion operator}.
\end{align}
By the spectral theorem, 
if $T_k$ is compact, 
then there is an at most countable orthonormal set (ONS) $\{\etil_j\}_{j \in J}$ of $\leb_2( X, \mu)$ and $\{\lambda_j\}_{j \in J}$ with $\lambda_1 \ge \lambda_2 \ge \ldots > 0$ such that 
\begin{align}
T f = \sum_{j \in J} \lambda_j \inner{f}{\etil_j}_{\leb_2( X,\mu)} \etil_j, \qquad f \in \leb_2( X, \mu).
\end{align}
In particular, we have $\inner{\etil_i}{\etil_j}_{\leb_2( X,\mu)} = \delta_{ij}$ (i.e., equals 1  if $i=j$, and 0 otherwise),
and $T \etil_i = \lambda_i \etil_i$.
Since $\etil_j$ is an equivalent class instead of a single function, 
we assign a set of continuous functions $e_j = \lambda_j^{-1} S_k \etil_j \in \contf( X)$, which clearly satisfies
\begin{align}
\inner{e_i}{e_j}_{\leb_2( X,\mu)} = \delta_{ij}, \quad T e_j = \lambda_j e_j.
\end{align}
We will call $\lambda_j$ and $e_j$ as eigenvalues and eigenfunctions respectively,
and $\{e_j\}_{j \in J}$ clearly forms an ONS.
By Mercer's theorem, 
\begin{align}
\label{thm:mercer}
k(x,y) = \sum_{j \in J} \lambda_j e_j(x) e_j(y),
\end{align}
and all functions in $\Hcal$ can be represented by $\sum_{j \in J} a_j e_j$ where $\{a_j /\sqrt{\lambda_j}\} \in \ell^2(J)$.
The inner product in $\Hcal$ is equivalent to $\inner{\sum_{j \in J} a_j e_j}{\sum_{j \in J} b_j e_j}_\Hcal = \sum_{j \in J} a_j b_j / \lambda_j$.
Therefore it is easy to see that 
\begin{align}
\varphi_j \defeq \sqrt{\lambda_j} e_j, \qquad j \in J
\end{align}
is an orthonormal basis of $\Hcal$, with
Moreover, for all $f \in \Hcal$ with $f = \sum_{j \in J} a_j e_j$,
we have
$\inner{f}{e_j}_\Hcal = a_j / \lambda_j$,
$\inner{f}{\varphi_j}_\Hcal = a_j / \sqrt{\lambda_j}$,
and
\begin{align}
f = \sum_j \inner{f}{\varphi_j}_\Hcal \varphi_j 
= \sum_j \sqrt{\lambda_j} \inner{f}{e_j}_\Hcal \varphi_j 
= \sum_j \lambda_j \inner{f}{e_j}_\Hcal e_j.
\end{align} 
Most kernels used in machine learning are infinite dimensional, 
\ie, $J = \NN$. 
For convenience, we define $\Phi_m \defeq (\varphi_1, \ldots, \varphi_m)$
and $\Lambda_m = \diag(\lambda_1, \ldots, \lambda_m)$.

\subsection{General sample complexity and assumptions on the product kernel}
\label{sec:complexity}

In this section, we first consider kernels $k_0$ with \textbf{scalar input}, 
\ie, $ X_0 \subseteq \RR$.
Assume there is a measure $\mu_0$ on $ X_0$.
This will serve as the basis for the more general product kernels in the form of
$k(x, y) = \prod_{j=1}^d k_0(x_j, y_j)$
defined over $ X_0^d$.

With Assumptions \ref{assum:tail_bnd} and \ref{assum:uniform_bnd},
we now state the formal version of \autoref{thm:sample_complexity_informal} by first providing the sample complexity for approximating the partial derivatives.
In the next subsection, we will examine how three different kernels satisfy/unsatisfy the Assumptions \ref{assum:tail_bnd} and \ref{assum:uniform_bnd}, and what the value of $N_\epsilon$ is.
For each case, we will specify $\mu_0$ on $ X_0$, 
and the measure on $ X_0^d$ is trivially $\mu = \mu_0^d$.

\begin{theorem}
	\label{thm:sample_prod_ker_1d}
	Suppose $\{w^s\}_{s=1}^n$ are drawn iid from $\mu_0$ on $ X_0$,
	where $\mu_0$ is the uniform distribution on $[-v/2, v/2]$ for periodic kernels or periodized Gaussian kernels.
	Let $Z \defeq (k_0(w^1,\cdot), k_0(w^2,\cdot), \ldots, k_0(w^n, \cdot))$, and 
	$g_1 = \frac{1}{l} \sum_{a=1}^l \gamma_a g^a_1$: $ X^d_0 \to \RR$, where 
	$\nbr{\vga}_\infty \le c_1$ and
	\begin{align}
	g^a_1(y) = \partial^{0,1} k(x^a, y) 
	= h^a_1(y_1) \prod_{j=2}^d k_0(x^a_j, y_j) \quad \text{with} \quad
	h^a_1(\cdot) \defeq \partial^{0,1}k_0(x^a_1, \cdot).
	\end{align}
	Given $\epsilon \in (0, 1]$, let $\Phi_m = (\varphi_1, \ldots \varphi_m)$ 
	where $m = N_\epsilon$.
	Then with probability $1 - \delta$, the following holds when the sample size 
	$n = \max(N_\epsilon, \frac{5}{3 \epsilon^2} N_\epsilon Q^2_\epsilon \log \frac{2N_\epsilon}{\delta})$:
	\begin{align}
	\label{eq:bound_sample_complexity_1d}
	\nbr{g_1}_\Hcal^2
	&\le 
	\frac{1}{l^2} \vga^\top K_1 \vga + 3 c_1 \rbr{1 + 2\sqrt{N_\epsilon} M_\epsilon} \epsilon, \\
	\where
	(K_1)_{a,b} &= (h^a_1)^\top Z (Z^\top Z)^{-1} Z^\top h^b_1 \prod_{j=2}^d k_0(x^a_j, x^b_j).
	\end{align}
\end{theorem}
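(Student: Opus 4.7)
My plan unfolds in three steps, building on the tensor-product structure of the product-kernel RKHS. First, since $k(x,y) = \prod_{j=1}^d k_0(x_j, y_j)$, the RKHS $\Hcal$ is isometric to $\Hcal_0^{\otimes d}$, under which $g_1^a$ corresponds to $h_1^a \otimes k_0(x_2^a, \cdot) \otimes \cdots \otimes k_0(x_d^a, \cdot)$. Viewing $Z$ as the operator $\R^n \to \Hcal_0$, $\alpha \mapsto \sum_s \alpha_s k_0(w^s, \cdot)$, the reproducing property makes $Z^\top$ the point-evaluation map, so $P_Z \defeq Z(Z^\top Z)^{-1} Z^\top$ is the $\Hcal_0$-orthogonal projection onto $\mathrm{span}\{k_0(w^s, \cdot)\}_{s=1}^n$. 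Setting $\Pi_Z \defeq P_Z \otimes I^{\otimes (d-1)}$ (also an orthogonal projection, since each $k_0(x_j^a,\cdot)$ is a unit vector by $k_0(x,x)=1$) gives $(K_1)_{ab} = \inner{\Pi_Z g_1^a}{\Pi_Z g_1^b}_\Hcal$ and hence $\tfrac{1}{l^2}\vga^\top K_1 \vga = \|\Pi_Z g_1\|_\Hcal^2$. The Pythagorean identity and the triangle inequality then reduce the whole problem to a uniform scalar bound:
\begin{equation}
    \|g_1\|_\Hcal^2 - \tfrac{1}{l^2}\vga^\top K_1 \vga \ = \ \|(I-\Pi_Z) g_1\|_\Hcal^2 \ \le \ c_1^2 \max_{a \in [l]} \|(I-P_Z) h_1^a\|_{\Hcal_0}^2.
\end{equation}

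Second, I combine Assumptions \autoref{assum:tail_bnd} and \autoref{assum:uniform_bnd} to split this scalar residual. Writing $m \defeq N_\epsilon$, \autoref{assum:tail_bnd} yields $h_1^a = \Phi_m \Phi_m^\top h_1^a + r^a$ with $\|r^a\|_{\Hcal_0} < \epsilon$, so that
\begin{equation}
    \|(I-P_Z) h_1^a\|_{\Hcal_0} \ \le \ \|(I-P_Z)\Phi_m\|_{\mathrm{op}} \cdot \|\Phi_m^\top h_1^a\|_2 + \epsilon,
\end{equation}
and the first factor in \autoref{assum:uniform_bnd} bounds $\|\Phi_m^\top h_1^a\|_2 \le \sqrt{N_\epsilon} M_\epsilon$ uniformly in $a$ (absorbing $\sqrt{\lambda_j} \le 1$, which follows from $k_0(w,w)=1$). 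What remains is to show that $\|(I-P_Z)\Phi_m\|_{\mathrm{op}}$ can be made arbitrarily small with high probability in the i.i.d.\ samples $\{w^s\}$.

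Third, I invoke matrix concentration. Let $\Phi_m(w) \defeq (\varphi_1(w),\ldots,\varphi_m(w))^\top$; the $m \times m$ empirical matrix $\hat C \defeq \tfrac{1}{n}\sum_{s=1}^n \Phi_m(w^s) \Phi_m(w^s)^\top$ has expectation $\mathrm{diag}(\lambda_1,\ldots,\lambda_m)$ (by $L_2(\nu_0)$-orthonormality of $\{e_j\}$), while each rank-one summand has operator norm at most $N_\epsilon Q_\epsilon^2$ by \autoref{assum:uniform_bnd}. The matrix Chernoff inequality then produces multiplicative concentration of $\hat C$ about its expectation with probability $1-\delta$ once $n \ge \frac{5}{3\epsilon^2} N_\epsilon Q_\epsilon^2 \log(2 N_\epsilon/\delta)$, with the side condition $n \ge N_\epsilon$ ensuring the range of $Z$ generically dominates an $m$-dimensional subspace. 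A standard identity, relating the Nyström projection error on $\mathrm{range}(\Phi_m)$ to the smallest eigenvalue of the empirical Gram matrix evaluated at $\Phi_m$, converts this into $\|(I-P_Z)\Phi_m\|_{\mathrm{op}} \le \epsilon/(\sqrt{N_\epsilon} M_\epsilon)$. Substituting back through Steps 2 and 1, and bookkeeping constants using $(a+b)^2 \le 3(a^2+b^2)$ together with $\epsilon \in (0,1]$, yields the claimed bound $3c_1(1+2\sqrt{N_\epsilon} M_\epsilon)\epsilon$.

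The main obstacle is the translation in Step 3 from an $\R^m$-concentration statement about $\hat C$ to an $\Hcal_0$-operator-norm bound on $(I-P_Z)\Phi_m$. One has to argue that if the top-$m$ population eigenvalues are well approximated empirically, then $\mathrm{range}(\Phi_m)$ is essentially absorbed into $\mathrm{range}(Z)$; the delicate point is that a small $\lambda_m$ could in principle amplify the bound through the $\sqrt{\lambda_j}$ normalisation $\varphi_j = \sqrt{\lambda_j}\, e_j$, and it is precisely the uniform sup-norm control $Q_\epsilon$ from \autoref{assum:uniform_bnd} that prevents this blow-up and lets the sample complexity remain polynomial rather than dimension-dependent.
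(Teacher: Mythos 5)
Your Steps 1 and 2 are correct, and the projection reformulation is actually a clean structural observation that the paper's polarization argument doesn't make explicit: since $\Pi_Z$ is an orthogonal projection, the Nystr\"om quantity $\frac{1}{l^2}\vga^\top K_1\vga = \|\Pi_Z g_1\|_\Hcal^2$ is automatically a \emph{lower} bound on $\|g_1\|_\Hcal^2$, which gives the one-sided inequality the theorem asserts for free (the paper proves the two-sided bound via polarization and only uses one direction).

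Step 3 is where the gap is, and it is precisely the issue you flag as ``delicate'' at the end but do not actually resolve. The bound $\|(I-P_Z)\Phi_m\|_{\mathrm{op}} \le \epsilon / (\sqrt{N_\epsilon} M_\epsilon)$ is false in general. In the no-tail idealization one has $P_Z \Phi_m = \Phi_m$ exactly, but perturbing by the tail $T$ (whose columns have $\Hcal_0$-norm $< \epsilon$, so $\|T\|_{\mathrm{op}} \lesssim \epsilon\sqrt{n}$) gives a range perturbation of order $\|T\|_{\mathrm{op}}/\sigma_{\min}(\Phi_m B)$, where $B = [\varphi_j(w^s)]_{j,s}$ has $\sigma_{\min}(B) \approx \sqrt{n\lambda_m}$ by the matrix concentration you invoke. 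Thus $\|(I-P_Z)\Phi_m\|_{\mathrm{op}} \lesssim \epsilon/\sqrt{\lambda_m}$, which is unbounded as $\lambda_m \downarrow 0$; the uniform sup-norm control $Q_\epsilon$ does \emph{not} fix this, because $Q_\epsilon$ is exactly what drives the concentration of $R = \frac{1}{n}\Lambda_m^{-1/2}\Phi_m^\top Z Z^\top \Phi_m \Lambda_m^{-1/2}$ near $I_m$, and that is a statement about the \emph{whitened} operator, not about $P_Z \Phi_m$ itself.

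The correct pairing is $\|(I-P_Z)\Phi_m\Lambda_m^{1/2}\|_{\mathrm{op}} \cdot \|\Lambda_m^{-1/2}\Phi_m^\top h\|_2$ rather than $\|(I-P_Z)\Phi_m\|_{\mathrm{op}} \cdot \|\Phi_m^\top h\|_2$: multiplying $\Phi_m$ on the right by $\Lambda_m^{1/2}$ downweights the directions with small $\lambda_j$, exactly cancelling the $\lambda_m^{-1/2}$ blow-up, while \autoref{assum:uniform_bnd} still gives $\|\Lambda_m^{-1/2}\Phi_m^\top h\|_2 = \|(\inner{e_j}{h}_{\Hcal_0})_j\|_2 \le \sqrt{N_\epsilon} M_\epsilon$. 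One can show $\|(I-P_Z)\Phi_m\Lambda_m^{1/2}\|_{\mathrm{op}} \le 2\epsilon$, which recovers the paper's bound $\|(I-P_Z)h\|_{\Hcal_0} \le (1 + 2\sqrt{N_\epsilon}M_\epsilon)\epsilon$. This is implicitly what the paper's proof does by choosing $\val = n^{-1/2} V U^\top \Lambda_m^{-1/2}\Phi_m^\top h$ in the variational upper bound $\|(I-P_Z)h\|_{\Hcal_0}^2 \le \|Z\val - h\|_{\Hcal_0}^2$; your route is salvageable but only after this replacement, and the ``standard identity'' as stated cannot be substantiated.
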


Then we obtain the formal statement of sample complexity, as stated in the following corollary, by combining all the coordinates from \autoref{thm:sample_prod_ker_1d}.

\begin{corollary}
	\label{cor:sample_prod_ker_1d}
	Suppose all coordinates share the same set of samples $\{w^s\}_{s=1}^n$.
	Applying the results in \eqref{eq:bound_sample_complexity_1d} for coordinates from 1 to $d$ and using the union bound, we have that with sample size 
	$n=\max(N_\epsilon, \frac{5}{3 \epsilon^2} N_\epsilon Q^2_\epsilon \log \frac{2N_\epsilon}{\delta})$, 
	the following holds with probability $1-d\delta$,
	\begin{align}
	\label{eq:bound_sample_complexity}
	\lambda_{\max}(G^\top G)
	\le 
	\lambda_{\max}(\Ptil_G)
	+ 3 c_1 \rbr{1 + 2 \sqrt{N_\epsilon} M_\epsilon} \epsilon.
	\end{align}
	Equivalently, if $N_\epsilon$, $M_\epsilon$ and $Q_\epsilon$ are constants or poly-log terms of $\epsilon$ which we treat as constant, 
	then to ensure 
	$\lambda_{\max}(G^\top G)
	\le 
	\lambda_{\max}(\Ptil_G) + \epsilon$ with probability $1-\delta$,
	the sample size needs to be
	\begin{align}
	n = \frac{15}{\epsilon^2} c_1^2 \rbr{1+2 \sqrt{N_\epsilon} M_\epsilon}^2 N_\epsilon Q_\epsilon^2 \log \frac{2d N_\epsilon}{\delta}.
	\end{align}
\end{corollary}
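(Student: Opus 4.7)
The plan is to deduce the corollary directly from \autoref{thm:sample_prod_ker_1d} by applying it separately to each coordinate and then aggregating the coordinate-wise bounds into a spectral-norm bound on the difference $G^\top G - \Ptil_G$. First, I would observe that \autoref{thm:sample_prod_ker_1d} is stated for coordinate $1$ only because of notational convenience; by permuting the roles of the coordinates (the product structure $k(x,y) = \prod_j k_0(x_j,y_j)$ is symmetric), the same argument yields, for every $j \in [d]$, a diagonal approximation bound of the form
\begin{align}
\nbr{g_j}_\Hcal^2 \le (\Ptil_G)_{jj} + 3 c_1 \rbr{1 + 2\sqrt{N_\epsilon} M_\epsilon} \epsilon,
\end{align}
holding with probability $1-\delta$ once $n \ge \max(N_\epsilon, \frac{5}{3\epsilon^2} N_\epsilon Q_\epsilon^2 \log \frac{2N_\epsilon}{\delta})$. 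Since the same sample set $\{w^s\}$ is reused across coordinates, the union bound gives that all $d$ inequalities hold simultaneously with probability at least $1-d\delta$.

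Next I would exploit the fact that the off-diagonal entries of $G^\top G$ admit closed-form computation via the product structure: as established in the derivation of \eqref{eq:grad_coordinate}, $\inner{g_i}{g_j}_\Hcal$ for $i\neq j$ involves only $k_0$ and $\partial^{0,1}k_0$ evaluations at training points, with no approximation required. Consequently the Nyström-approximated matrix $\Ptil_G$ agrees with $G^\top G$ on every off-diagonal entry, so that $E \defeq G^\top G - \Ptil_G$ is a diagonal matrix. Its $j$-th diagonal entry is $\nbr{g_j}_\Hcal^2 - (\Ptil_G)_{jj}$, which is non-negative (the Nyström scheme underestimates the RKHS norm) and at most $3 c_1 (1 + 2\sqrt{N_\epsilon} M_\epsilon)\epsilon$ on the event above.

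Then I would invoke Weyl's inequality for symmetric matrices, giving $\lambda_{\max}(G^\top G) \le \lambda_{\max}(\Ptil_G) + \lambda_{\max}(E)$. Since $E$ is diagonal, $\lambda_{\max}(E) = \max_j E_{jj} \le 3c_1(1+2\sqrt{N_\epsilon}M_\epsilon)\epsilon$, which yields \eqref{eq:bound_sample_complexity}. The stated explicit sample complexity then follows by solving $3c_1(1+2\sqrt{N_\epsilon}M_\epsilon)\epsilon' = \epsilon$ for the per-coordinate accuracy $\epsilon'$, substituting into the sample-size expression of \autoref{thm:sample_prod_ker_1d}, and replacing $\delta$ with $\delta/d$ to absorb the union-bound factor so that the final confidence is $1-\delta$.

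The only delicate step is verifying that off-diagonal entries are recovered exactly by the Nyström construction under the product-kernel factorisation; once this structural observation is made, everything else is a routine union bound plus Weyl. Note also that nothing in this argument depends on the ambient dimension $d$ beyond the logarithmic factor from the union bound, which is the source of the $\log d$ appearing in the informal statement \autoref{thm:sample_complexity_informal}.
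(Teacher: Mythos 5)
Your proof is correct and follows essentially the same route as the paper's: both observe that $\Ptil_G - G^\top G$ is diagonal because only the diagonal of $G^\top G$ is Nystr\"{o}m-approximated, control each diagonal entry via Theorem~\ref{thm:sample_prod_ker_1d} together with a union bound over the $d$ coordinates, and transfer the entrywise bound to $\lambda_{\max}$ by a Rayleigh-quotient argument on the top eigenvector, which is exactly what your appeal to Weyl's inequality packages. Your version is in fact a little cleaner: the paper's written argument takes $\uvec$ to be the leading eigenvector of $\Ptil_G$ and bounds $\lambda_{\max}(\Ptil_G)-\lambda_{\max}(G^\top G)$, which is the wrong direction for the stated conclusion (it is repaired by instead taking $\uvec$ the leading eigenvector of $G^\top G$, giving $\lambda_{\max}(G^\top G)-\lambda_{\max}(\Ptil_G)\le\sum_j(-\delta_j)\uvec_j^2$ with each $-\delta_j\ge 0$ bounded by the per-coordinate error), whereas your Weyl formulation $\lambda_{\max}(G^\top G)\le\lambda_{\max}(\Ptil_G)+\lambda_{\max}(E)$ lands in the correct direction automatically.
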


\begin{remark}
	The first term on the right-hand side of \eqref{eq:bound_sample_complexity} is explicitly upper bounded by $L^2$ in our training objective.
	In the case of \autoref{thm:Meps_period},
	the values of $Q_\epsilon$, $N_\epsilon$, and $M_\epsilon$ lead to a $\Otil(\frac{1}{\epsilon^2})$ sample complexity.
	If we further zoom into the dependence on the period $v$,
	then note that $N_\epsilon$ is almost a universal constant while $M_\epsilon =\frac{\sqrt{2}\pi}{v}(N_\epsilon-1)$.
	So overall, $n$ depends on $v$ by $\frac{1}{v^2}$.
	This is not surprising because smaller period means higher frequency,
	hence more samples are needed.
\end{remark}

\begin{remark}
	Corollary \ref{cor:sample_prod_ker_1d} postulates that all coordinates share the same set of samples $\{w^s\}_{s=1}^n$.  
	When coordinates differ in their domains, we can draw different sets of samples for them.
	The sample complexity hence grows by $d$ times as we only use a weak union bound.
	More refined analysis could save us a factor of $d$ as these sets of samples are independent of each other.
\end{remark}


\begin{proof}[Proof of \autoref{thm:sample_prod_ker_1d}]
	Let $\epsilon'\defeq\rbr{1 + 2 \sqrt{m} M_\epsilon} \epsilon$.
	Since 
	\begin{gather}
		\inner{g^a_1}{g^b_1}_{\Hcal}  =  \inner{h^a_1}{h^b_1}_{\Hcal_0} \prod_{j=2}^d k_0(x^a_j, x^b_j)
	\end{gather}
	 and $\abr{k_0(x^a_j, x^b_j)} \le 1$,
	it suffices to show that for all $a, b \in [l]$,
	\begin{align}
	\abr{\inner{h^a_1}{h^b_1}_{\Hcal_0} - (h^a_1)^\top Z (Z^\top Z)^{-1} Z^\top h^b_1} \le 3\epsilon'.
	\end{align}
	Towards this end, it is sufficient to show that for any 
	$h(\cdot) = \theta_x \partial^{0,1}k_0(x,\cdot) + \theta_y \partial^{0,1}k_0(y,\cdot)$ 
	where $x, y \in  X_0$ and $\abr{\theta_x} + \abr{\theta_y} \le 1$, we have
	\begin{align}
	\label{eq:approx_H0_norm}
	\abr{h^\top Z (Z^\top Z)^{-1} Z^\top h - \nbr{h}_{\Hcal_0}^2} \le \epsilon'.
	\end{align}
	This is because, if so, then
	\begin{align}
		\MoveEqLeft\abr{\inner{h^a_1}{h^b_1}_{\Hcal_0} - (h^a_1)^\top Z (Z^\top Z)^{-1} Z^\top h^b_1} \\
		&=\Bigl|\frac{1}{2} \rbr{\nbr{h^a_1 + h^b_1}_{\Hcal_0}^2 - \nbr{h^a_1}_{\Hcal_0}^2- \nbr{h^b_1}_{\Hcal_0}^2} \\
		&\qquad
		 - \frac{1}{2} \Bigl[(h^a_1 + h^b_1)^\top Z (Z^\top Z)^{-1} Z^\top (h^a_1 + h^b_1)\\
		&\qquad		
		- (h^a_1)^\top Z (Z^\top Z)^{-1} Z^\top h^a_1 - (h^b_1)^\top Z (Z^\top Z)^{-1} Z^\top h^b_1 \Bigr] \Bigr|\\
		&≤\frac{1}{2} \rbr{4\epsilon' + \epsilon' + \epsilon'} \\
		&= 3\epsilon'.
	\end{align}

	The rest of the proof is devoted to \eqref{eq:approx_H0_norm}.
	Since $n \ge m$, 
	the SVD of $\Lambda_m^{-1/2} \Phi_m^\top Z$ can be written as $U \Sigma V^\top$,
	where $U U^\top = U^\top U = V^\top V = I_m$ ($m$-by-$m$ identity matrix),
	and $\Sigma = \diag(\sigma_1, \ldots, \sigma_m)$. 
	Define 
	\begin{align}
	\val = n^{-1/2} V U^\top \Lambda_m^{-1/2} \Phi_m^\top h.
	\end{align}
	Consider the optimization problem $o(\val) \defeq \frac{1}{2} \nbr{Z \val - h}_{\Hcal_0}^2$.
	It is easy to see that its minimal objective value is 
	$o^* \defeq \frac{1}{2} \nbr{h}_{\Hcal_0}^2 - \frac{1}{2} h^\top Z (Z^\top Z)^{-1} Z^\top h$.
	So
	\begin{align}
	0 \le 2 o^* =
	\nbr{h}^2_{\Hcal _0} - h^\top Z (Z^\top Z)^{-1} Z^\top h 
	\le 2 o(\val). 
	\end{align}
	
	Therefore to prove \eqref{eq:approx_H0_norm}, 
	it suffices to bound $o(\val) = \nbr{Z \val - h}_{\Hcal_0}$.
	Since $\sqrt{n} \Phi_m \Lambda^{1/2}U V^\top \val = \Phi_m \Phi_m^\top h$,
	we can decompose $\nbr{Z \val - h}_{\Hcal_0}$ by
	\begin{gather}
		\begin{aligned}
			\nbr{Z \val - h}_{\Hcal_0} 
			&\le \nbr{(Z - \Phi_m \Phi_m^\top Z) \val}_{\Hcal_0} 
			\\&\qquad + \nbr{(\Phi_m \Phi_m^\top Z - \sqrt{n} \Phi_m \Lambda_m^{1/2}U V^\top) \val}_{\Hcal_0} 
			\\&\qquad + \nbr{\Phi_m \Phi_m^\top h - h}_{\Hcal_0}.
		\end{aligned}
		\label{eq:Zalpha_minus_h}
	\end{gather}
	The last term $\nbr{\Phi_m \Phi_m^\top h - h}_{\Hcal_0}$ is clearly below $\epsilon$ because by Assumption \ref{assum:tail_bnd} and $m=N_\epsilon$
	\begin{align}
		\nbr{\Phi_m \Phi_m^\top h - h}_{\Hcal_0}  
		&≤\abr{\theta_x} \nbr{\Phi_m \Phi_m^\top \partial^{0,1}k_0(x,\cdot) - \partial^{0,1}k_0(x,\cdot) }_{\Hcal_0}
		\\&\qquad+ \abr{\theta_y} \nbr{\Phi_m \Phi_m^\top \partial^{0,1} k_0(y,\cdot) - \partial^{0,1}k_0(y,\cdot) }_{\Hcal_0}
		\\&≤ (|\theta_x|+|\theta_y|) \epsilon 
		\\&≤ \epsilon.
	\end{align}
	We will next bound the first two terms on the right-hand side of \eqref{eq:Zalpha_minus_h}.
	
	(i) By Assumption \ref{assum:tail_bnd}, 
	$\nbr{k_0(w^s, \cdot) - \Phi_m \Phi_m^\top k_0(w^s, \cdot)}_{\Hcal_0} \le \epsilon$,
	hence
	\begin{gather}
		\nbr{(Z - \Phi_m \Phi_m^\top Z) \val}_{\Hcal_0} \le \epsilon \sqrt{n} \nbr{\val}_2.
	\end{gather}

	To bound $\nbr{\val}_2$, note all singular values of $VU^\top$ are 1, and so Assumption \ref{assum:uniform_bnd} implies that for all $i \in [m]$,
	\begin{align}
		\abr{\lambda_j^{-1/2} \inner{\varphi_j}{h}_{\Hcal_0}} 
		&=\abr{\inner{e_j}{h}_{\Hcal_0}} 
		\\&= \abr{\inner{e_j}{\theta_x \partial^{0,1}k_0(x,\cdot) + \theta_y \partial^{0,1}k_0(y,\cdot)}_{\Hcal_0}} 
		\\&≤ \sup_{x \in  X} \abr{\inner{e_j}{\partial^{0,1} k(x, \cdot)}_{\Hcal_0}}
		\\&\le M_\epsilon.\label{eq:bound_eigj}
	\end{align}
	As a result,  
	\begin{align}
	\nbr{(Z - \Phi_m \Phi_m^\top Z) \val_j}_{\Hcal_0} 
	&\le \epsilon n^{1/2} \cdot n^{-1/2} \nbr{\Lambda_m^{-1/2} \Phi_m^\top h} 
	\le \epsilon \sqrt{m} M_\epsilon.
	\end{align}
	
	(ii) We first consider the concentration of the matrix 
	\begin{gather}
		R\defeq \frac{1}{n} \Lambda^{-1/2}_m \Phi_m^\top Z Z^\top \Phi_m \Lambda^{-1/2}_m \in \RR^{m \times m}.
	\end{gather}
	Clearly,
	\begin{align}
	\expunder{\{w_s\}}[R_{ij}] = \expunder{\{w_s\}} \sbr{\frac{1}{n} \sum_{s=1}^n e_i(w_s) e_j(w_s)} = \int e_i(x) e_j(x) \rmd \mu(x) = \delta_{ij}.
	\end{align}
	
	By matrix Bernstein theorem \citep[Theorem 1.6.2]{Tropp15},
	we have 
	\begin{gather}
		\Pr \rbr{\nbr{R - I_m}_{sp} \le \epsilon} \ge 1-\delta
	\end{gather}
	when $n \ge O(.)$.
	This is because 
	\begin{gather}
		\nbr{(e_1(x), \ldots, e_m(x))}^2 \le m Q^2_\epsilon,\quad 
		\nbr{\EE_{\{w_s\}}[RR^\top]}_{sp} \le m Q_\epsilon^2 /n,
	\end{gather}
	and
	\begin{align}
		\Pr\rbr{\nbr{R - I_m}_{sp} \le \epsilon} 
		&\ge 1- 2m \exp \rbr{\frac{-\epsilon^2}{\frac{m Q_\epsilon^2}{n} \rbr{1+ \frac{2}{3}\epsilon}}}
		\\&\ge 1- 2m \exp \rbr{\frac{-\epsilon^2}{\frac{5m Q_\epsilon^2}{3n} }} 
		\\&\ge 1 - \delta,		
	\end{align}
	where the last step is by the definition of $n$.
	Since $R = \frac{1}{n} U \Sigma^2 U^\top$, this means with probability $1-\delta$,
	$\nbr{\frac{1}{n} U \Sigma^2 U^\top - I_m}_{sp} \le \epsilon$.
	So for all $i \in [m]$,
	\begin{align}
	\label{eq:bound_sigma}
	\abr{\frac{1}{n} \sigma^2_i - 1} \le \epsilon
	\implies
	\abr{\frac{1}{\sqrt{n}} \sigma_i - 1} 
	< \epsilon \abr{\frac{1}{\sqrt{n}} \sigma_i + 1}^{-1} \le \epsilon.
	\end{align}
	Moreover, $\lambda_1 \le 1$ since $k_0(x,x) = 1$.
	It then follows that
	\begin{align}
	&\nbr{(\Phi_m \Phi_m^\top Z - \sqrt{n} \Phi_m \Lambda_m^{1/2}U V^\top) \val}_{\Hcal_0} \\
	= &\nbr{\Phi_m \Lambda_m^{1/2} U \Sigma V^\top \frac{1}{\sqrt{n}} V U^\top \Lambda_m^{-1/2} \Phi_m^\top h  - 
		\sqrt{n} \Phi_m \Lambda_m^{1/2} U V^\top \frac{1}{\sqrt{n}} V U^\top \Lambda_m^{-1/2} \Phi_m^\top h}_{\Hcal_0} \\
	= & \nbr{\Lambda_m^{1/2} U \rbr{\frac{1}{\sqrt{n}} \Sigma - I_m} U^\top \Lambda_m^{-1/2} \Phi_m^\top h}_2 \qquad (\text{because } \Phi_m^\top \Phi_m = I_m) \\
	\le & \sqrt{\lambda_1} \max_{i\in[m]} \abr{\frac{1}{\sqrt{n}} \sigma_i - 1} \nbr{\Lambda_m^{-1/2} \Phi_m^\top h}_2 \\
	\le & \epsilon \sqrt{m} M_\epsilon
	\qquad (\text{by } \eqref{eq:bound_sigma}, \eqref{eq:bound_eigj},  \text{ and } \lambda_1 \le 1).
	\end{align}
	Combining (i) and (ii), we arrive at the desired bound in \eqref{eq:bound_sample_complexity_1d}.
\end{proof}

\begin{proof}[Proof of Corollary \ref{cor:sample_prod_ker_1d}]
    Since $\Ptil_G$ approximates $G^\top G$ only on the diagonal,
    $\Ptil_G - G^\top G$ is a diagonal matrix which we denote as 
    $\diag(\delta_1, \ldots, \delta_d)$.
    Let $\uvec \in \RR^d$ be the leading eigenvector of $\Ptil_G$.
    Then
    \begin{align}
        \lambda_{\max}(\Ptil_G) - \lambda_{\max}(G^\top G) 
        &\le \uvec^\top \Ptil_G \uvec - \uvec^\top G^\top G \uvec
        = \uvec^\top (\Ptil_G - G^\top G) \uvec
        = \sum_j \delta_j \uvec_j^2 \\
       \text{(by \eqref{eq:bound_sample_complexity_1d})} \quad &\le 3 c_1 \rbr{1 + 2\sqrt{N_\epsilon} M_\epsilon} \epsilon.
    \end{align}
    The proof is completed by applying the union bound and rewriting the results.
\end{proof}

\subsection{Case 1: Checking Assumptions \ref{assum:tail_bnd} and \ref{assum:uniform_bnd} on periodic kernels}
\label{sec:assump_check_periodic}

Periodic kernels on $ X_0 \defeq \RR$ are translation invariant, 
and can be written as $k_0(x,y) = \kappa(x-y)$ where 
$\kappa: \RR \to \RR$ is 
a) periodic with period $v$;
b) even, with $\kappa(-t) = \kappa(t)$;
and
c) normalized with $\kappa(0) = 1$.
A general treatment was given by \cite{WilSmoSch01},
and an example was given by David MacKay in \cite{MacKay98}:
\begin{align}
\label{eq:kernel_MacKay}
k_0(x,y) = \exp \rbr{-\frac{1}{2\sigma^2} \sin \rbr{\frac{\pi}{v} (x-y)}^2}.
\end{align}
%
We define $\mu_0$ to be a uniform distribution on $[-\frac{v}{2}, \frac{v}{2}]$,
and let $\omega_0 = 2 \pi / v$.

Since $\kappa$ is symmetric, we can simplify the Fourier transform of $\kappa(t) \delta_v(t)$, where $\delta_v(t) = 1$ if $t \in [-v/2, v/2]$, and 0 otherwise:
\begin{align}
F(\omega) = \frac{1}{\sqrt{2\pi}} \int_{-v/2}^{v/2} \kappa(t) \cos(\omega t) \rmd t.
\end{align}
It is now easy to observe that thanks to periodicity and symmetry of $\kappa$, for all $j \in \ZZ$,
\begin{align}
&\frac{1}{v} \int_{-v/2}^{v/2} k_0(x,y) \cos(j \omega_0 y) \rmd y 
= \frac{1}{v} \int_{-v/2}^{v/2} \kappa(x - y) \cos(j \omega_0 y) \rmd y \\
= &\frac{1}{v} \int_{x-v/2}^{x+v/2} \kappa(z) \cos(j \omega_0 (x-z)) \rmd z  \quad \text{(note } \cos(j \omega_0 (x-z)) \text{ also has period } v)  \\
= &\frac{1}{v} \int_{-v/2}^{v/2} \kappa(z) [\cos(j \omega_0 x) \cos (j \omega_0 z) + \sin(j \omega_0 x) \sin (j \omega_0 z)) \rmd z \quad \text{(by periodicity)} \\
\
=& \frac{1}{v} \cos(j \omega_0 x) \int_{-v/2}^{v/2} \kappa(z) \cos (j \omega_0 z) \rmd z \quad \text{(by symmetry of } \kappa) \\
=& \frac{\sqrt{2\pi}}{v}  F(j \omega_0) \cos(j \omega_0 x).
\end{align}
And similarly,
\begin{align}
\label{eq:period_sin}
\frac{1}{v} \int_{-v/2}^{v/2} k_0(x,y) \sin(j \omega_0 y) \rmd y &= \frac{\sqrt{2\pi}}{v}  F(j \omega_0) \sin(j \omega_0 x).
\end{align}

Therefore the  eigenfunctions of the integral operator $T_k$ are
\begin{equation}
e_0(x) = 1, \quad \ \
e_j(x) \defeq \sqrt{2} \cos(j \omega_0 x), \qquad 
e_{-j}(x) \defeq \sqrt{2} \sin(j \omega_0 x) \quad (j \ge 1)
\end{equation}
and the eigenvalues are $\lambda_j = \frac{\sqrt{2\pi}}{v}  F(j \omega_0)$ for all $j \in \ZZ$ with $\lambda_{-j} = \lambda_j$.
An important property our proof will rely on is that 
\begin{align}
e'_j(x) = -j \omega_0 e_{-j}(x), \quad \text{for all } j \in \ZZ.
\end{align}
Applying Mercer's theorem in \eqref{thm:mercer} and noting $\kappa(0)=1$,
we derive $\sum_{j \in \ZZ} \lambda_j = 1$.

\paragraph{Checking the Assumptions \ref{assum:tail_bnd} and \ref{assum:uniform_bnd}.}
The following theorem summarizes the assumptions and conclusions regarding the 
satisfaction of Assumptions \ref{assum:tail_bnd} and \ref{assum:uniform_bnd}.
Again we focus on the case of $ X \subseteq \RR$.

\begin{theorem}
	\label{thm:Meps_period}
	Suppose the periodic kernel with period $v$ has eigenvalues $\lambda_j$ that satisfies
	\begin{align}
	\label{eq:eigenvalue_cond_period_2}
	\lambda_j (1+j)^2 \max(1, j^2) (1 + \delta(j \ge 1)) &\le c_6 \cdot c_4^{-j}, \quad \text{for all } j \ge 0,
	\end{align} 
	where $c_4 > 1$ and $c_6 > 0$ are universal constants.
	Then Assumption \ref{assum:tail_bnd} holds with 
	\begin{align}
	\label{eq:Meps_periodic}
	N_\epsilon = 1+2 \floor{n_\epsilon}, \where 
	n_\epsilon \defeq 
	\log_{c_4} \rbr{\frac{2.1 c_6}{\epsilon^2} \max \rbr{1, \frac{v^2}{4\pi^2}}}.
	\end{align}
	In addition, Assumption \ref{assum:uniform_bnd} holds with
	$Q_\epsilon = \sqrt{2}$ 
	and
	$M_\epsilon = \frac{2 \sqrt{2} \pi}{v} \floor{n_\epsilon} 
	= \frac{\sqrt{2} \pi}{v} (N_\epsilon - 1)$.
\end{theorem}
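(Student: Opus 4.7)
The plan is to exploit the explicit Mercer decomposition $k_0(x,y) = \sum_{j\in\mathbb{Z}}\lambda_j e_j(x)e_j(y)$ together with the orthonormal basis $\varphi_j \defeq \sqrt{\lambda_j}\,e_j$ of $\mathcal{H}_0$ already constructed. By the reproducing property, $\langle k_0(x,\cdot),\varphi_j\rangle_{\mathcal{H}_0} = \varphi_j(x) = \sqrt{\lambda_j}\,e_j(x)$, so the $\mathcal{H}_0$-coefficients of $k_0(x,\cdot)$ are known in closed form; the pointwise bound $|e_j(x)| \leq \sqrt{2}$ inherited from the sinusoidal basis immediately yields $Q_\epsilon = \sqrt{2}$.

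Next I would transport this expansion to $\partial^{0,1}k_0(x,\cdot)$ by differentiating the Mercer series in $y$ and invoking the identity $e_j'(y) = -j\omega_0\,e_{-j}(y)$ (valid for every $j\in\mathbb{Z}$, with $\omega_0 = 2\pi/v$). After re-indexing $j\mapsto -j$ and the substitution $e_{-k} = \varphi_{-k}/\sqrt{\lambda_k}$, one obtains $\partial^{0,1}k_0(x,\cdot) = \sum_{k\in\mathbb{Z}} k\omega_0\,\sqrt{\lambda_k}\,e_{-k}(x)\,\varphi_k$. This expansion has two payoffs. First, $|\langle \partial^{0,1}k_0(x,\cdot), e_k\rangle_{\mathcal{H}_0}| = |k|\omega_0\,|e_{-k}(x)| \leq \sqrt{2}|k|\omega_0$, so restricting to the $N_\epsilon = 1+2\lfloor n_\epsilon\rfloor$ leading eigenpairs (indexed by $|k|\leq \lfloor n_\epsilon\rfloor$) gives the claimed $M_\epsilon = \sqrt{2}\,\omega_0\lfloor n_\epsilon\rfloor = \frac{\sqrt{2}\pi}{v}(N_\epsilon-1)$. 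Second, Parseval delivers tail identities that, together with $|e_j|\leq \sqrt{2}$, reduce to $\|k_0(x,\cdot) - \Phi_m\Phi_m^\top k_0(x,\cdot)\|_{\mathcal{H}_0}^2 \leq 4\sum_{j>\lfloor n_\epsilon\rfloor}\lambda_j$ and $\|\partial^{0,1}k_0(x,\cdot) - \Phi_m\Phi_m^\top \partial^{0,1}k_0(x,\cdot)\|_{\mathcal{H}_0}^2 \leq 4\omega_0^2\sum_{j>\lfloor n_\epsilon\rfloor} j^2\lambda_j$.

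To close out Assumption \ref{assum:tail_bnd}, I would invoke the decay hypothesis \eqref{eq:eigenvalue_cond_period_2}, which, on dividing by the bracketed factor, gives $\lambda_j \leq c_6 c_4^{-j}/(2(1+j)^2 j^2)$ and $j^2\lambda_j \leq c_6 c_4^{-j}/(2(1+j)^2)$ for all $j\geq 1$. Each tail sum then telescopes as a geometric series with ratio $c_4^{-1}$ dominated by a constant multiple of $c_4^{-\lfloor n_\epsilon\rfloor}/(c_4-1)$. Demanding both squared tails to be smaller than $\epsilon^2$ forces $c_4^{\lfloor n_\epsilon\rfloor}$ to exceed a constant multiple of $c_6\max(1,v^2/(4\pi^2))/\epsilon^2$, which is exactly the definition of $n_\epsilon$ in \eqref{eq:Meps_periodic}.

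The main obstacle is purely arithmetic bookkeeping: once the geometric shape $\lambda_j \lesssim c_6 c_4^{-j}$ is in hand, one must absorb the prefactors $4$, $\omega_0^2$, and $(c_4-1)^{-1}$ into a single $\max(1,\cdot)$ while keeping the numerical constant down to $2.1$. The symmetry $\lambda_{-j}=\lambda_j$ supplies the factor of two used when collapsing the two-sided sum over $|j|>\lfloor n_\epsilon\rfloor$ to a one-sided sum, and the extra $1+\delta(j\geq 1)$ in the hypothesis is exactly what makes this doubling compatible with the separate boundary contribution at $j=0$.
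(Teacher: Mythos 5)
Your derivations of $Q_\epsilon$, $M_\epsilon$, and the $\mathcal{H}_0$-expansions of $k_0(x,\cdot)$ and $\partial^{0,1}k_0(x,\cdot)$ all agree with the paper's argument, and the Parseval reductions of the two tail energies are correct up to constants. Where your route diverges, and does not close, is in how you propose to control the tail sums.

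You bound $\sum_{j>\lfloor n_\epsilon\rfloor}\lambda_j$ and $\sum_{j>\lfloor n_\epsilon\rfloor}j^2\lambda_j$ by summing the geometric series $\sum_j c_4^{-j}$ and then flag the resulting $(c_4-1)^{-1}$ as a prefactor to be ``absorbed into a single $\max(1,\cdot)$ while keeping the numerical constant down to $2.1$.'' That absorption is not possible: $(c_4-1)^{-1}\to\infty$ as $c_4\to 1^+$, whereas the theorem's $n_\epsilon$ carries only the universal $2.1$, with no dependence on $c_4$. Concretely, the paper later applies this theorem to the periodised Gaussian kernel with $c_4 = 1.25$, so $(c_4-1)^{-1}=4$ already overwhelms the claimed $2.1$. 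The missing step is therefore not arithmetic bookkeeping; a different device is required to produce a $c_4$-free constant.

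The paper's device is a level-set construction that never sums a geometric series. It fixes $\tau \defeq \min(1,\omega_0^{-2})\,\epsilon^2/2.1$, defines $A \defeq \{\,j : \lambda_j\,\max(1,j^2)\,(1+j^2)\,(1+\delta(j\ge 1)) \ge \tau\,\}$, and argues in two independent steps. First, the residual energy over the complement of $A$ is bounded by $\epsilon^2$: each term is $\le \tau$ by the level-set definition, and multiplying by $(1+j^2)/(1+j^2)$ reduces the sum to $\tau\sum_{j\ge 0}(1+j^2)^{-1}\le 2.1\,\tau$, a bound that depends on no property of $c_4$. Second, the decay hypothesis \eqref{eq:eigenvalue_cond_period_2} is invoked \emph{only} to bound the cardinality of $A$, showing $A\subseteq\{\,j:|j|\le n_\epsilon\,\}$. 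This decoupling --- a summable weight $(1+j^2)^{-1}$ for the tail energy, and the geometric decay used solely for counting --- is what makes $2.1$ emerge, and it is the ingredient your proposal lacks.

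A direct truncation argument could still be made to work if you did not collapse to a geometric series: keep $j^2\lambda_j\le c_6 c_4^{-j}/\bigl(2(1+j)^2\bigr)$, pull out only the monotone envelope $c_4^{-j}\le c_4^{-\lfloor n_\epsilon\rfloor-1}$ for $j>\lfloor n_\epsilon\rfloor$, and estimate the leftover $\sum_{j>\lfloor n_\epsilon\rfloor}(1+j)^{-2}\le(\lfloor n_\epsilon\rfloor+1)^{-1}\le 1 < 2.1$. This avoids $(c_4-1)^{-1}$ entirely. But that is a different bound from the one you wrote, and the geometric-series step as stated does not recover the theorem's constants.
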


For example, if we set $v = \pi$ and $\sigma^2 = 1/2$ in the kernel in \eqref{eq:kernel_MacKay},
elementary calculation shows that the condition \eqref{eq:eigenvalue_cond_period_2} is satisfied with
$c_4 = 2$ and $c_6 = 1.6$.

\begin{proof}[Proof of \autoref{thm:Meps_period}]
	First we show that $h(x) \defeq \partial^{0,1} k_0(x_0, x)$ is in $\Hcal_0$ for all $x_0 \in  X_0$.
	Since $k_0(x_0,x) = \sum_{j \in \ZZ} \lambda_j e_j(x_0) e_j(x)$, we derive
	\begin{align}
	\label{eq:expansion_h_periodic}
	h(x) = \sum_{j \in \ZZ} \lambda_j e_j(x_0) \partial^{1} e_{j}(x) 
	= \sum_{j \in \ZZ} \lambda_j e_j(x_0) \rbr{-j \omega_0 e_{-j}(x) }
	= \omega_0 \sum_{j \in \ZZ} \lambda_j j e_{-j}(x_0) e_{j}(x).
	\end{align}
	$h(x)$ is in $\Hcal$ if the sequence $\lambda_j j e_{-j}(x_0) / \sqrt{\lambda_j}$ is square summable.
	This can be easily seen by \eqref{eq:eigenvalue_cond_period_2}:
	\begin{align}
	\omega_0^{-2} \nbr{h}_{\Hcal_0}^2 
	&= \sum_j \lambda_j j^2 e^2_{-j}(x_0) 
	=  \sum_{j \in \ZZ} \lambda_j j^2 e^2_{-j}(x_0)  \\
	&= \sum_{j \in \ZZ} \lambda_j j^2 e^2_{-j}(x_0) 
	= \lambda_0 + 2 \sum_{j \ge 1} j^2 \lambda_j \le \frac{2 c_4 c_5}{c_4 -1}.
	\end{align}
	
	Finally to derive $N_\epsilon$, we reuse the orthonormal decomposition of $h(x)$ in \eqref{eq:expansion_h_periodic}.
	For a given set of $j$ values $A$ where $A \subseteq \ZZ$,
	we denote as $\Phi_A$ the ``matrix'' whose columns enumerate the $\varphi_j$ over $j \in A$.  
	Let us choose 
	\begin{align}
	A \defeq 
	\cbr{j: \lambda_j \max(1, j^2) (1+j^2) (1 + \delta(j \ge 1))
		\ge \min (1, w_0^{-2}) \frac{\epsilon^2}{2.1} }.
	\end{align}
	If $j \in A$, then $-j \in A$.
	Letting $\NN_0 = \{0, 1, 2, \ldots\}$, we note
	$\sum_{j \in \NN_0} \frac{1}{1+j^2} \le 2.1$.
	So
	\begin{align}
	\nbr{h - \Phi_A \Phi_A^\top h}_{\Hcal_0}^2 
	&= w_0^2 \sum_{j \in \ZZ \backslash A} \lambda_j j^2 e^2_{-j}(x_0) \\
	&= w_0^2 \sum_{j \in \NN_0 \backslash A} \lambda_j j^2 
	\sbr{(e^2_{j}(x) + e^2_{-j}(x)) \delta(j \ge 1) + \delta(j = 0)} \\	
	&= w_0^2 \sum_{j \in \NN_0 \backslash A} \lambda_j j^2 (1+\delta(j\ge 1))\\
	&= w_0^2 \sum_{j \in \NN_0 \backslash A} \cbr{\lambda_j j^2 (1+j^2) (1+\delta(j \ge 1)) \frac{1}{1+j^2}} \\
	&\le \frac{\epsilon^2}{2.1} \sum_{j \in \NN_0} \frac{1}{1+j^2} 
	= \frac{\epsilon^2}{2.1} \sum_{j \in \NN_0} \frac{1}{1+j^2} \le \epsilon^2.
	\end{align}

	Similarly, we can bound $\nbr{k_0(x_0, \cdot) - \Phi_A \Phi_A^\top k_0(x_0,\cdot)}_{\Hcal_0}$ by
	\begin{align}
		\MoveEqLeft\nbr{k_0(x_0, \cdot) - \Phi_A \Phi_A^\top k_0(x_0,\cdot)}_{\Hcal_0}^2 \\
		&=\sum_{j \in \ZZ \backslash A} \lambda_j e^2_{j}(x_0)
		\le \sum_{j \in \ZZ \backslash A} \lambda_j \max(1, j^2) e^2_{j}(x_0) 
		\\
		&=\sum_{j \in \NN_0 \backslash A} \lambda_\val \max(1, j^2)
		[\rbr{e^2_{j}(x) + e^2_{-j}(x)} \delta(j \ge 1) + \delta(j=0)] \\
		&=\sum_{j \in \NN_0 \backslash A} \cbr{\lambda_j \max(1, j^2) 
			(1+j^2) (1+\delta(j \ge 1)) \frac{1}{1+j^2}} \\
		&\le \frac{1}{2.1} \epsilon^2 \sum_{j \in \NN_0} \frac{1}{1+j^2}	
		\\&\le \epsilon^2.
	\end{align}	
	
	To upper bound the cardinality of $A$,
	we consider the conditions for $j \notin A$.
	Thanks to the conditions in \eqref{eq:eigenvalue_cond_period_2},
	we know that any $j$ satisfying the following relationship cannot be in $A$:
	\begin{align}
	c_6 \cdot c_4^{-\abr{j}} < \min(1, w_0^{-2}) \frac{\epsilon^2}{2.1}
	\iff
	c_4^{-\abr{j}} < 
	\frac{1}{2.1 \cdot c_6} \min \rbr{1, \frac{4\pi^2}{v^2}} \epsilon^2.
	\end{align}
	So $A \subseteq \cbr{j : \abr{j} \le n_\epsilon}$, which yields the conclusion \eqref{eq:Meps_periodic}.
	Finally $Q_\epsilon  \le \sqrt{2}$, and to bound $M_\epsilon$, we simply reuse \eqref{eq:expansion_h_periodic}.
	For any $j$ with $\abr{j} \le n_\epsilon$,
	\begin{equation*}
	\abr{\inner{h}{e_j}_\Hcal} \le \omega_0 \abr{j e_{-j}(x_0)}
	\le \frac{2\pi}{v} \sqrt{2} \floor{n_\epsilon}
	= \frac{\sqrt{2} \pi}{v} (N_\epsilon - 1).
	\end{equation*}
\end{proof}

%
%
%
%

\subsection{Case 2: Checking Assumptions \ref{assum:tail_bnd} and \ref{assum:uniform_bnd} on Gaussian kernels}
\label{sec:assump_check_gauss}

Gaussian kernels $k(x, y) = \exp(-\nbr{x - y}^2/(2\sigma^2))$ are obviously product kernels with $k_0(x_1,y_1) = \kappa(x_1-y_1) = \exp(-(x_1-y_1)^2/(2\sigma^2))$.
It is also translation invariant.
The spectrum of Gaussian kernel $k_0$ on $\RR$ is known;
see, \eg, Chapter 4.3.1 of \cite{RasWil06} and Section 4 of \cite{ZhuWilRohMor98}.
Let $\mu$ be a Gaussian distribution $\Ncal(0, \sigma^2)$.
Setting $\epsilon^2 = \alpha^2 = (2\sigma^2)^{-1}$ in Eq 12 and 13 of \cite{Fasshauer11},
the eigenvalue and eigenfunctions are (for $j \ge 0$):
\begin{align}
\label{eq:eigenvalue_gauss}
\lambda_j &= c_0^{-j-1/2}, \where c_0 = \frac{1}{2}(3 + \sqrt{5})\\
e_j(x) &= \frac{5^{1/8}}{2^{j/2}} 
\exp\rbr{-\frac{\sqrt{5}-1}{4} \frac{x^2}{\sigma^2}}
\frac{1}{\sqrt{j!}} H_j \rbr{\sqrt[4]{1.25} \  \frac{x}{\sigma}},
\end{align}
where $H_j$ is the Hermite polynomial of order $j$.

Although the eigenvalues decay exponentially fast, 
the eigenfunctions are not uniformly bounded in the $L_\infty$ sense.
Although the latter can be patched if we restrict $x$ to a bounded set,
the above closed-form of eigen-pairs will no longer hold,
and the analysis will become rather challenging.  

To resolve this issue,
we resort to the period-ization technique proposed by \cite{WilSmoSch01}.
Consider $\kappa(x) = \exp(-x^2/(2\sigma^2))$ when $x \in [-v/2, v/2]$,
and then extend $\kappa$ to $\RR$ as a periodic function with period $v$.
Again let $\mu$ be the uniform distribution on $[-v/2, v/2]$.
As can be seen from the discriminant function 
$f = \frac{1}{l} \sum_{i=1}^l \gamma_i k(x^i, \cdot)$,
as along as our training and test data both lie in $[-v/4, v/4]$,
the modification of $\kappa$ outside $[-v/2,v/2]$ does not effectively make any difference.
Although the term $\partial^{0,1} k_0(x^{a}_1, w^1_1)$ in 
\eqref{eq:nystrom_coordinate_grad} may possibly evaluate $\kappa$ outside $[-v/2,v/2]$,
it is only used for testing the gradient norm bound of $\kappa$.

With this periodized Gaussian kernel, it is easy to see that 
$Q_\epsilon = \sqrt{2}$.
If we standardize by $\sigma=1$ and set $v=5\pi$ as an example,
it is not hard to see that \eqref{eq:eigenvalue_cond_period_2} holds with
$c_4 = 1.25$ and $c_6 = 50$.
The expressions of $N_\epsilon$ and $M_\epsilon$ then follow from \autoref{thm:Meps_period} directly.

\subsection{Case 3: Checking Assumptions \ref{assum:tail_bnd} and \ref{assum:uniform_bnd} on non-product kernels}
\label{sec:assump_check_inverse}

The above analysis has been restricted to product kernels.
But in practice,
there are many useful kernels that are not decomposable.
A prominent example is the inverse kernel: $k(x,y) = (2-x^\top y)^{-1}$.
In general, it is extremely challenging to analyze eigenfunctions, 
which are commonly \emph{not} bounded \citep{Zhou02,LafLeb05}, \ie, 
$\sup_{i \to \infty} \sup_x \abr{e_i(x)} = \infty$.
The opposite was (incorrectly) claimed in Theorem 4 of \citet{WilSmoSch01} by citing an incorrect result in \citet[][p. 145]{Konig86}, 
which was later corrected by \citet{Zhou02} and Steve Smale.
Indeed, uniform boundedness is not known even for Gaussian kernels with uniform distribution on $[0,1]^d$ \cite{LinGuoZho17}, 
and \citet[][Theorem 5]{MinNiyYao06} showed the unboundedness for Gaussian kernels with uniform distribution on the unit sphere when $d \ge 3$.

Here we only present the limited results that we have obtained on the eigenvalues of the integral operator of inverse kernels with a uniform distribution on the unit ball.
The analysis of eigenfunctions is left for future work.
Specifically, in order to drive the eigenvalue $\lambda_i$ below $\epsilon$,
$i$ must be at least $d^{\ceil{\log_2 \frac{1}{\epsilon}}+1}$.
This is a quasi-quadratic bound if we view $d$ and $1/\epsilon$ as two large variables.

It is quite straightforward to give an explicit characterization of the functions in $\Hcal$.
The Taylor expansion of $z^{-1}$ at $z=2$ is 
$\frac{1}{2} \sum_{i=0}^\infty (-\frac{1}{2})^i x^i$.
Using the standard multi-index notation with $\val = (\alpha_1, \ldots, \alpha_d) \in (\NN \cup \{0\})^d$, $\abr{\val} = \sum_{i=1}^d \alpha_i$, 
and
$\xvec^\val = x_1^{\alpha_1} \ldots x_d^{\alpha_d}$,
we derive
\begin{align}
	k(\xvec, \yvec) 
	&= \frac{1}{2 - \xvec^\top \yvec}
	\\&= \frac{1}{2} \sum_{k=0}^\infty \rbr{-\frac{1}{2}}^k (-\xvec^\top \yvec)^k
	\\&= \sum_{k=0}^\infty 2^{-k-1} \sum_{\val: \abr{\val} = k} C^k_\val \xvec^\val \yvec^\val
	\\&= \sum_{\val} 2^{-\abr{\val}-1}  C^{\abr{\val}}_\val \xvec^\val \yvec^\val,
\end{align}
where $C^k_\val = \frac{k!}{\prod_{i=1}^d \alpha_i!}$.
So we can read off the feature mapping for $\xvec$ as
\begin{align}
\phi(\xvec) = \{ w_\val \xvec^\val: \val \},
\where 
w_\val = 2^{-\frac{1}{2} (\abr{\val} + 1)} C^{\abr{\val}}_\val,
\end{align}
and the functions in $\Hcal$ are
\begin{align}
\label{eq:RKHS_expression}
\Hcal = \cbr{f = \sum_{\val} \theta_\val w_\val \xvec^\val : \nbr{\vth}_{\ell_2} < \infty}.
\end{align}

Note this is just an intuitive ``derivation'' while a rigorous proof for \eqref{eq:RKHS_expression} can be constructed in analogy to that of Theorem 1 in \citet{Minh10}.

\subsection{Background of eigenvalues of a kernel}

We now use \eqref{eq:RKHS_expression} to find the eigenvalues of inverse kernel.

Now specializing to our inverse kernel case,
let us endow a uniform distribution over the unit ball $B$: $p(x) = V_d^{-1}$ where 
$V_d = \pi^{d/2} \Gamma(\frac{d}{2}+1)^{-1}$ is the volume of $B$,
with $\Gamma$ being the Gamma function.
Then $\lambda$ is an eigenvalue of the kernel if there exists $f = \sum_{\val} \theta_\val w_\val \xvec^\val$ such that
$\int_{\yvec \in B} k(\xvec,\yvec) p(\yvec) f(\yvec) \rmd \yvec = \lambda f(\xvec)$.
This translates to
\begin{align}
	V_d^{-1} \int_{\yvec \in B} \sum_{\val} w_\val^2 \xvec^\val \yvec^\val \sum_{\vbeta} \theta_\vbeta w_\vbeta \yvec^\vbeta \rmd \yvec 
	= \lambda \sum_{\val} \theta_\val w_\val \xvec^\val, \qquad \forall \ \xvec \in B.
\end{align}
Since $B$ is an open set, that means
\begin{align}
w_\val \sum_\vbeta w_\vbeta q_{\val+\vbeta} \theta_\vbeta 
= \lambda \theta_\val, \qquad \forall \ \val,
\end{align}
where 
\begin{align}
q_{\val} = 
V_d^{-1} \int_{\yvec \in B} \yvec^{\val} \rmd \yvec = 
\begin{cases}
\frac{2 \prod_{i=1}^d \Gamma\rbr{\smallfrac{1}{2} \alpha_i + \smallfrac{1}{2}}}{V_d \cdot (\abr{\val} + d)\cdot \Gamma\rbr{\smallfrac{1}{2} \abr{\val} + \smallfrac{d}{2}}} & \text{if all } \alpha_i \text{ are even} \\
0 & \text{otherwise}
\end{cases}.
\end{align}
In other words, $\lambda$ is the eigenvalue of the infinite dimensional matrix $Q = [w_\val w_\vbeta q_{\val + \vbeta}]_{\val, \vbeta}$,

\subsection{Bounding the eigenvalues}
To bound the eigenvalues of $Q$, we resort to the majorization results in matrix analysis.
Since $k$ is a PSD kernel, all its eigenvalues are nonnegative,
and suppose they are sorted decreasingly as $\lambda_1 \ge \lambda_2 \ge \ldots$.
Let the row corresponding to $\val$ have $\ell_2$ norm $r_\val$, 
and let them be sorted as $r_{[1]} \ge r_{[2]} \ge \ldots$.
Then by \cite{Schneider53,ShiWan65}, we have
\begin{align}
\prod_{i=1}^n \lambda_i \ \le \ \prod_{i=1}^n r_{[i]}, \quad \forall \ n \ge 1.
\end{align}
So our strategy is to bound $r_\val$ first.
To start with, we decompose $q_{\val+\vbeta}$ into $q_\val$ and $q_\vbeta$ via Cauchy-Schwartz:
\begin{align}
q_{\val+\vbeta}^2 = V_d^{-2} \rbr{\int_{\yvec \in B} \yvec^{\val+\vbeta} \rmd \yvec}^2
\le V_d^{-2} \int_{\yvec \in B} \yvec^{2 \val} \rmd \yvec \cdot
\int_{\yvec \in B} \yvec^{2 \vbeta} \rmd \yvec
= q_{2\val} q_{2\vbeta}.
\end{align}
To simplify notation, we consider without loss of generality that $d$ is an even number, and denote the integer $b \defeq d/2$.
Now $V_d = \pi^b / b!$.
Noting that there are $\mychoose{k+d-1}{k}$ values of $\vbeta$ such that $\abr{\vbeta} = k$, 
we can proceed by (fix below by changing $\mychoose{k+d}{k}$ into $\mychoose{k+d-1}{k}$, or no need because the former upper bounds the latter)
\begin{align}
r_\val^2 
&= w_\val^2 \sum_\vbeta w_\vbeta^2 q_{\val+\vbeta}^2 
\le w_\val^2 q_{2\val} \sum_\vbeta w_\vbeta^2 q_{2\vbeta}
= w_\val^2 q_{2\val} \sum_{k=0}^\infty 2^{-k-1} \sum_{\vbeta: \abr{\vbeta} = k} C^k_\vbeta q_{2\vbeta} \\
&\le w_\val^2 q_{2\val} \sum_{k=0}^\infty 2^{-k-1} \mychoose{k+d}{d}
\max_{\abr{\vbeta} = k} C^k_\vbeta q_{2\vbeta} \\
&= w_\val^2 q_{2\val} \sum_{k=0}^\infty 2^{-k-1} \mychoose{k+d}{d}
\max_{\abr{\vbeta} = k} \frac{k!}{\prod_{i=1}^d \beta_i!} 
\cdot \frac{2 \prod_{i=1}^d \Gamma(\beta_i + \frac{1}{2})}{V_d \cdot (2k+d) \cdot \Gamma(k + \frac{d}{2})} \\
&=  w_\val^2 q_{2\val} V_d^{-1} \sum_{k=0}^\infty 2^{-k} \mychoose{k+d}{d} \frac{k!}{(2k+d) \Gamma(k+\frac{d}{2})} \cdot 
\max_{\abr{\vbeta} = k} \prod_{i=1}^d \frac{\Gamma(\beta_i + \frac{1}{2})}{\beta_i!} \\
&< w_\val^2 q_{2\val} \cdot \frac{b!}{\pi^b d! } \cdot
\sum_{k=0}^\infty 2^{-k-1} \frac{(k+d)!}{(k+b)!},
\end{align}
since $\Gamma(\beta_i + \smallfrac{1}{2}) < \Gamma(\beta_i + 1) = \beta_i!$.
%
The summation over $k$ can be bounded by
\begin{align}
\sum_{k=0}^\infty 2^{-k-1} \frac{(k+d)!}{(k+b)!} 	 
= \frac{1}{2}b! \rbr{2^d + \mychoose{d}{b}} 
\le \frac{1}{2} \rbr{b! 2^d + 2^b} \le b! 2^d,
\end{align}
where the first equality used the identity
$\sum_{k=1}^\infty 2^{-k} \mychoose{d+k}{b} = 2^d$.
Letting $l \defeq \abr{\val}$, we can continue by
\begin{align}
r_\val^2 
&<  w_\val^2 q_{2\val} \cdot \frac{b!}{\pi^b d! } b! 2^d 
= 2^{-l - 1} \frac{l!}{\prod_{i=1}^d \alpha_i!} \frac{2 \prod_{i=1}^d \Gamma\rbr{\alpha_i + \smallfrac{1}{2}}}{V_d \cdot (2l + d)\cdot \Gamma\rbr{ l + b}} \frac{(b!)^2 2^d}{\pi^b d! } \\
&\le 2^{-l + d} \pi^{-2b} \frac{l! (b!)^3}{d! (l+b-1)! (2l+d)} \qquad (\text{since } \Gamma(\alpha_i+\smallfrac{1}{2}) < \Gamma(\alpha_i+1) = \alpha_i!) \\
&\le 2^{-l + b -1} \pi^{-2b} \mychoose{l+b}{l}^{-1} \qquad (\text{since } \frac{(b!)^2}{d!} \le 2^{-b}).
\end{align}

This bound depends on $\val$, not directly on $\val$.
Letting $n_l = \mychoose{l+d-1}{l}$ and $N_L = \sum_{l=0}^L n_l = \mychoose{d+L}{L}$,
it follows that
\begin{align}
\sum_{l=0}^L l n_l
=
\sum_{l=1}^L 
\frac{l (l+d)!}{d! \cdot l!}
&=
(d+1) \sum_{l=1}^L 
\frac{(l+d)!}{(d+1)! (l-1)!} \\
=
&(d+1) \sum_{l=1}^{L} \mychoose{l+d}{d+1} = (d+1) \mychoose{L+d+1}{d+2}.
\end{align}
Now we can bound $\lambda_{N_L}$ by
\begin{align}
\lambda_{N_L}^{N_L} &\le \prod_{i=1}^{N_L} \lambda_i 
\le 
\prod_{l=0}^L 
\rbr{2^{-l + b -1} \pi^{-2b} \mychoose{l+b}{l}^{-1}}^{n_l} \\
\implies \quad \log \lambda_{N_L} &\le N_L^{-1} \sum_{l=0}^L n_l 
\rbr{-(l-b+1) \log 2 - 2b \log \pi - \log \mychoose{l+b}{l}} \\
&\le -N_L^{-1} \cdot \log 2 \cdot \sum_{l=0}^L l n_l
\intertext{since $\log 2 < 2 \log \pi$ as the coefficients of  $b$}
&= -\mychoose{d+L+1}{d+1}^{-1} \cdot \log 2 \cdot (d+1) \mychoose{d+L+1}{d+2} \\
&= -\frac{d+1}{d+2} L \log 2 \\
&\approx - L \log 2 \\
\implies \quad \lambda_{N_L} &\le 2^{-L}.
\end{align}

This means that the eigenvalue $\lambda_i \le \epsilon$ provided that 
$i \ge N_L$ where $L = \ceil{\log_2 \frac{1}{\epsilon}}$.
Since $N_L \le d^{L+1}$,
that means it suffices to choose $i$ such that
\begin{align}
i \ge d^{\ceil{\log_2 \frac{1}{\epsilon}}+1}.
\end{align}

This is a quasi-polynomial bound.
It seems tight because even in Gaussian RBF kernel,
the eigenvalues follow the order of $\lambda_\val = O(c^{-\abr{\val}})$ for some $c > 1$
\citep[p.A742]{FasMcC12}.

\section{Algorithm for training a Lipschitz binary SVMs}
\label{sec:algo_detail}

The pseudo-code of training binary SVMs by enforcing Lipschitz constant is given in Algorithm \ref{algo:constrlip}.

\begin{algorithm}[ht]
	\LinesNotNumbered
  \nextnr
  Initialise the constraint set $S$ by some random samples from $X$.
  
  \nextnr
  \For{$i = 1, 2, \ldots$}{
    \nextnr
    Train SVM using one of the following constraints:
    
    \begin{enumerate}[label=\textcircled{\raisebox{-0.9pt}{\small\arabic*}},rightmargin=2cm]
        \item {\bf Brute-force:} $\nbr{\nabla f(w)}_2^2 \le L^2,\ \forall \ w \in S$
        \item {\bf Nystr\"om holistic:} $\lambda_{\max} (\Gtil^\top \Gtil) \le L^2$ using $S=\{w^1, \ldots, w^n\}$ in \eqref{eq:nystrom_approx}
        \item {\bf Nystr\"om coordinate wise:} $\lambda_{\max}(\Ptil_G) \le L^2$ using $S=\{w^1, \ldots, w^n\}$ in \eqref{eq:nystrom_coordinate}
    \end{enumerate}
    
    \nextnr
   Let the trained SVM be $f^{(i)}$.
    
    \nextnr
    Find a new $w$ to add to $S$ by one of the following methods:
    \begin{enumerate}[label=\textcircled{\small\alph*},rightmargin=2cm]
        \item  {\bf Random:} randomly sample $w$ from $X$.
        \item  {\bf Greedy:} find $\argmax_{x\in X} \nbr{\nabla f^{(i)}(x)}$ (local optimisation) by L-BFGS with 10 random initialisations
        and add the distinct results upon convergence to $S$.
    \end{enumerate}
    
    \nextnr
    {\bf Return} if $L^{(i)} \defeq  \max_{x\in X} \nbr{\nabla f^{(i)}(x)}$ falls below $L$.
  }
    \caption{Training binary SVMs by enforcing Lipschitz constant $L$}\label{algo:constrlip}
\end{algorithm}

Finding the exact $\argmax_{x\in X} \nbr{\nabla f^{(i)}(x)}$ is intractable, so we used  a  local  maximum  found  by  L-BFGS  with  10 random  initialisations as the Lipschitz constant of the current solution $f^{(i)}$ ($L^{(i)}$ in step 6).
The solution found by L-BFGS is also used as the new greedy point added in step 5b.

Furthermore, the kernel expansion
$f(x) = \frac{1}{l} \sum_{a=1}^l \gamma_a k(x^a, \cdot)$ can lead to high cost in optimisation 
(our experiment used $l = 54000$),
and therefore we used \emph{another} Nystr\"om approximation for the kernels.
We randomly sampled 1000 landmark points,
and based on them we computed the Nystr\"om approximation for each $k(x^a, \cdot)$, denoted as $\tilde{\phi}(x^a) \in \R^{1000}$.
Then $f(x)$ can be written as 
$\frac{1}{l} \sum_{a=1}^l \gamma_a \tilde{\phi}(x^a)^\top \tilde{\phi}(x)$.
Defining $w = \frac{1}{l} \sum_{a=1}^l \gamma_a \tilde{\phi}(x^a)$,
we can equivalently optimise over $w$,
and the RKHS norm bound on $f$ can be equivalently imposed as the $\ell_2$-norm bound on $w$.

To summarise, Nystr\"om approximation is used in two different places:
one for approximating the kernel function, 
and one for computing $\nbr{g_j}_\Hcal$ either holistically or coordinate wise.
For the former, we randomly sampled 1000 landmark points;
for the latter, we used greedy selection as option b in step 5 of Algorithm~\ref{algo:constrlip}.

\subsection{Detailed algorithm for multiclass classification}
\label{sec:algo_multiclass}

It is easy to extend Algorithm \ref{algo:constrlip} to multiclass. For example, with MNIST dataset, we solve the following optimisation problem to defend $\ell_2$ attacks:
\begin{gather}
	\mathclap{\begin{aligned}
		\underset{\vga^1, \ldots, \vga^{10}}{\mathrm{minimise}}&\quad \sum_{i=1}^n \ell(F(x), \yvec), \quad\text{where}\quad F \defeq \sbr{\sum_{i=1}^n \vga_i^1 k(x_i, \cdot);\ldots;\sum_{i=1}^n \vga_i^{10} k(x_i, \cdot)  }
		\\\mathrm{subject\ to}&\quad \sup_{\nbr{\phi}_{\Hcal}\le 1} \lambda_{\max} \rbr{\sum_{c=1}^{10} G_c^\top \phi \phi^\top G_c} \approx 
		\sup_{\nbr{v}_2\le 1} \lambda_{\max} \rbr{\sum_{c=1}^{10} \Gtil_c^\top v v^\top \Gtil_c} \le L^2,
	\end{aligned}}
\end{gather}
where $\ell(F(x), \yvec)$ is the Crammer \& Singer loss, 
and the constraint is derived from \eqref{eq:lip_multiclass} by using its Nystr\"om approximation $\Gtil_c = \sbr{\gtil_1^c, \ldots, \gtil_d^c}$,
which depends on $\{\vga^1,\ldots, \vga^{10}\}$ linearly. 
Note that the constraint itself is a supremum problem:
\begin{align}
\sup_{\nbr{v}_2\le 1} \lambda_{\max} \rbr{\sum_{c=1}^{10} \Gtil_c^\top v v^\top \Gtil_c}
 = 
\sup_{\nbr{v}_2\le 1, \nbr{u}_2\le 1} u^\top \rbr{\sum_{c=1}^{10} \Gtil_c^\top v v^\top \Gtil_c} u.
\end{align}

Since there is only one constraint, interior point algorithm is efficient.
It requires the gradient of the constraint, which can be computed by Danskin's theorem.
In particular, we alternates between updating $v$ and $u$,
until they converge to the optimal $v_*$ and $u_*$. 
Finally, the derivative of the constraint with respect to  $\{\vga^c\}$ can be calculated from  
$\sum_{c=1}^{10} (u_*^\top \Gtil_c^\top v_*)^2$,
as a function of $\{\vga^c\}$.

To defend $\infty$-norm attacks, we need to enforce the  $\infty$-norm of the Jacobian matrix:
\begin{align*}
\sup_{x \in \Xcal} \nbr{ \sbr{g^1(x), \ldots, g^{10}(x)}^\top }_\infty &= \sup_{x \in \Xcal} \max_{1 \le c \le 10} \nbr{g^c(x)}_1 \\
&=  \max_{1 \le c \le 10} \sup_{x \in \Xcal} \nbr{g^c(x)}_1\\
&\le  \max_{1 \le c \le 10} \sup_{\nbr{\phi}_2\le 1, \nbr{u}_\infty\le 1} u^\top \Gtil_c^\top \phi,
\end{align*}
where the last inequality is due to 
\begin{align*}
\sup_{x \in \Xcal} \nbr{g(x)}_1
= \sup_{x \in \Xcal} \sup_{\nbr{u}_\infty\le 1} u^\top g(x) 
\le \sup_{\nbr{v}_2\le 1, \nbr{u}_\infty\le 1} u^\top \Gtil^\top v.
\end{align*}
Therefore, the overall optimisation problem for defense against $\infty$-norm attacks is
\begin{gather}
	\begin{aligned}
		\underset{\vga^1, \ldots, \vga^{10}}{\mathrm{minimise}}&\quad 
			\sum_{i=1}^n \ell(F(x), \yvec), 
		\\\mathrm{subject\ to}&\quad\forall{c\in[10]} 
		\sup_{\nbr{v}_2\le 1, \nbr{u}_\infty\le 1} u^\top \Gtil_c^\top v \le L
	\end{aligned}\label{eq:app_linf_constr}
\end{gather}
For each $c$, we alternatively update $v$ and $u$ in \eqref{eq:app_linf_constr}, 
converging to the optimal $v_*$ and $u_*$. 
Finally, the derivative of $\sup_{\nbr{v}_2\le 1, \nbr{u}_\infty\le 1} u^\top \Gtil_c^\top v$ with respect to  $\vga^c$ can be calculated from  $u_*^\top \Gtil_c^\top v_*$,
as a function of $\vga^c$. 
 
\section{More experiments}
\label{sec:robust_kk_experiments}

\subsection{Efficiency of enforcing Lipschitz constant by different methods}
\label{sec:convergence}

\begin{figure*}[h!]
	\centering
		\includegraphics[clip=true, width=0.5\textwidth, viewport = 3cm 8cm 17cm 19cm]{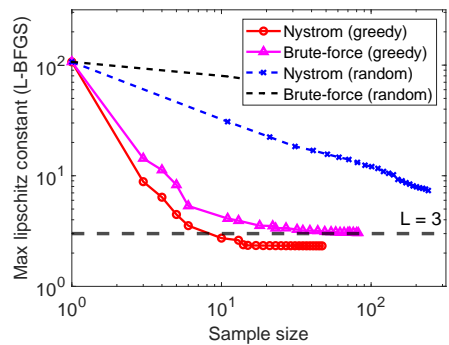}
	\caption{Comparison of efficiency in enforcing Lipschitz constant by various methods}
	\label{fig:comp_convergence}
\end{figure*}

The six different ways to train SVMs with Lipschitz regularisation are summarized in Algorithm~\ref{algo:constrlip}.
\autoref{fig:comp_convergence} plots how fast the regularisation on gradient norm becomes effective when more and more points $w$ are added to the constraint set.
We call them ``samples'' although it is not so random in the greedy method, modulo the random initialization of BFGS within the greedy method.
The horizontal axis is the loop index $i$ in Algorithm~\ref{algo:constrlip},
and the vertical axis is $L^{(i)}$ therein,
which is the estimation of the Lipschitz constant of the current solution $f^{(i)}$.
We used 400 random examples (200 images of digit 1 and 200 images of digit 0) in the MNIST dataset and set $L=3$ and RKHS norm $\nbr{f}_\Hcal \le \infty$ for all algorithms.
Inverse kernel is used, hence no results are shown for coordinate-wise Nystr\"om.

Clearly the Nystr\"om algorithm is more efficient than the Brute-force algorithm, 
and the greedy method significantly reduces the number of samples for both algorithms. 
In fact, Nystr\"om with greedy selection eventually fell below the prespecified $L$, 
because of the gap in \eqref{eq:norm_grad_by_rkhs}.
\subsection{More results on Cross-Entropy attacks}
\label{sec:cw_result}

\begin{figure}[H]
	\centering
	\subcaptionbox{MNIST}[0.33\linewidth-0.5em]{
		\label{fig:mnist_l2_crossent}
		\includegraphics[clip=true, width=\linewidth, viewport = 3.5cm 9.5cm 17.7cm 17.5cm]{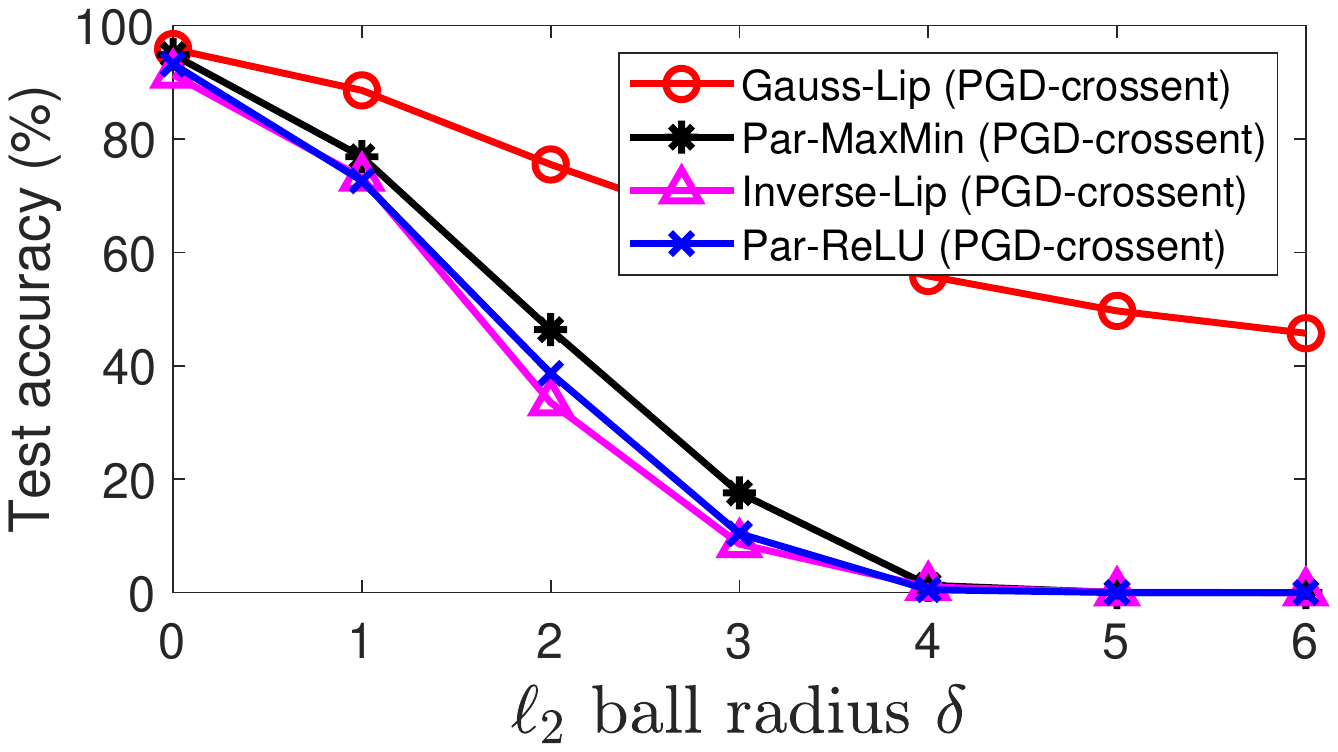}
		}\hfill
	\subcaptionbox{Fashion-MNIST}[0.33\linewidth-0.5em]{
		\label{fig:fashion_l2_crossent}
		\includegraphics[clip=true, width=\linewidth, viewport = 3.5cm 9.5cm 17.7cm 17.5cm]{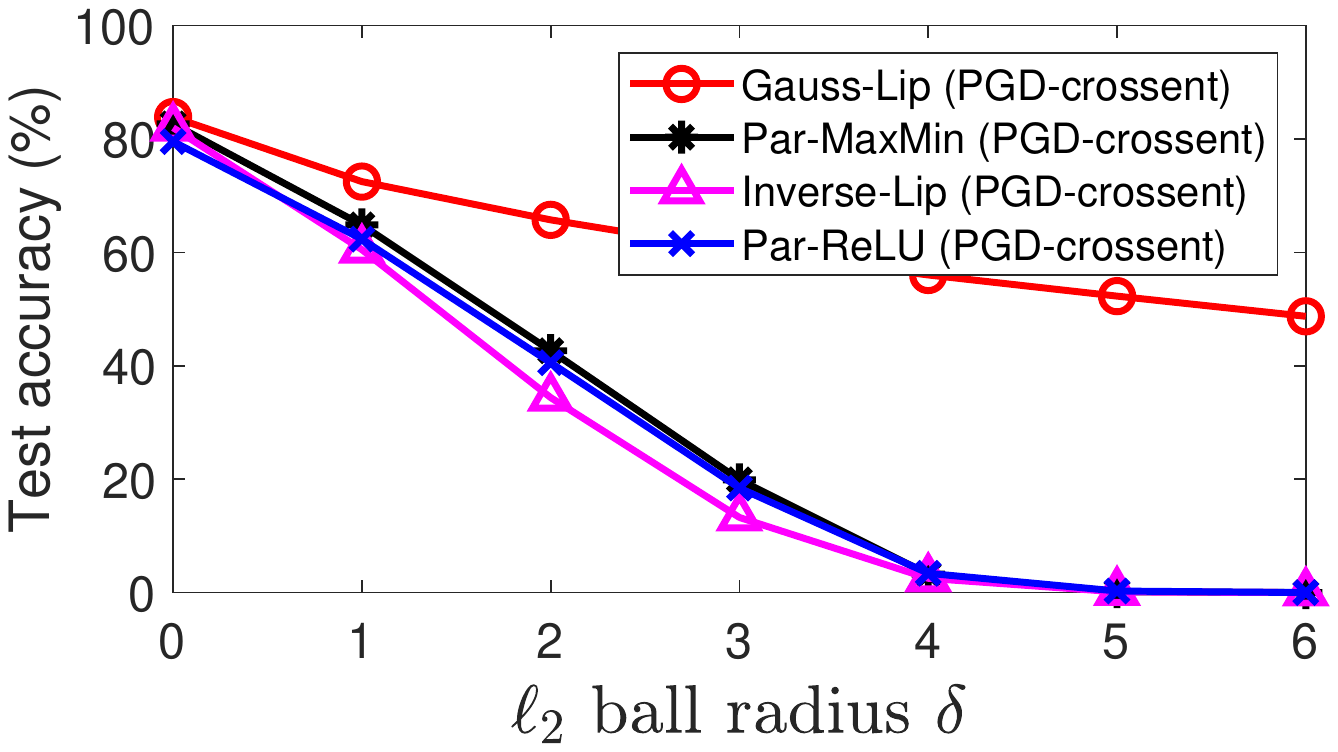}
		}\hfill
	\subcaptionbox{CIFAR10}[0.33\linewidth-0.5em]{
		\label{fig:cifar_l2_crossent}
		\includegraphics[clip=true, width=\linewidth, viewport = 3.5cm 9.5cm 17.7cm 17.5cm]{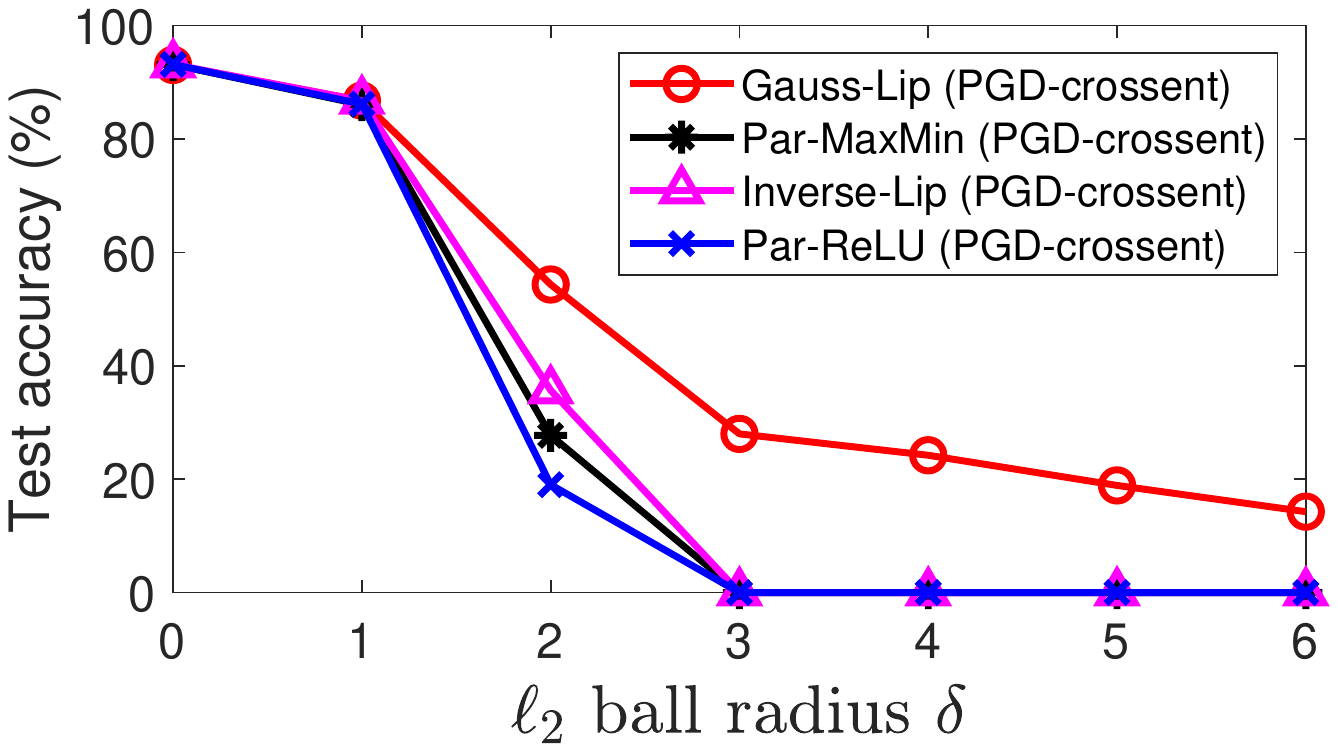}
		}
	\caption{Test accuracy under PGD attacks on cross-entropy approximation with $\ell_2$ norm bound}
	\label{fig:robustness_l2_crossent}
	\vspace{\abovecaptionskip}
	\subcaptionbox{MNIST}[0.33\linewidth-0.5em]{
		\label{fig:mnist_linf_crossent}
		\includegraphics[clip=true, width=\linewidth, viewport = 3.5cm 9.5cm 17.7cm 17.5cm]{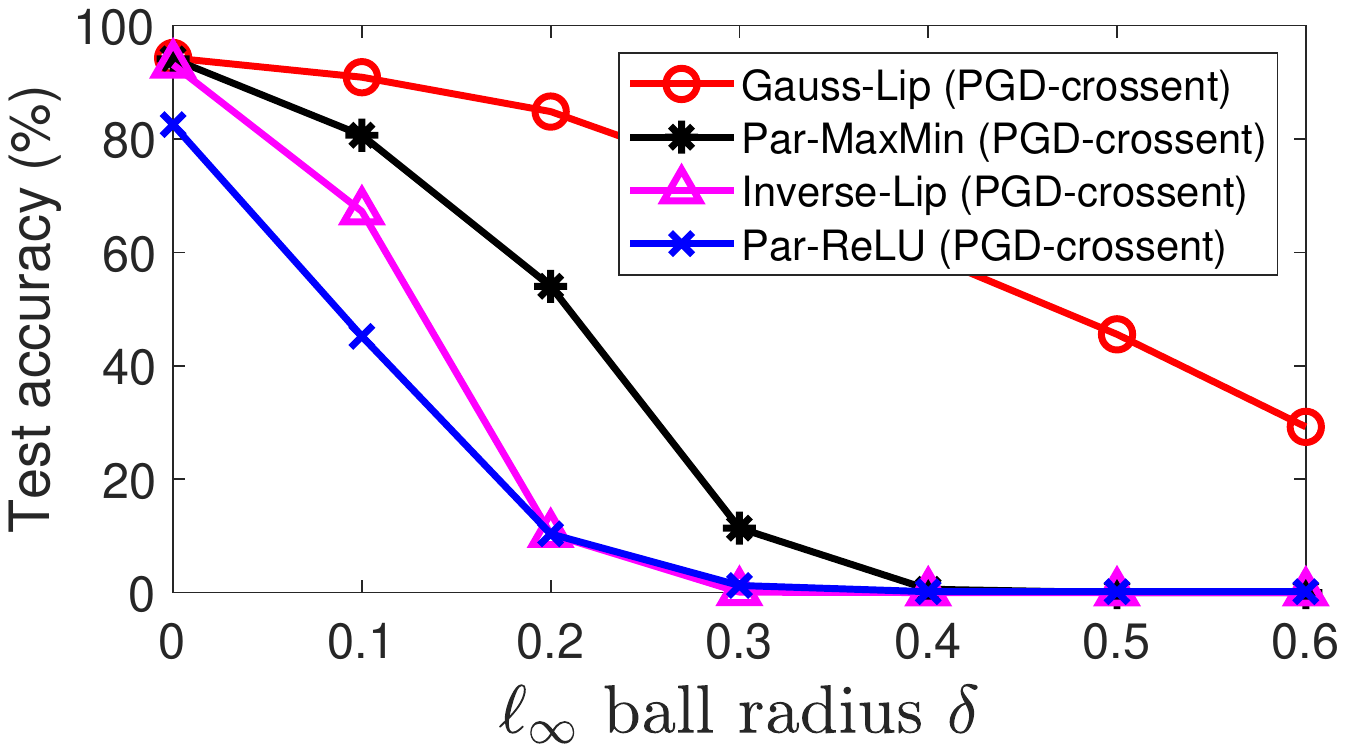}
		}\hfill
	\subcaptionbox{Fashion-MNIST}[0.33\linewidth-0.5em]{
		\label{fig:fashion_linf_crossent}
		\includegraphics[clip=true, width=\linewidth, viewport = 3.5cm 9.5cm 17.7cm 17.5cm]{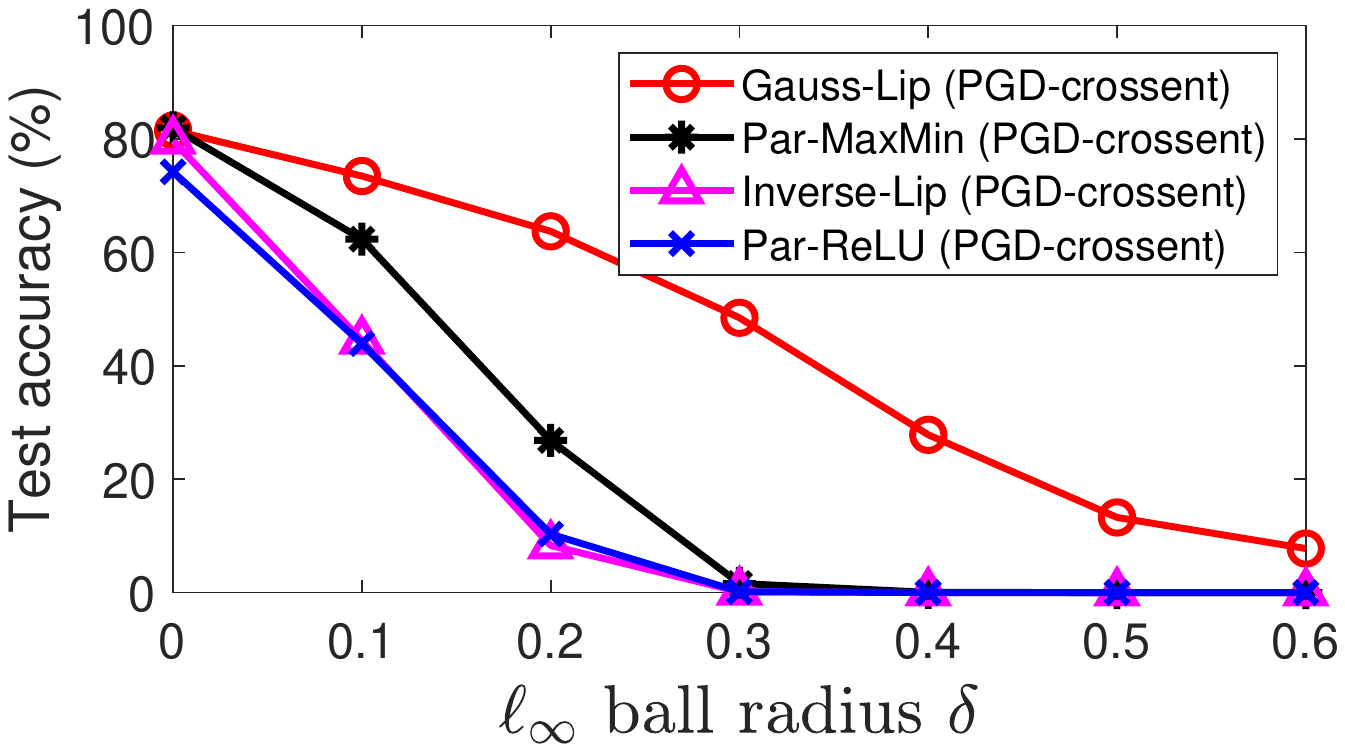}
		}\hfill
	\subcaptionbox{CIFAR10}[0.33\linewidth-0.5em]{
		\label{fig:cifar_linf_crossent}
		\includegraphics[clip=true, width=\linewidth, viewport = 3.5cm 9.5cm 17.7cm 17.5cm]{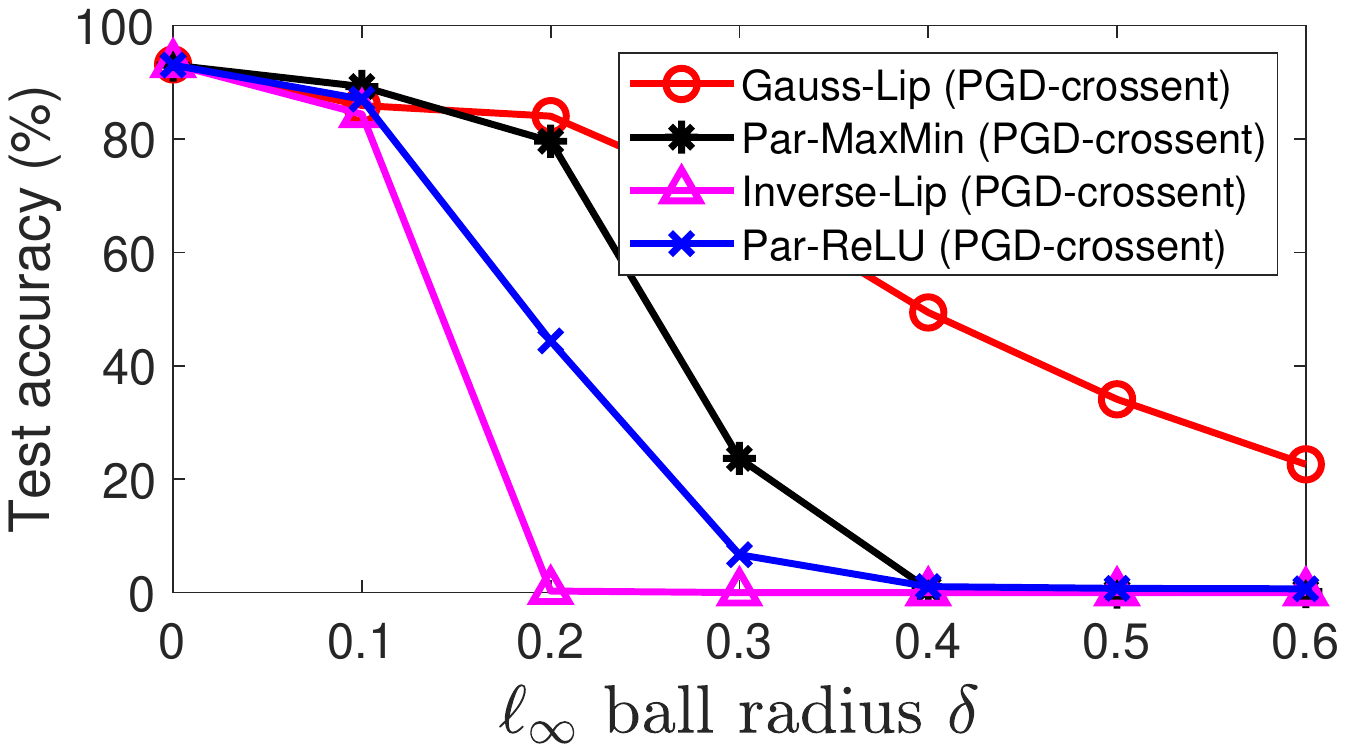}
		}
	\caption{Test accuracy under PGD attacks on cross-entropy approximation with $\infty$-norm bound}
	\label{fig:robustness_linf_crossent}
\end{figure}



\subsection{Visualization of attacks}
\label{sec:visualization}

In order to verify that the robustness of \glip\ is not due to obfuscated gradient,
we randomly sampled 10 images from MNIST,
and ran \textbf{targeted} PGD for 100 steps with cross-entropy objective and the $\ell_2$ norm upper bounded by 8. For example, in \autoref{fig:crossent_PGD100_Delta6_new}, the row corresponding to class 4 tries to promote the likelihood of the target class 4. Naturally the diagonal is not meaningful, hence left empty. 
At the end of attack, PDG turned 89 out of 90 images into the target class by following the gradient of the defense model.

Please note that despite the commonality in using the cross-entropy objective,
the setting of targeted attack in \autoref{fig:crossent_PGD100_Delta6_new} is not comparable to that in \autoref{fig:robustness_l2_crossent},
where to enable a batch test mode, an \emph{untargeted} attacker was employed by increasing the cross-entropy loss of the correct class, i.e., decreasing the likelihood of the correct class.
This is a common  practice.

We further ran PGD for 100 steps on C\&W approximation (an untargeted attack used in \autoref{fig:robustness_l2_cw}), and the resulting images after every 10 iterations are shown in \autoref{fig:CW_PGD100_Delta8_new}. Here all 10 images were eventually turned into a different but untargeted class, and the final images are very realistic.

\begin{figure}[h]
\centering
    \subcaptionbox{\label{fig:P9_sample2_100step}}[0.5\linewidth]{
    \includegraphics[width=\linewidth]{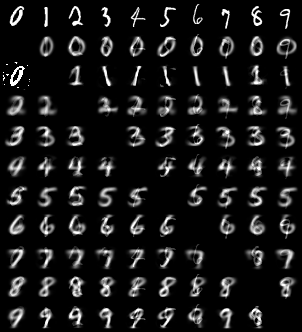}}
	\par
    \vspace{\baselineskip}
	\subcaptionbox{\label{fig:pred_sample2_100step}}[0.5\linewidth]{
	\includegraphics[width=\linewidth]{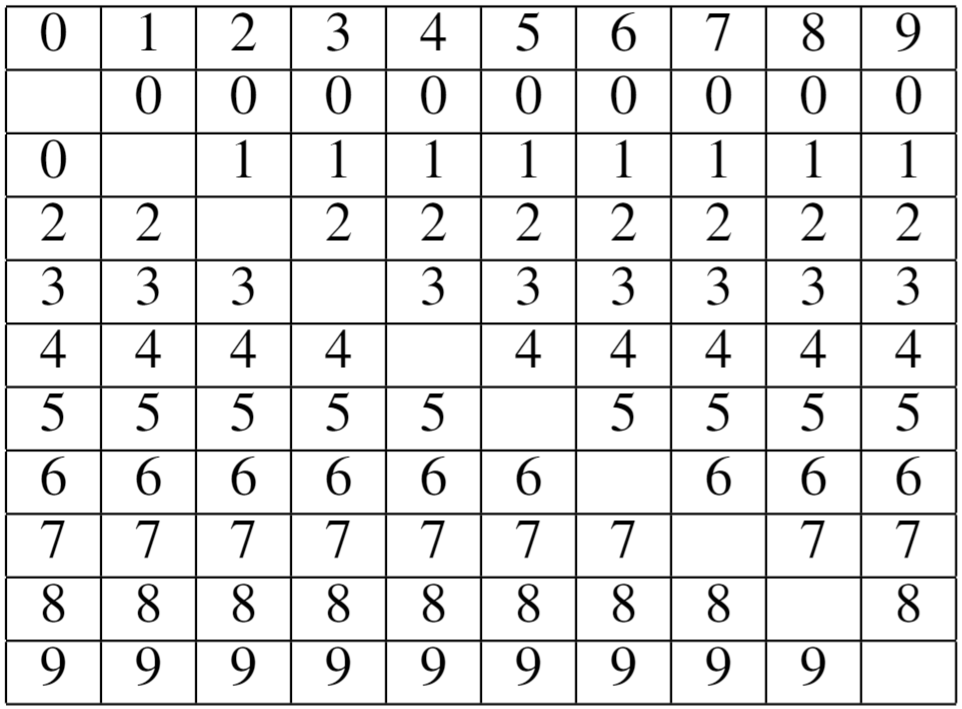}}
	\caption{\subref{fig:P9_sample2_100step} perturbed images at the end of 100-step PGD attack using the (\textbf{targeted}) cross-entropy approximation.  
	The top row shows 10 random images, one sampled from each class.
	The 10 rows below correspond to the target class.
	\subref{fig:pred_sample2_100step} classification on the perturbed image given by the trained \glip. The left images are quite consistent with human's perception.}
	\label{fig:crossent_PGD100_Delta6_new}
\end{figure}

\begin{figure}[h]
\centering
\includegraphics[width=0.5\linewidth]{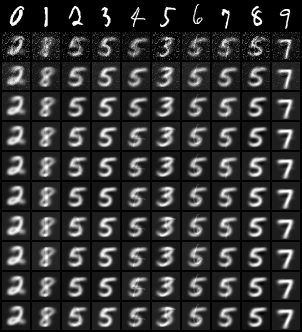}
\caption{Perturbed images at the end of 100-step PGD attack using the (\textbf{untargeted}) C\&W approximation. 
The top row shows 10 random images, one sampled from each class.
	The 10 rows below show the images after 10, 20, ..., 100 steps of PGD.}
	\label{fig:CW_PGD100_Delta8_new}
\end{figure}

\end{document}
